\theoremstyle{plain}
\newtheorem{theorem}{Theorem}[section]
\newtheorem{proposition}[theorem]{Proposition}
\newtheorem{lemma}[theorem]{Lemma}
\newtheorem{corollary}[theorem]{Corollary}
\theoremstyle{definition}
\newtheorem{definition}[theorem]{Definition}
\newtheorem{assumption}[theorem]{Assumption}
\theoremstyle{remark}
\newtheorem{remark}[theorem]{Remark}
\icmltitlerunning{TRACE: Trajectory Recovery for Continuous Mechanism Evolution in Causal Representation Learning}
\begin{document}

\twocolumn[
\icmltitle{TRACE: Trajectory Recovery for Continuous Mechanism Evolution in Causal Representation Learning}

\begin{icmlauthorlist}
\icmlauthor{Shicheng Fan}{UIC}
\icmlauthor{Kun Zhang}{CMU,MBZUAI}
\icmlauthor{Lu Cheng}{UIC}
\end{icmlauthorlist}

\icmlaffiliation{UIC}{University of Illinois at Chicago}
\icmlaffiliation{CMU}{Carnegie Mellon University}
\icmlaffiliation{MBZUAI}{Mohamed bin Zayed University of Artificial Intelligence}

\icmlcorrespondingauthor{Shicheng Fan}{sfan25@uic.edu}
\icmlcorrespondingauthor{Lu Cheng}{lucheng@uic.edu}

\icmlkeywords{Causal Representation Learning, Mixture-of-Experts, Mechanism Transitions}

\vskip 0.3in
]

\printAffiliationsAndNotice{}

\begin{abstract}
Temporal causal representation learning methods assume that causal mechanisms switch instantaneously between discrete domains, yet real-world systems often exhibit \emph{continuous} mechanism transitions. For example, a vehicle's dynamics evolve gradually through a turning maneuver, and human gait shifts smoothly from walking to running. We formalize this setting by modeling transitional mechanisms as convex combinations of finitely many \emph{atomic mechanisms}, governed by time-varying mixing coefficients. Our theoretical contributions establish that both the latent causal variables and the continuous mixing trajectory are jointly identifiable. We further propose \textbf{TRACE}, a Mixture-of-Experts framework where each expert learns one atomic mechanism during training, enabling test-time recovery of mechanism trajectories, including intermediate mechanism states never observed during training. Experiments on synthetic and real-world data demonstrate that TRACE recovers mixing trajectories with up to 0.99 correlation, substantially outperforming discrete-switching baselines.\footnote{Code: \url{https://github.com/shichengf/trace}.}
\end{abstract}

\section{Introduction}

Causal representation learning (CRL)~\cite{scholkopf2021toward, zhang2024causal} seeks to uncover latent causal variables and their relationships from high-dimensional observations. In the context of temporal data, existing CRL approaches~\citep[e.g.,][]{yaoTemporallyDisentangledRepresentation2022, song2023temporally, lippeCITRISCausalIdentifiabilitya, chenlearning} assume that the states of underlying causal mechanisms are discrete, echoing classical assumptions in causal discovery from nonstationary data~\citep{huang2020causal}. However, in real-world systems, causal mechanisms typically evolve continuously over time. Take vehicle dynamics as an example. A turning maneuver involves gradual shifts in the causal relationships between steering, velocity, and lateral acceleration as the vehicle transitions from straight-line driving through curve entry to steady-state cornering. Similarly, human gait transitions from walking to running~\citep{hreljac1993preferred, diedrich1995change} involve continuous reorganization of limb coordination and ground contact dynamics, not instantaneous switches between discrete movement patterns. Understanding these transitions has profound implications. For example, identifying critical points along the mechanism evolution could enable predictive control, early anomaly detection, or targeted intervention before system behavior fundamentally shifts. 

\begin{figure}[t]
    \centering
    \includegraphics[width=0.4\textwidth]{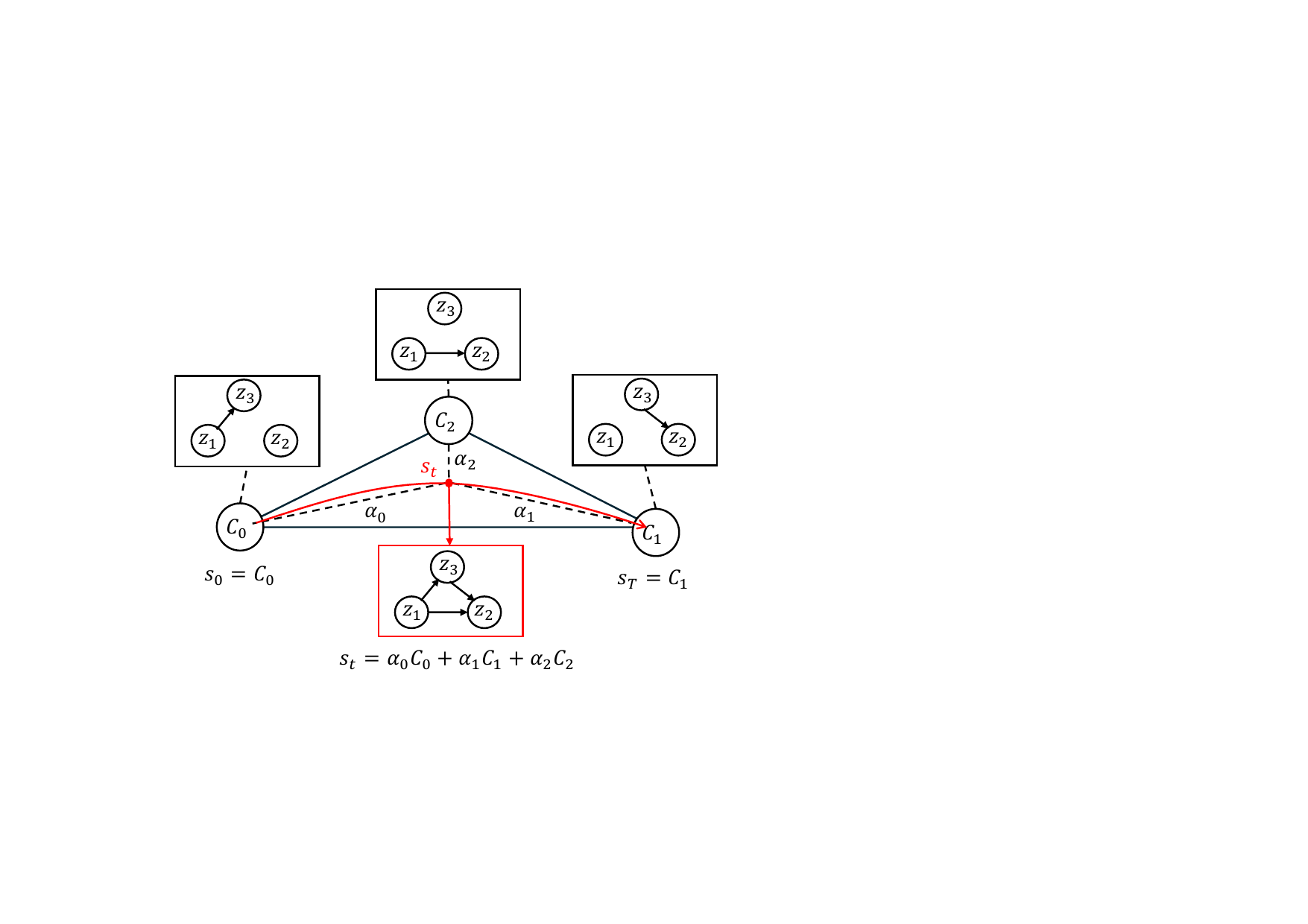}
    \caption{Continuous mechanism transitions as trajectories through a simplex.
Atomic mechanisms $\{C_0, C_1, C_2\}$ at the vertices each define a causal
graph over latent variables $z_1, z_2, z_3$ (boxed; edge colors match the
corresponding vertex). Any interior point
$s_t = \sum_k \alpha_k C_k$ with $\alpha_k \geq 0$, $\sum_k \alpha_k = 1$
induces a mixed graph (bottom box). The red and dashed curves are distinct
trajectories sharing the same endpoints, which discrete-switching models
cannot distinguish.}
    \label{fig:model}
\end{figure}

When mechanism transitions are continuous, the space of mechanisms often admits a low-dimensional structure. Just as any color can be represented as a mixture of RGB primaries, or a document as a mixture over topics~\citep{blei2003latent}, intermediate causal mechanisms can be expressed as convex combinations of a finite set of \emph{atomic mechanisms}, which are canonical causal graphs that serve as basis elements spanning the mechanism space. This key insight, that a potentially infinite-dimensional mechanism space can be characterized by finitely many atomic mechanisms, motivates a \textbf{geometric} view: each mechanism state $s_t$ is a point in a continuous simplex $\mathcal{S}$, atomic mechanisms $\{C_0, \ldots, C_{K-1}\}$ are anchor points (vertices of the simplex) learned from pure-domain data, and the causal graph at any interior point is a convex combination of atomic mechanisms graphs with mixing coefficients $\boldsymbol{\alpha}(t)$. We illustrate this continuous formulation in Figure~\ref{fig:model}.

The core research question is then: \textbf{how can we learn representations that capture the \emph{continuous} evolution of causal mechanisms?} This requires addressing two key challenges. First, we must learn a latent space where mechanism-specific structure is preserved, a challenge connected to identifiability in nonlinear independent component analysis (ICA)~\citep{hyvarinen2016unsupervised, hyvarinen2017nonlinear, khemakhem2020variational}, enabling the encoder to produce representations that reflect which atomic mechanisms are active. Second, given observations from a system undergoing mechanism transition, we must recover the continuous mixing trajectory $\boldsymbol{\alpha}(t)$ through mechanism space, even without ground-truth labels for intermediate states.
\\
\textbf{Our Approach.} We introduce \textbf{TRACE}, a two-stage framework for recovering continuous mechanism trajectories. Our key insight is that if mechanisms are convex combinations of finitely many atomic mechanisms, then a Mixture-of-Experts architecture~\citep{jacobs1991adaptive, shazeer2017outrageously} provides a natural implementation: each expert learns the transition dynamics of one atomic mechanism, and their combination captures intermediate mechanisms. In Stage~1, we train a shared encoder alongside domain-specific experts on pure-domain data. In Stage~2, we recover the mixing coefficients $\boldsymbol{\alpha}(t)$ at test time via a least-squares procedure that projects learned representations onto the span of mechanism-specific basis vectors. This formulation enables inference of mechanism trajectories for transitions never observed during training.
\\
\textbf{Contributions.} 
\textbf{(1)} \textbf{Problem.} We formalize CRL with continuously evolving mechanisms, modeling intermediate states as convex combinations of canonical atomic mechanisms.
\textbf{(2)} \textbf{Method.} We introduce TRACE, a two-stage framework using Mixture-of-Experts~\citep{jacobs1991adaptive, shazeer2017outrageously, dai2024deepseekmoe} to learn atomic mechanisms and recover continuous mixing trajectories at test time.
\textbf{(3)} \textbf{Theory.} We establish identifiability guarantees: latent variables are recoverable up to permutation and component-wise transformation, and mixing trajectories admit finite-sample error bounds that improve with trajectory smoothness.
\textbf{(4)} \textbf{Experiments.} We validate on synthetic, semi-synthetic, and real-world data, demonstrating accurate recovery of mechanism trajectories.

\section{Related Work}

\textbf{Causal Representation Learning.} 
CRL aims to recover latent causal variables from high-dimensional observations~\citep{scholkopf2021toward, peters2017elements}. A central challenge is identifiability: without constraints, disentangled representations cannot be learned unsupervised~\citep{locatello2019challenging}. This motivates leveraging auxiliary information such as temporal structure~\citep{hyvarinen2016unsupervised}, domain labels~\citep{hyvarinen2019nonlinear}, or interventions~\citep{khemakhem2020variational}. In temporal settings, Temporally Disentangled Representation Learning (TDRL)~\citep{yaoTemporallyDisentangledRepresentation2022} achieves identifiability via time-delayed dependencies, and Nonstationary Causal Temporal Representation Learning (NCTRL)~\citep{song2023temporally} extends this to nonstationary settings with discrete mechanism switches. Recent work addresses complementary challenges: instantaneous effects~\citep{lippe2022icitris, liIdentificationTemporallyCausal2024}, non-invertible mixing~\citep{chenlearning}, and sparse transitions~\citep{song2024causal}. More recent advances explore multi-modal observations~\citep{sun2025causal}, sparse mixing for disentanglement~\citep{li2025synergy}, and latent hierarchical structures~\citep{prashant2024differentiable}. However, all these methods assume mechanisms switch discretely between a finite set. Our work addresses a distinct problem: recovering \emph{continuous} trajectories through mechanism space, where intermediate states arise as weighted combinations of atomic mechanisms.
\\
\textbf{Mixture-of-Experts.} 
MoE architectures~\citep{jacobs1991adaptive, shazeer2017outrageously} combine specialized models via learned gating. While recent advances focus on efficiency and expressivity in language models~\citep{dai2024deepseekmoe, muennighoff2025olmoeopenmixtureofexpertslanguage}, these methods lack causal structure. MoCE~\citep{GAO2025127422} introduces causal experts but addresses static deconfounding rather than temporal transitions. Our work repurposes MoE for continuous mechanism transitions: each expert learns one atomic mechanism's dynamics, and the learned representations enable recovery of mixing trajectories via least-squares projection.
\\
\textbf{Continuous-Time and Regime-Switching Methods.} 
Neural ODEs~\citep{chen2018neural, rubanova2019latent} model continuous-time latent dynamics but assume fixed dynamical functions. Regime-switching models~\citep{fox2011sticky, dong2020collapsed, xu2025deepswitchingstatespace} allow transitions between discrete regimes but do not parameterize intermediate mechanisms. Neither addresses causal identifiability of mechanism transitions. Assuming intermediate mechanisms admit a convex-combination representation over a finite set of atomic mechanisms, TRACE is, to our knowledge, the first framework to provide joint identifiability guarantees for both the latent causal variables and the continuous mixing trajectory, with finite-sample error bounds.

\section{Problem Formulation}
\label{sec:formulation}

We formalize the problem of learning continuous mechanism transitions in temporal causal systems. We extend standard temporal CRL by allowing mechanisms to vary continuously rather than discretely between domains.

\subsection{Generative Model}

We consider temporal data generated from a latent dynamical system with time-varying transition dynamics (Figure~\ref{fig:model}). Let $\mathbf{z}_t \in \mathbb{R}^d$ denote the latent causal variables at time $t$ and $\mathbf{x}_t \in \mathbb{R}^p$ denote the corresponding high-dimensional observation. The data generation process follows:
\begin{align}
\mathbf{z}_t &= f(\mathbf{z}_{t-L:t-1}; \theta_t) + \boldsymbol{\epsilon}_t, \label{eq:transition} \\
\mathbf{x}_t &= g(\mathbf{z}_t), \label{eq:mixing}
\end{align}
where $f$ is a nonlinear transition function with time-varying parameters $\theta_t$, $L \geq 1$ is the lag order so that the latent process is an $L$-th order Markov process, $g: \mathbb{R}^d \to \mathbb{R}^p$ is an invertible nonlinear mixing function, and $\boldsymbol{\epsilon}_t$ is independent noise.

\textbf{Prior Work: Discrete Mechanisms.}
Existing temporal CRL methods~\citep{yaoTemporallyDisentangledRepresentation2022, song2023temporally} assume $\theta_t \in \{\theta^{(0)}, \ldots, \theta^{(K-1)}\}$ switches discretely among $K$ fixed parameter configurations, with instantaneous transitions between them.

\textbf{Our Formulation: Continuous Mechanisms.}
We instead model the mechanism parameters as evolving continuously via convex combinations of $K$ \emph{atomic mechanisms}:
\begin{equation}
\theta_t = \sum_{k=0}^{K-1} \alpha_k(t) \cdot \theta^{(k)},
\label{eq:continuous_mechanism}
\end{equation}
where $\boldsymbol{\alpha}(t) \in \Delta^{K-1}$ lies in the probability simplex. We adopt simplex constraints for interpretability; unconstrained combinations are also identifiable but less interpretable. We quantify this trade-off empirically in Appendix~\ref{app:simplex_ablation}: on out-of-distribution transitions, the simplex achieves Weight Corr $0.956$ versus $0.971$ for the unconstrained variant, while eliminating $18\%$ physically invalid (negative) mixing values. This formulation captures gradual transitions that discrete models cannot represent.

\subsection{Mechanism Parameterization}

We instantiate the mechanism parameters $\theta^{(k)}$ as weight matrices $W^{(k)} \in \mathbb{R}^{d \times d}$, with transition function $f(\mathbf{z}_{t-1}; W) = \sigma(W\mathbf{z}_{t-1})$, where $\sigma$ is a component-wise nonlinear activation (e.g., LeakyReLU). The mechanism-specific causal structure is encoded in $W^{(k)}$, which determines causal edge strengths between latent variables.

\begin{assumption}[Distinguishable Mechanisms]
\label{ass:independent}
The transition matrices $\{W^{(k)}\}_{k=0}^{K-1}$ are mutually distinguishable: no $W^{(k)}$ can be expressed as a convex combination of the others. This is necessary for the ``atomic'' nature of mechanisms—otherwise, a representable mechanism would be decomposable into more fundamental components, violating atomicity. This imposes the capacity constraint $K \leq d + 1$, since identifiability requires that for each latent dimension $i$, the $K-1$ row vectors $\{(W^{(k)} - W^{(0)})_{i,:}\}_{k=1}^{K-1}$ are linearly independent in $\mathbb{R}^d$. Under mild regularity conditions, this independence transfers to the basis matrix used in Theorem~\ref{thm:recovery} (Appendix~\ref{app:variability_bridge}). The bound $K \leq d+1$ is \emph{sufficient}, not necessary: when active mechanisms at test time are sparse or structured, recovery succeeds well beyond this regime, with the binding constraint tightening to $K_{\text{active}} \leq d+1$ rather than $K_{\text{total}} \leq d+1$ (Appendix~\ref{app:scaling}).
\end{assumption}

Under Eq.~\eqref{eq:continuous_mechanism}, the effective transition matrix becomes $W(t) = \sum_{k=0}^{K-1} \alpha_k(t) \cdot W^{(k)}$, yielding:
\begin{equation}
f(\mathbf{z}_{t-1}; W(t)) = \sigma(W(t)\mathbf{z}_{t-1}),
\label{eq:transition_concrete}
\end{equation}
where $\sigma$ is shared across mechanisms while the causal structure $W(t)$ evolves continuously. Each $W^{(k)}$ plays the role of a lagged-causal-effect (transition) matrix in a Dynamic Bayesian Network, so $\boldsymbol{\alpha}(t)$ parameterizes a time-varying DBN with continuously interpolated transition weights (Appendix~\ref{app:dbn}).

\subsection{Problem Statement}

During training, we observe labeled data from $K$ discrete domains $\{\mathcal{D}_k\}_{k=0}^{K-1}$, where trajectories in $\mathcal{D}_k$ are generated with fixed mechanism $W^{(k)}$ (i.e., $\alpha_k = 1$ and $\alpha_j = 0$ for $j \neq k$). At inference time, we observe unlabeled trajectories $\{\mathbf{x}_t\}_{t=1}^T$ undergoing continuous mechanism transitions, where $\boldsymbol{\alpha}(t)$ varies smoothly over time. Neither ground-truth latent states nor mechanism labels are available during inference.

Our goal is twofold: \textbf{first}, to learn latent representations $\hat{\mathbf{z}}_t$ that are identifiable up to permutation and component-wise transformation; and \textbf{second}, to recover the continuous mixing trajectory $\{\boldsymbol{\alpha}(t)\}_{t=1}^T$ from test observations, thereby enabling identification of mechanism transition dynamics and critical transition points.

\section{Identifiability Theory}
\label{sec:identifiability}

We establish two complementary identifiability results. First, latent causal variables are recoverable under conditions inherited from temporal CRL (Theorem~\ref{thm:latent_ident}). Second, as our main theoretical contribution, we prove that the \emph{continuous mixing trajectory} $\boldsymbol{\alpha}(t)$ is also recoverable, with finite-sample error bounds that improve with trajectory smoothness (Theorems~\ref{thm:recovery}--\ref{thm:smooth_recovery}). Complete proofs are in Appendix~\ref{app:proofs}, with a summary of all assumptions in Appendix~\ref{app:assumptions}.

\subsection{Identifiability of Latent Causal Processes}

\begin{theorem}[Identifiability of Latent Variables]
\label{thm:latent_ident}
Suppose $\mathbf{x}_t = g(\mathbf{z}_t)$ where $g$ is invertible, and the conditional distribution $p(z_{k,t} \mid \mathbf{z}_{t-1})$ varies across $K$ domains. Under sufficient variability conditions (Appendix~\ref{app:thm1}), any learned representation $\hat{\mathbf{z}}_t$ satisfying conditional independence constraints is related to the true latents by $\hat{z}_i = h_i(z_{\pi(i)})$, where $\pi$ is a permutation and each $h_i$ is strictly monotonic.
\end{theorem}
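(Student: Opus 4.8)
The plan is to reduce Theorem~\ref{thm:latent_ident} to the established identifiability result for nonstationary temporal causal processes (the NCTRL/TDRL-style argument), since the generative model in Eqs.~\eqref{eq:transition}--\eqref{eq:mixing} restricted to the $K$ training domains is exactly a discrete-mechanism temporal model: each domain $\mathcal{D}_k$ supplies a fixed conditional $p_k(z_{i,t}\mid\mathbf{z}_{t-1})$ induced by $f(\cdot;W^{(k)})+\boldsymbol{\epsilon}_t$. So the identifiability we need is \emph{not} about the mixing trajectory at all — it is the block-wise/componentwise identifiability of the latents from pure-domain data, after which Theorem~\ref{thm:recovery} handles the trajectory separately.

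First I would set up the matched-marginals assumption: suppose an encoder $\hat g^{-1}$ produces $\hat{\mathbf{z}}_t$ such that (i) $\hat g^{-1}\circ g =: \mathbf h$ is a diffeomorphism on $\mathbb{R}^d$, (ii) the components $\hat z_{i,t}$ are mutually conditionally independent given $\hat{\mathbf z}_{t-1}$ within each domain, and (iii) the observation marginals match, $p_{\hat{\mathbf x}} = p_{\mathbf x}$ in every domain. Then I would write the change-of-variables identity relating $\log p(\hat{\mathbf z}_t\mid\hat{\mathbf z}_{t-1})$ in domain $k$ to $\log p(\mathbf z_t\mid\mathbf z_{t-1})$ plus the log-Jacobian of $\mathbf h$, and impose the conditional-independence factorization on both sides. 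Differentiating this identity with respect to pairs $\hat z_{i,t},\hat z_{j,t}$ ($i\neq j$) kills the factorized terms and forces cross-partials of the Jacobian term to vanish; combining these constraints across the $K$ domains and invoking the sufficient-variability condition (full column rank of the stacked vector of domain-wise score differences, stated in Appendix~\ref{app:thm1}) pins down the Jacobian of $\mathbf h$ to be a generalized permutation matrix at every point. Integrating, $\mathbf h$ is a composition of a permutation $\pi$ with a componentwise map, and a short argument (the map is a diffeomorphism of $\mathbb{R}$, hence strictly monotonic on each coordinate) gives $\hat z_i = h_i(z_{\pi(i)})$ with each $h_i$ strictly monotonic.

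The steps in order: (1) state the matched-distribution hypotheses precisely and reduce to a statement about $\mathbf h = \hat g^{-1}\circ g$; (2) write the log-density transformation and the two conditional-independence factorizations; (3) take second-order cross derivatives in the latent coordinates to eliminate the additive factorized terms; (4) assemble the resulting linear system over the $K$ domains and apply sufficient variability to conclude the Jacobian of $\mathbf h$ has exactly one nonzero entry per row/column; (5) integrate to get the permutation-plus-componentwise form and upgrade "componentwise invertible smooth map" to "strictly monotonic."

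The main obstacle I expect is step (4): verifying that the sufficient-variability condition — which in our setting must be phrased in terms of how the transition function $f(\cdot;W^{(k)})$ and noise density jointly vary across domains — actually yields a \emph{full-rank} linear system in the unknown Jacobian cross-entries. Unlike the original TDRL setting where variability is an assumption on a generic auxiliary variable, here the domain index is tied to the specific structured perturbation $W^{(k)}-W^{(0)}$ of a linear-in-$\mathbf z_{t-1}$ mechanism, so I need the Appendix~\ref{app:thm1} conditions to be strong enough that the induced score-difference vectors are linearly independent; this is precisely the point where Assumption~\ref{ass:independent} (and the regularity bridge of Appendix~\ref{app:variability_bridge}) does its work, and I would flag that the argument is essentially a transcription of the known proof once these conditions are in place, so the novelty of the paper lies in Theorems~\ref{thm:recovery}--\ref{thm:smooth_recovery} rather than here.
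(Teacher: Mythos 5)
Your proposal follows essentially the same route as the paper's proof: reduce to the TDRL-style argument via the diffeomorphism $\phi=\hat g^{-1}\circ g$, apply the change-of-variables identity to the conditional log-densities, impose vanishing cross-partials from conditional independence, invoke linear independence of the stacked domain-wise derivative vectors $\{\mathbf{s}_{k,t},\dot{\mathbf{s}}_{k,t}\}$ to force a generalized-permutation Jacobian, and integrate to the permutation-plus-componentwise form. You also correctly locate the only setting-specific work in the bridge from Assumption~\ref{ass:independent} to sufficient variability (Lemma~\ref{lem:variability_bridge}), which is exactly how the paper structures it.
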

This result builds on the identifiability framework of~\citet{yaoTemporallyDisentangledRepresentation2022}, which requires abstract ``sufficient variability'' conditions on the conditional distributions. We provide a concrete, verifiable criterion: Assumption~\ref{ass:independent} (linear independence of pairwise differences $\{W^{(k)} - W^{(0)}\}_{k=1}^{K-1}$) implies the variability condition under a mild row-wise non-degeneracy requirement, with capacity limit $K \leq d + 1$ (Lemma~\ref{lem:variability_bridge}, Appendix~\ref{app:variability_bridge}).

\subsection{Recovery of Mixing Coefficients}

While Theorem~\ref{thm:latent_ident} guarantees recovery of latent variables, it does not address how to infer the mixing trajectory $\boldsymbol{\alpha}(t)$ from these representations. We now establish recovery guarantees for this previously unstudied problem.

Let $\hat{\boldsymbol{\mu}}^{(k)}$ denote the conditional expectation under domain $k$, $\delta\hat{\boldsymbol{\mu}}^{(k)} = \hat{\boldsymbol{\mu}}^{(k)} - \hat{\boldsymbol{\mu}}^{(0)}$ the shift relative to baseline, and $\hat{B} = [\delta\hat{\boldsymbol{\mu}}^{(1)}, \ldots, \delta\hat{\boldsymbol{\mu}}^{(K-1)}]$ the basis matrix with minimum singular value $\sigma_{\min} > 0$.

\begin{theorem}[Pointwise Recovery]
\label{thm:recovery}
Under Theorem~\ref{thm:latent_ident}'s conditions, the least-squares estimator $\hat{\boldsymbol{\alpha}}(t) = \hat{B}^\dagger(\hat{\mathbf{z}}_t - \hat{\boldsymbol{\mu}}^{(0)})$ satisfies
\begin{equation}
\|\hat{\boldsymbol{\alpha}}(t) - \boldsymbol{\alpha}(t)\| \leq \frac{1}{\sigma_{\min}} \left( \|\hat{\boldsymbol{\epsilon}}_t\| + \delta_{\mathrm{approx}} \right),
\end{equation}
where $\hat{\boldsymbol{\epsilon}}_t$ is observation noise and $\delta_{\mathrm{approx}} = O(\epsilon^2)$ with $\epsilon := \max_k \|W^{(k)} - W^{(0)}\|$ bounding the perturbation magnitude.
\end{theorem}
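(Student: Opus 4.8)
The plan is to reduce the recovery problem to a perturbed linear inverse problem and then invoke the elementary pseudoinverse bound $\|\hat B^\dagger\|_2 = 1/\sigma_{\min}$. The starting point is that, because $\boldsymbol{\alpha}(t)\in\Delta^{K-1}$ satisfies $\alpha_0(t) = 1 - \sum_{k=1}^{K-1}\alpha_k(t)$, the effective transition matrix can be written relative to the baseline as $W(t) = W^{(0)} + \sum_{k=1}^{K-1}\alpha_k(t)\,(W^{(k)}-W^{(0)})$. First I would expand the conditional mean $\mathbb{E}[\mathbf{z}_t\mid \mathbf{z}_{t-1}] = \sigma(W(t)\mathbf{z}_{t-1})$ to first order around the baseline argument $u_0 := W^{(0)}\mathbf{z}_{t-1}$: writing $\Delta u := \sum_{k=1}^{K-1}\alpha_k(t)(W^{(k)}-W^{(0)})\mathbf{z}_{t-1}$, we get $\sigma(u_0+\Delta u) = \sigma(u_0) + J_\sigma(u_0)\Delta u + r(\Delta u)$ with $\|r(\Delta u)\| = O(\|\Delta u\|^2)$. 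Since $\sum_k\alpha_k(t)\le 1$, we have $\|\Delta u\| \le \epsilon\,\|\mathbf{z}_{t-1}\|$, so this remainder is $O(\epsilon^2)$, with the hidden constant absorbing $\|\mathbf{z}_{t-1}\|^2$ and a curvature bound on $\sigma$.

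Next I would apply the same expansion to each atomic conditional mean, $\boldsymbol{\mu}^{(k)} = \sigma\big(u_0 + (W^{(k)}-W^{(0)})\mathbf{z}_{t-1}\big)$, which gives $\delta\boldsymbol{\mu}^{(k)} = J_\sigma(u_0)(W^{(k)}-W^{(0)})\mathbf{z}_{t-1} + O(\epsilon^2)$. Combining the two expansions and using linearity of $J_\sigma(u_0)$ to push the convex sum through, the centered conditional mean at time $t$ becomes $\mathbb{E}[\mathbf{z}_t\mid\mathbf{z}_{t-1}] - \boldsymbol{\mu}^{(0)} = \sum_{k=1}^{K-1}\alpha_k(t)\,\delta\boldsymbol{\mu}^{(k)} + \delta_{\mathrm{approx}} = B\,\boldsymbol{\alpha}_{1:K-1}(t) + \delta_{\mathrm{approx}}$, where $\delta_{\mathrm{approx}}$ aggregates the several $O(\epsilon^2)$ remainders (each weighted by coefficients summing to at most one, so the aggregate is still $O(\epsilon^2)$) and $\boldsymbol{\alpha}_{1:K-1}(t)=(\alpha_1(t),\ldots,\alpha_{K-1}(t))^\top$. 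Adding back the residual noise yields $\hat{\mathbf{z}}_t - \hat{\boldsymbol{\mu}}^{(0)} = \hat B\,\boldsymbol{\alpha}_{1:K-1}(t) + \hat{\boldsymbol{\epsilon}}_t + \delta_{\mathrm{approx}}$; I would work throughout in the learned-representation coordinates, where Theorem~\ref{thm:latent_ident} guarantees the same atomic/convex structure holds, folding any residual distortion from the component-wise monotone maps into $\delta_{\mathrm{approx}}$.

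The final step is to apply $\hat B^\dagger$. Assumption~\ref{ass:independent}, transferred to $\hat B$ via Lemma~\ref{lem:variability_bridge}, makes $\hat B$ of full column rank, so $\hat B^\dagger\hat B = I_{K-1}$ and hence $\hat{\boldsymbol{\alpha}}(t) = \boldsymbol{\alpha}_{1:K-1}(t) + \hat B^\dagger(\hat{\boldsymbol{\epsilon}}_t + \delta_{\mathrm{approx}})$. Since $\boldsymbol{\alpha}(t)$ is determined by its last $K-1$ entries (the simplex constraint fixes $\alpha_0$), recovering $\boldsymbol{\alpha}_{1:K-1}(t)$ recovers the full trajectory, and the triangle inequality together with $\|\hat B^\dagger\|_2 = 1/\sigma_{\min}$ gives $\|\hat{\boldsymbol{\alpha}}(t)-\boldsymbol{\alpha}(t)\| \le \sigma_{\min}^{-1}\big(\|\hat{\boldsymbol{\epsilon}}_t\| + \delta_{\mathrm{approx}}\big)$, as claimed.

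The main obstacle is making the $O(\epsilon^2)$ claim rigorous, since the natural activation (LeakyReLU) is only piecewise linear, so $J_\sigma$ is undefined on a measure-zero kink set and no uniform second-order Taylor bound holds there. I would resolve this either by (i) assuming $\sigma\in C^2$ with bounded second derivative, under which the remainder is genuinely $O(\epsilon^2)$ uniformly, or (ii) for piecewise-linear $\sigma$, noting the expansion is exact off the kink set, bounding the fraction of coordinates within distance $\epsilon\|\mathbf{z}_{t-1}\|$ of a kink (which is $O(\epsilon)$ for generic $\mathbf{z}_{t-1}$) and absorbing that contribution, yielding a slightly weaker but still vanishing approximation error. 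A secondary point worth stating explicitly is that the constants inside $\delta_{\mathrm{approx}}$ depend on $\|\mathbf{z}_{t-1}\|$ and on the Lipschitz/curvature constants of $\sigma$ (and of $g^{-1}$), so the bound is naturally read conditionally on $\mathbf{z}_{t-1}$, or unconditionally under a boundedness assumption on the latents.
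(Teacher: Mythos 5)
Your proposal is correct and follows essentially the same route as the paper's proof: a first-order Taylor expansion (in $W$ and then in the component-wise map $h$) to write the centered latent as $\hat B\boldsymbol{\alpha}+r(\boldsymbol{\alpha})+\hat{\boldsymbol{\epsilon}}_t$ with $\|r\|\le\delta_{\mathrm{approx}}=O(\epsilon^2)$, followed by $\hat B^\dagger\hat B=I_{K-1}$ from full column rank and the bound $\|\hat B^\dagger\|=1/\sigma_{\min}$. Your caveats about LeakyReLU's kink set and the constants depending on $\|\mathbf{z}_{t-1}\|$ match the paper's own remarks in its appendix on the approximation-error scaling.
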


The pointwise bound treats each time step independently and does not improve with trajectory length $T$. By exploiting temporal smoothness, we obtain stronger guarantees.

\begin{theorem}[Smooth Trajectory Recovery]
\label{thm:smooth_recovery}
Consider a test trajectory of length $T$. Suppose additionally that the trajectory has bounded total variation $\mathrm{TV}(\boldsymbol{\alpha}^*) \leq V$ and noise is i.i.d.\ sub-Gaussian with parameter $\sigma$. Consider the regularized estimator:
\begin{equation}
\hat{\boldsymbol{\alpha}}^{\mathrm{smooth}} = \arg\min_{\boldsymbol{\alpha}_{1:T}} \mathcal{L}_{\mathrm{data}} + \lambda \mathcal{L}_{\mathrm{smooth}},
\end{equation}
where $\mathcal{L}_{\mathrm{data}} = \sum_{t=1}^T \|\hat{\mathbf{z}}_t - \hat{\boldsymbol{\mu}}^{(0)} - \hat{B}\boldsymbol{\alpha}_t\|^2$ and $\mathcal{L}_{\mathrm{smooth}} = \sum_{t=1}^{T-1} \|\boldsymbol{\alpha}_{t+1} - \boldsymbol{\alpha}_t\|^2$. With optimally chosen $\lambda \asymp T^{1/3}$, the mean squared error satisfies
\begin{align}
&\frac{1}{T}\sum_{t=1}^T \mathbb{E}\|\hat{\boldsymbol{\alpha}}^{\mathrm{smooth}}_t - \boldsymbol{\alpha}^*_t\|^2 \notag \\
&\quad = O\!\left(\left(\frac{V}{T}\right)^{2/3} \frac{\sigma^{2/3}(K-1)^{1/3}}{\sigma_{\min}^{2/3}} + \frac{\delta_{\mathrm{approx}}^2}{\sigma_{\min}^2}\right).
\end{align}
The ratio $V/T$ represents the average per-step variation; smaller values (slower transitions or denser sampling) yield tighter bounds.
\end{theorem}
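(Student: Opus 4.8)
The plan is to treat $\hat{\boldsymbol\alpha}^{\mathrm{smooth}}$ as a smoothness-penalized least-squares estimator in the linear model $\hat{\mathbf z}_t - \hat{\boldsymbol\mu}^{(0)} = \hat B\,\boldsymbol\alpha^*_t + \hat{\boldsymbol\epsilon}_t + \mathbf a_t$, where $\mathbf a_t$ is the deterministic model-misspecification term from Theorem~\ref{thm:recovery} with $\|\mathbf a_t\| \le \delta_{\mathrm{approx}}$ and $\hat{\boldsymbol\epsilon}_t$ is i.i.d.\ sub-Gaussian, and to run a bias--variance argument. Because the penalty $\mathcal L_{\mathrm{smooth}}$ acts on $\boldsymbol\alpha$ while the data term sees only $\hat B\boldsymbol\alpha_t$, I would first bound the fitted-value error $\sum_t\|\hat B(\hat{\boldsymbol\alpha}_t - \boldsymbol\alpha^*_t)\|^2$ and then transfer to $\boldsymbol\alpha$-error via $\|\hat{\boldsymbol\alpha}_t - \boldsymbol\alpha^*_t\| \le \sigma_{\min}^{-1}\|\hat B(\hat{\boldsymbol\alpha}_t - \boldsymbol\alpha^*_t)\|$; invertibility of $\hat B$ on the relevant span is exactly what Assumption~\ref{ass:independent} guarantees (through the bridge lemma of Appendix~\ref{app:variability_bridge}), and it is the origin of the $\sigma_{\min}$ factors and of the irreducible $\delta_{\mathrm{approx}}^2/\sigma_{\min}^2$ term.

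From optimality of $\hat{\boldsymbol\alpha}^{\mathrm{smooth}}$ tested against the feasible point $\boldsymbol\alpha^*$, the standard ``basic inequality'' gives, with $\Delta_t = \hat{\boldsymbol\alpha}_t - \boldsymbol\alpha^*_t$,
\[
\sum_t\|\hat B\Delta_t\|^2 + \lambda\,\mathcal L_{\mathrm{smooth}}(\hat{\boldsymbol\alpha})
\;\le\; 2\sum_t\langle \hat B\Delta_t,\ \hat{\boldsymbol\epsilon}_t + \mathbf a_t\rangle + \lambda\,\mathcal L_{\mathrm{smooth}}(\boldsymbol\alpha^*).
\]
The approximation cross-term is absorbed by AM--GM, contributing $O(T\delta_{\mathrm{approx}}^2)$. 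For the penalty term on the right I would use the total-variation hypothesis together with the ``average per-step variation'' regime to write $\mathcal L_{\mathrm{smooth}}(\boldsymbol\alpha^*) = \sum_t\|\boldsymbol\alpha^*_{t+1}-\boldsymbol\alpha^*_t\|^2 \le V\max_t\|\boldsymbol\alpha^*_{t+1}-\boldsymbol\alpha^*_t\| \lesssim V^2/T$, so that this is an $O(\lambda V^2/T)$ bias contribution that grows with $\lambda$.

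The crux is the stochastic term $\sum_t\langle\Delta_t,\ \hat B^\top\hat{\boldsymbol\epsilon}_t\rangle$, which must be controlled for the \emph{data-dependent} $\Delta$. I would decompose $\Delta_t = \bar\Delta + (\Delta_t - \bar\Delta)$ about the time-average $\bar\Delta$. The constant component is $\langle\bar\Delta,\ \sum_t\hat B^\top\hat{\boldsymbol\epsilon}_t\rangle \le \|\bar\Delta\|\cdot O(\sigma\sqrt{(K-1)T})$ by a sub-Gaussian concentration bound on the summed noise, and since $\|\bar\Delta\|^2 \le T^{-1}\sum_t\|\Delta_t\|^2$ it reabsorbs into the left-hand side, leaving the unavoidable $O(\sigma^2(K-1)/(\sigma_{\min}^2 T))$ term. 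The zero-mean fluctuation $\Delta - \bar\Delta$ has discrete variation controlled by $\mathcal L_{\mathrm{smooth}}(\hat{\boldsymbol\alpha})$ and $\mathcal L_{\mathrm{smooth}}(\boldsymbol\alpha^*)$; expanding it in the eigenbasis of the path-graph Laplacian $\mathcal L_T$ (nonzero eigenvalues $\mu_1\le\cdots\le\mu_{T-1}$) and applying Cauchy--Schwarz yields a dual-norm bound $\sqrt{\mathcal L_{\mathrm{smooth}}(\Delta)}\cdot\big(\sum_{j\ge1}\mu_j^{-1}\big)^{1/2}\cdot O(\sigma\sqrt{K-1})$, where $\mathcal L_{\mathrm{smooth}}(\Delta)\le 2\mathcal L_{\mathrm{smooth}}(\hat{\boldsymbol\alpha})+2\mathcal L_{\mathrm{smooth}}(\boldsymbol\alpha^*)$; the $\mathcal L_{\mathrm{smooth}}(\hat{\boldsymbol\alpha})$ half is then absorbed into $\lambda\mathcal L_{\mathrm{smooth}}(\hat{\boldsymbol\alpha})$ by AM--GM and the other half is handled by $\mathcal L_{\mathrm{smooth}}(\boldsymbol\alpha^*)\lesssim V^2/T$. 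Equivalently, one can simply run the exact linear-smoother bias--variance decomposition of $(\sigma_{\min}^2 I \otimes I + \lambda\,\mathcal L_T\otimes I)^{-1}$ in this same eigenbasis. I expect this localization step --- bounding the stochastic term uniformly over all trajectories with a given smoothness budget, e.g.\ via a short dyadic peeling over the value of $\mathcal L_{\mathrm{smooth}}(\hat{\boldsymbol\alpha})$ --- to be the main obstacle, and the place where the simplex constraint and the geometry of $\hat B$ relative to the chain Laplacian have to be reconciled.

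Finally, collecting terms, dividing by $\sigma_{\min}^2 T$, and taking expectations leaves a normalized MSE controlled by a variance term that decreases in $\lambda$, the bias term $O(\lambda V^2/(\sigma_{\min}^2 T^2))$, and $\delta_{\mathrm{approx}}^2/\sigma_{\min}^2$. Optimizing $\lambda$ to balance the first two (the $\lambda\asymp T^{1/3}$ scaling of the theorem, once the noise and conditioning constants are tracked) collapses them to the stated $O\!\big((V/T)^{2/3}\,\sigma^{2/3}(K-1)^{1/3}\sigma_{\min}^{-2/3}\big)$ rate, plus the irreducible approximation term; the asserted monotone improvement in $V/T$ is immediate from this balance.
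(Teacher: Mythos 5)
Your route is genuinely different from the paper's. The paper (Appendix~\ref{app:thm3}) never invokes a basic inequality: it writes the estimator in closed form as a linear smoother, $\hat{\boldsymbol{\alpha}}^{\mathrm{smooth}} = M_\lambda \mathbf{Y}$ with $M_\lambda = (\mathbf{A}^\top\mathbf{A} + \lambda D^\top D)^{-1}\mathbf{A}^\top$ and $\mathbf{A} = I_T \otimes \hat{B}$, decomposes the error exactly into bias $(M_\lambda\mathbf{A}-I)\boldsymbol{\alpha}^*$, variance $M_\lambda\boldsymbol{\epsilon}$, and approximation $M_\lambda\mathbf{r}$, and bounds each by operator-norm and trace estimates ($\mathbb{E}\|M_\lambda\boldsymbol{\epsilon}\|^2 = \sigma^2\,\mathrm{tr}(M_\lambda M_\lambda^\top)$), finally citing the trend-filtering literature for the degrees-of-freedom scaling that produces $\lambda^* \asymp T^{1/3}$. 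Your M-estimation argument (basic inequality, splitting the data-dependent error into its time-average plus a fluctuation controlled in the path-Laplacian eigenbasis, dyadic peeling over $\mathcal{L}_{\mathrm{smooth}}(\hat{\boldsymbol{\alpha}})$) is the machinery one would use to prove a sharp rate over the TV class from scratch, and is in principle more self-contained than the paper's appeal to effective degrees of freedom; but you correctly identify the localization step as the main obstacle and do not carry it out, and the crude Cauchy--Schwarz version you write down (with $\sum_j \mu_j^{-1} \asymp T^2$ for the chain) would not by itself deliver the $T^{-2/3}$ rate, so the peeling is essential rather than optional.

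There is also one concrete step that does not follow from the theorem's hypotheses: you bound $\mathcal{L}_{\mathrm{smooth}}(\boldsymbol{\alpha}^*) = \sum_t\|\boldsymbol{\alpha}^*_{t+1}-\boldsymbol{\alpha}^*_t\|^2 \le V\max_t\|\boldsymbol{\alpha}^*_{t+1}-\boldsymbol{\alpha}^*_t\| \lesssim V^2/T$, which silently assumes $\max_t\|\boldsymbol{\alpha}^*_{t+1}-\boldsymbol{\alpha}^*_t\| \lesssim V/T$. Bounded total variation $\mathrm{TV}(\boldsymbol{\alpha}^*)\le V$ only gives $\max_t\|\boldsymbol{\alpha}^*_{t+1}-\boldsymbol{\alpha}^*_t\|\le V$, hence $\sum_t\|\boldsymbol{\alpha}^*_{t+1}-\boldsymbol{\alpha}^*_t\|^2 \le V^2$ --- this is exactly the bound the paper uses in its Step~3, and it is tight for a trajectory with a single jump of size $V$. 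The phrase ``$V/T$ is the average per-step variation'' in the theorem statement does not license a bound on the \emph{maximum} per-step variation. Your version therefore understates the bias contribution by a factor of $T$ unless you add a per-step Lipschitz assumption beyond A6, and the subsequent $\lambda$-balancing would change accordingly. Either restate the smoothness hypothesis you actually need, or redo the balance with $\mathcal{L}_{\mathrm{smooth}}(\boldsymbol{\alpha}^*)\le V^2$ as the paper does.
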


The $O(T^{-2/3})$ rate is minimax optimal for bounded total variation signals; smoother transitions (smaller $V$) admit better recovery.

\begin{remark}[Scale Calibration]
\label{rem:calibration}
Recovered coefficients may exhibit systematic scale distortion, correctable via boundary conditions when available. Qualitative structure (monotonicity, transition timing) is preserved without calibration; see Appendix~\ref{app:monotonicity}. We validate this empirically in \textbf{Section}~\ref{sec:synthetic}.
\end{remark}

\begin{remark}[Verifiability]
\label{rem:verify}
All quantities in the error bounds are empirically measurable: $\sigma_{\min}$ from the basis matrix, $\|\hat{\boldsymbol{\epsilon}}_t\|$ and $\sigma$ from validation residuals, $V$ from recovered trajectories, and $\delta_{\mathrm{approx}}$ from mixed-domain data when available. This verifiability distinguishes our analysis from purely asymptotic results. We demonstrate this in \textbf{Section}~\ref{sec:synthetic}.
\end{remark}

\begin{remark}[Geometric Bottleneck]
\label{rem:geometric}
The error bound's dependence on $\sigma_{\min}$ reveals a geometric bottleneck: as the number of \emph{active} mechanisms $K_{\text{active}}$ approaches $d+1$, the corresponding basis columns become increasingly collinear, degrading $\boldsymbol{\alpha}(t)$ recovery even when the model correctly learns each atomic mechanism. This bottleneck affects inference-time decomposition rather than learning quality—the transition structure $W(t)$ remains recoverable, but projecting onto near-collinear bases becomes ill-conditioned. We empirically validate this distinction in \textbf{Section}~\ref{sec:ablation}, and show in Appendix~\ref{app:scaling} that the $K \leq d+1$ condition is sufficient but not necessary, so $K_{\text{total}}$ may safely exceed $d+1$ when transitions involve few simultaneously active mechanisms.
\end{remark}

\section{Method}
\label{sec:method}

We present TRACE, a two-stage framework (Figure~\ref{fig:architecture}) for learning continuous mechanism transitions. Stage 1 trains a shared encoder and domain-specific experts to solve the CRL problem on pure-domain data. Stage 2 recovers mechanism trajectories at test time by solving for mixing coefficients via a principled least-squares procedure.

\begin{figure}[t]
    \centering
    \includegraphics[width=\columnwidth]{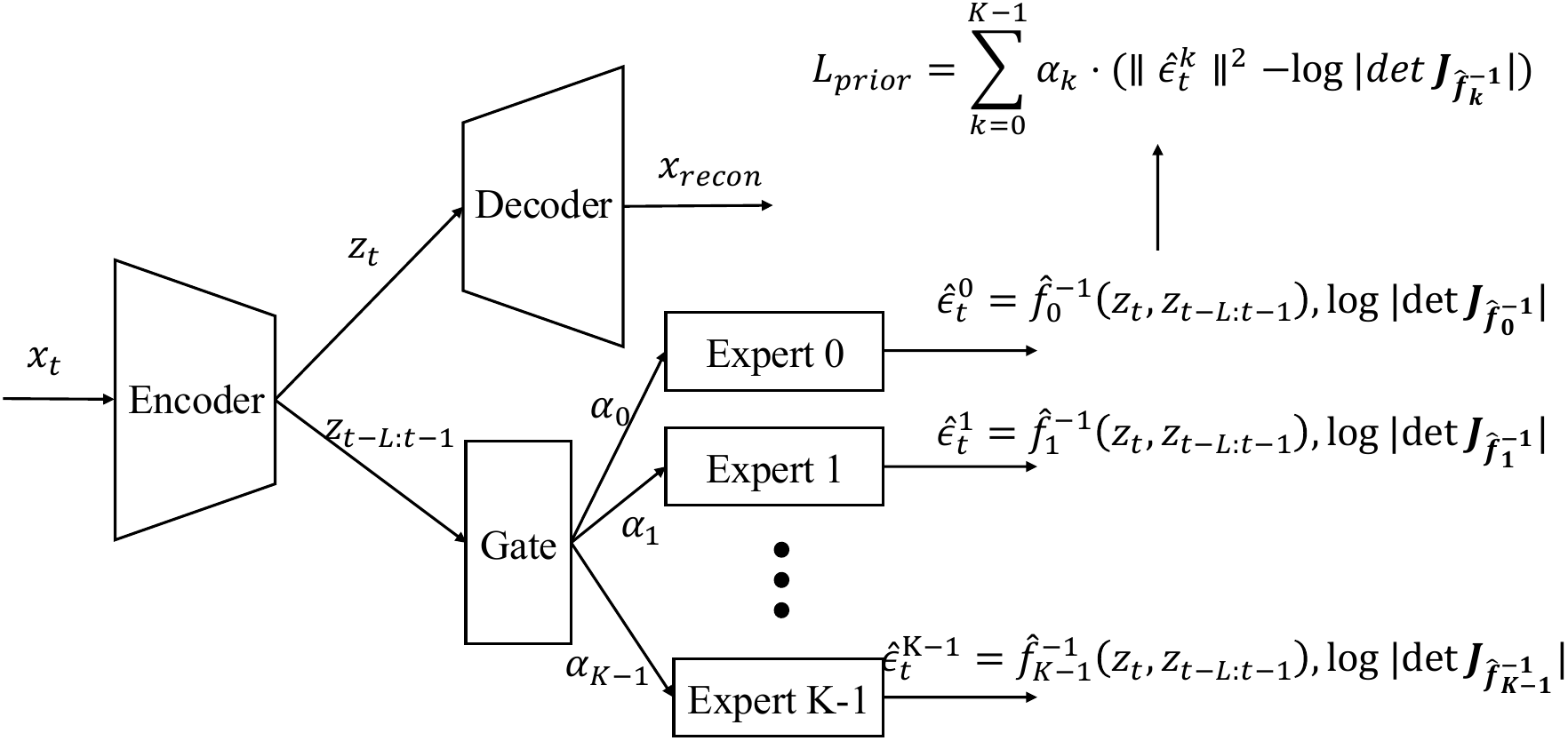}
    \caption{Architecture of TRACE. \textbf{Stage 1 (Training):} Shared encoder and one-hot gating route to domain-specific experts. \textbf{Stage 2 (Inference):} Mixing coefficients $\boldsymbol{\alpha}$ recovered via least-squares projection (Algorithm~\ref{alg:inference}).}
    \label{fig:architecture}
\end{figure}

\subsection{Stage 1: MoE-based Representation Learning}
The first stage learns disentangled latent representations from labeled domain data $\{\mathcal{D}_k\}_{k=0}^{K-1}$, where each domain $k$ corresponds to a distinct atomic mechanism. \\
Our central hypothesis, that any mechanism state can be expressed as a convex combination of finitely many atomic mechanisms, naturally suggests a Mixture-of-Experts~\citep{jacobs1991adaptive, shazeer2017outrageously} architecture. Each expert learns to model one atomic mechanism, and the learned representations enable recovery of mixing coefficients $\boldsymbol{\alpha}(t)$ at inference time. This design ensures a one-to-one correspondence between learned experts and identifiable atomic mechanisms, enabling principled trajectory recovery in Stage 2.\\
We instantiate this idea by extending the sequential VAE framework~\citep{kingma2013auto, jimenez2014stochastic} with an MoE structure where each expert specializes in one domain's causal transition dynamics.
\\\textbf{Shared Encoder.} A shared encoder $g^{-1}_\phi$ maps observations to latent variables across all domains, enforcing the assumption that while mechanisms vary, the underlying causal variables $\mathbf{z}_t$ remain consistent; only their relationships change.

Given observation $\mathbf{x}_t$ and its temporal context $\mathbf{x}_{t-L:t-1}$, the encoder outputs Gaussian posterior parameters:
\begin{equation}
\boldsymbol{\mu}_t, \boldsymbol{\sigma}_t = g^{-1}_\phi(\mathbf{x}_t, \mathbf{x}_{t-L:t-1}),
\end{equation}
with latent samples obtained via reparameterization: $\mathbf{z}_t = \boldsymbol{\mu}_t + \boldsymbol{\sigma}_t \odot \boldsymbol{\epsilon}$, where $\boldsymbol{\epsilon} \sim \mathcal{N}(\mathbf{0}, \mathbf{I})$.
\\\textbf{Domain-Specific Experts.}
Each domain $k$ is associated with an expert network modeling the causal transition dynamics. Following the normalizing flow formulation~\citep{dinh2017density, papamakarios2017masked, papamakarios2021normalizing}, each expert learns an inverse transition function $\hat{f}^{-1}_k$ mapping latent variables to independent noise:
\begin{equation}
\hat{\boldsymbol{\epsilon}}_t = \hat{f}^{-1}_k(\mathbf{z}_t, \mathbf{z}_{t-L:t-1}),
\end{equation}
with transition log-density computed via change of variables. During training, deterministic one-hot gating routes each sample to its corresponding expert, ensuring each expert specializes in its domain's dynamics.
\\\textbf{Training Objective.}
Let $q(\mathbf{z}_t | \mathbf{x}_t) = \mathcal{N}(\boldsymbol{\mu}_t, \mathrm{diag}(\boldsymbol{\sigma}_t^2))$ denote the encoder posterior. For training data from domain $k$, we define the objective function as follows:
\begin{align}
\mathcal{L}_k = \mathbb{E}_{q(\mathbf{z}_t | \mathbf{x}_t)} \big[ & \log p(\mathbf{x}_t | \mathbf{z}_t) + \log p_k(\mathbf{z}_t | \mathbf{z}_{t-L:t-1}) \notag \\
& - \log q(\mathbf{z}_t | \mathbf{x}_t) \big],
\end{align}
where $p_k(\mathbf{z}_t | \mathbf{z}_{t-L:t-1})$ is the transition prior modeled by expert $k$.
Under the identifiability conditions of Theorem~\ref{thm:latent_ident}, the learned representation satisfies
\begin{equation}
\hat{z}_i = h_i(z_{\pi(i)}),
\end{equation}
where $\pi$ is a permutation and each $h_i$ is a component-wise invertible function.

\subsection{Stage 2: Mechanism Trajectory Inference}

Given identifiable latent representations, we recover mixing coefficients $\boldsymbol{\alpha}(t)$ via the procedure justified by Theorem~\ref{thm:recovery}.\\
\textbf{Constructing the Differential Basis.}
For each domain $k$, we compute the conditional expectation $\hat{\boldsymbol{\mu}}^{(k)} = \mathbb{E}[\mathbf{z}_t \mid \mathbf{z}_{t-L:t-1}, \text{domain } k]$ using validation data from that domain. Using domain $k=0$ as baseline, we compute $\delta\hat{\boldsymbol{\mu}}^{(k)} = \hat{\boldsymbol{\mu}}^{(k)} - \hat{\boldsymbol{\mu}}^{(0)}$ and form the basis matrix $\hat{B} = [\delta\hat{\boldsymbol{\mu}}^{(1)}, \ldots, \delta\hat{\boldsymbol{\mu}}^{(K-1)}]$.
\\
\textbf{Linear Least-Squares Solver.}
Given a test observation, we encode it using the posterior mean $\mathbf{z}_t = \boldsymbol{\mu}_t$ from the encoder, then recover mixing coefficients by projecting onto the basis matrix:
\begin{equation}
\hat{\boldsymbol{\alpha}}(t) = \mathrm{Proj}_{\Delta^{K-1}}\left(\hat{B}^\dagger (\mathbf{z}_t - \hat{\boldsymbol{\mu}}^{(0)})\right),
\end{equation}
where $\hat{B}^\dagger = (\hat{B}^\top \hat{B})^{-1} \hat{B}^\top$ is the pseudoinverse and $\mathrm{Proj}_{\Delta^{K-1}}$ projects onto the probability simplex. By Theorem~\ref{thm:recovery}, the recovery error is controlled by $1/\sigma_{\min}(\hat{B})$.

To exploit trajectory smoothness, we apply temporal averaging with window size $w$:
\begin{equation}
\bar{\boldsymbol{\alpha}}(t) = \frac{1}{2w+1}\sum_{s=t-w}^{t+w} \hat{\boldsymbol{\alpha}}(s),
\end{equation}
which yields the improved $O(T^{-2/3})$ rate of Theorem~\ref{thm:smooth_recovery}. The complete procedure is summarized in Algorithm~\ref{alg:inference} (Appendix~\ref{app:algorithm}).
\\
\textbf{Assumption Verification.}
We validate the convex interpolation assumption via a Kolmogorov-Smirnov test comparing residuals on pure-domain versus transition data (Appendix~\ref{app:assumption_verification}).
\\
\textbf{OOD Generalization.}
The least-squares projection naturally generalizes to unseen mechanism combinations. If distribution shift occurs between training and test data, an optional affine adapter can align the distributions; we found this effective empirically though unnecessary for our main experiments (Appendix~\ref{app:ood}).

\section{Experiments}
\label{sec:experiments}

We evaluate TRACE on synthetic dynamical systems, semi-synthetic control environments, and real-world vehicle motion data. Our experiments validate both latent identifiability and continuous mechanism trajectory recovery.

\subsection{Experimental Setup}
\label{sec:setup}

\textbf{Evaluation Metrics.}
Following prior work~\citep{yaoTemporallyDisentangledRepresentation2022, song2023temporally, song2024causal, li2025towards}, we evaluate latent identifiability via \emph{Mean Correlation Coefficient (MCC)}~\citep{hyvarinen2016unsupervised}, measuring recovery up to permutation and component-wise transformation. For mechanism recovery, we report \emph{Weight Correlation} (abbreviated \emph{Corr.}), the Pearson correlation between true and inferred mixing trajectories $\boldsymbol{\alpha}(t)$. We use temporal smoothing with window size $w=5$ throughout. A parameter analysis of $w$ can be found in Appendix \ref{app:extended_ablation}.
\\
\textbf{Synthetic Data.}
We construct a latent dynamical system with $d=8$ dimensions and $K_{\text{total}}=5$ atomic mechanisms. Each atomic mechanism $k$ is associated with a sparse perturbation $\delta W^{(k)}$ modifying a single edge in the causal transition graph, ensuring linear independence per Assumption~\ref{ass:independent}. Observations are generated via an invertible nonlinear mixing function. Training uses 40,000 trajectories per domain from pure mechanisms. For evaluation, we generate transition trajectories involving $K_{\text{active}}=3$ atomic mechanisms (e.g., domains $0 \to 2 \to 4$), testing the model's ability to recover mixing coefficients when a subset of trained experts are active. Full details of the data generation process are in Appendix~\ref{app:synthetic_details}; complete model architectures, hyperparameters, and training schedules are in Appendix~\ref{app:implementation}.
\\
\textbf{Baselines.}
We compare against temporal CRL methods: TDRL~\citep{yaoTemporallyDisentangledRepresentation2022}, NCTRL~\citep{song2023temporally} (hard and soft gating), LEAP~\citep{yao2021learning}, iVAE~\citep{khemakhem2020variational}, and PCL~\citep{hyvarinen2017nonlinear}. All baselines assume discrete mechanisms. Under our setting (no instantaneous effects, dense transitions, invertible mixing), TDRL and NCTRL represent the state-of-the-art. More recent methods, including IDOL~\citep{liIdentificationTemporallyCausal2024}, CaRiNG~\citep{chenlearning}, CtrlNS~\citep{song2024causal}, and CHiLD~\citep{li2025towards}, address orthogonal challenges and are not directly comparable; see Appendix~\ref{app:baselines} for detailed discussion.

\begin{figure}[t]
    \centering
    \includegraphics[width=\columnwidth]{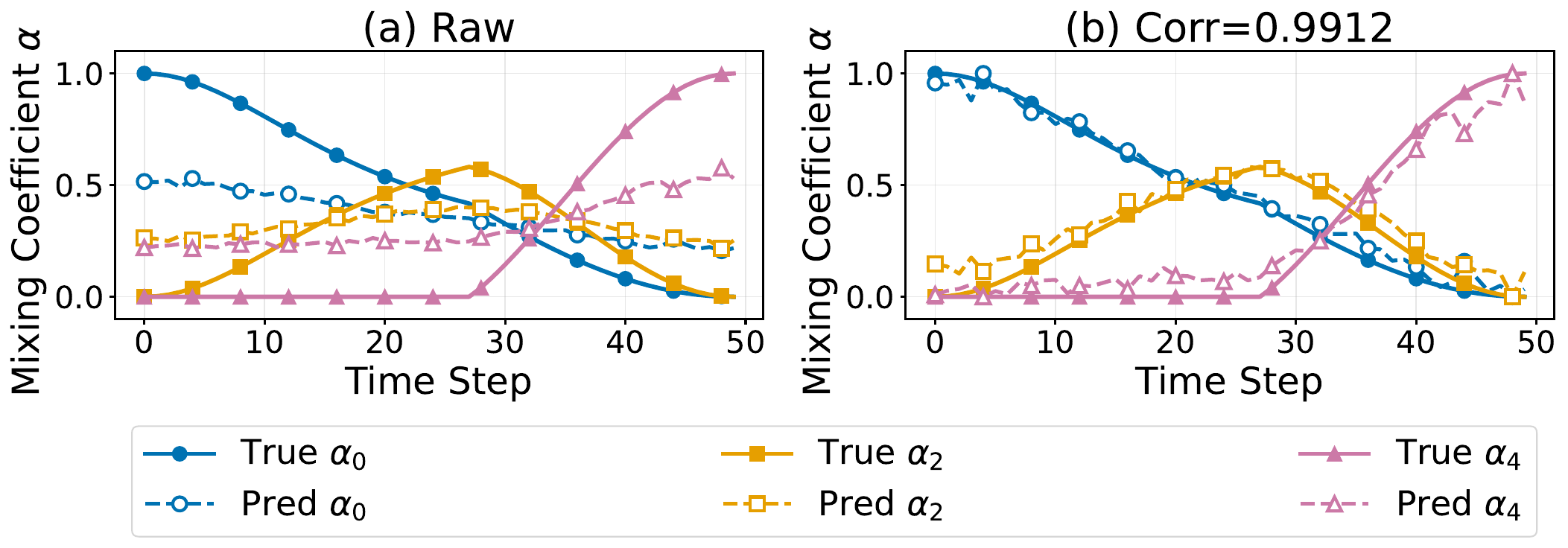}
    \caption{Mechanism trajectory recovery with $K_{\text{active}}=3$ (domains 0, 2, 4). Calibrated predictions accurately track sequential domain activations including the intermediate peak (Corr $= 0.99$).}
    \label{fig:trajectory_3domain}
\end{figure}

\begin{table}[t]
\caption{Evaluation on synthetic data. Latent MCC is evaluated on pure-domain test data where all methods operate under their designed conditions; Weight Correlation is evaluated on continuous transition trajectories. ``NA'' indicates that the method requires discrete domain labels and cannot produce weight estimates during transitions. NCTRL(hard) and NCTRL(soft) use argmax and soft gating strategies, respectively.}
\label{tab:synthetic_results}
\centering
\begin{small}
\begin{tabular}{lcc}
\toprule
Method & Weight Corr. $\uparrow$ & Latent MCC $\uparrow$ \\
\midrule

NCTRL(hard) & 0.67 ± 0.03 & 0.90 ± 0.09 \\
NCTRL(soft) & 0.72 ± 0.01 & 0.90 ± 0.09 \\
TDRL & NA & 0.89 ± 0.03 \\
LEAP & NA & 0.66 ± 0.05 \\
iVAE & NA & 0.53 ± 0.02 \\
PCL & NA & 0.59 ± 0.04 \\
Ours & \textbf{0.94 ± 0.05} & \textbf{0.96 ± 0.03} \\
\bottomrule
\end{tabular}
\end{small}
\end{table}

\begin{table}[t]
\caption{Validation of theoretical predictions. MCC: latent identifiability; Corr: weight trajectory correlation; MAE: mean absolute error; SNR\textsubscript{eff}: effective signal-to-noise ratio (Eq.~\ref{eq:snr_eff}).}
\label{tab:snr_validation}
\centering
\renewcommand{\arraystretch}{0.85}
\begin{footnotesize}
\begin{tabular}{lccccc}
\toprule
\multicolumn{6}{c}{\textbf{Scheme 1: Vary Noise} (fixed $\|W^{(k)} - W^{(0)}\|_F = 0.5$)} \\
\midrule
$\sigma_\epsilon$ & 0.01 & 0.05 & 0.10 & 0.20 & 0.50 \\
\midrule
MCC $\uparrow$ & \textbf{0.996} & 0.994 & 0.984 & 0.849 & 0.703 \\
Corr $\uparrow$ & \textbf{0.998} & 0.998 & 0.998 & 0.996 & 0.854 \\
MAE\textsubscript{raw} $\downarrow$ & \textbf{0.077} & 0.106 & 0.254 & 0.278 & 0.321 \\
MAE\textsubscript{cal} $\downarrow$ & \textbf{0.028} & 0.026 & 0.030 & 0.026 & 0.127 \\
SNR\textsubscript{eff} $\uparrow$ & \textbf{0.411} & 0.388 & 0.251 & 0.101 & 0.060 \\
\midrule
\multicolumn{6}{c}{\textbf{Scheme 2: Vary Perturbation} (fixed $\sigma_\epsilon = 0.1$)} \\
\midrule
$\|W^{(k)} - W^{(0)}\|_F$ & 0.1 & 0.2 & 0.3 & 0.5 & 0.7 \\
\midrule
MCC $\uparrow$ & 0.934 & 0.953 & 0.970 & 0.984 & \textbf{0.990} \\
Corr $\uparrow$ & 0.972 & 0.996 & 0.997 & 0.998 & \textbf{0.998} \\
MAE\textsubscript{raw} $\downarrow$ & 0.319 & 0.292 & 0.266 & 0.254 & \textbf{0.194} \\
MAE\textsubscript{cal} $\downarrow$ & 0.063 & 0.025 & 0.030 & 0.030 & \textbf{0.024} \\
SNR\textsubscript{eff} $\uparrow$ & 0.043 & 0.047 & 0.093 & 0.251 & \textbf{0.312} \\
\bottomrule
\end{tabular}
\end{footnotesize}
\end{table}

\begin{figure}[t]
    \centering
    \includegraphics[width=\columnwidth]{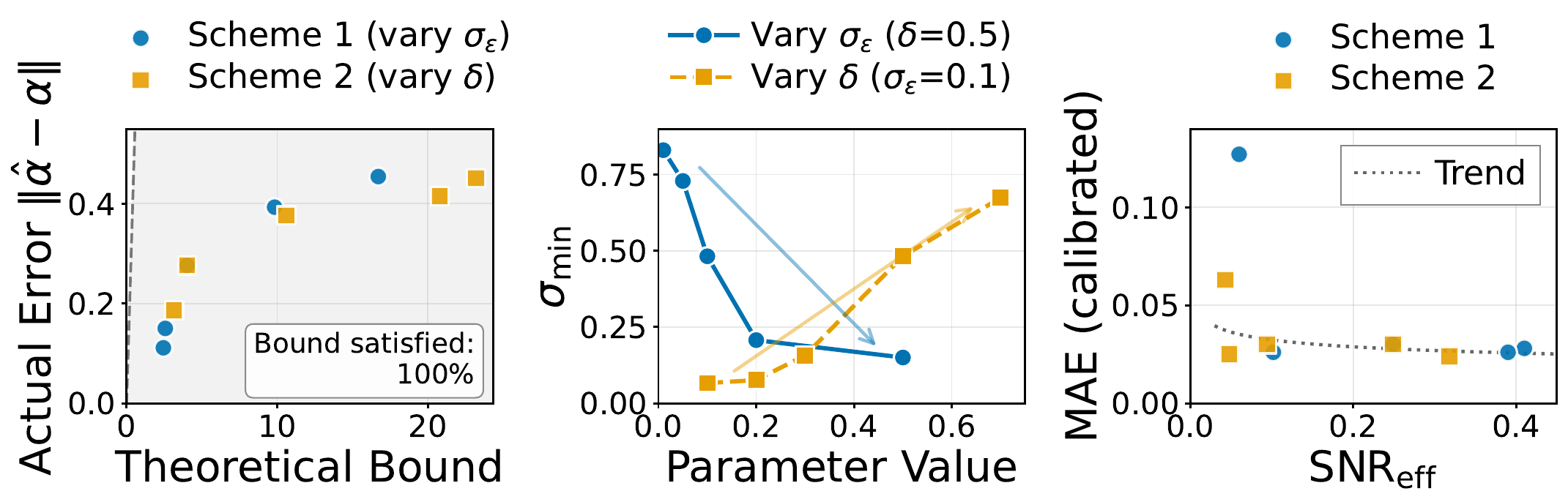}
    \caption{Empirical validation of Theorem~\ref{thm:recovery}. \textbf{Left:}~Recovery errors vs.\ theoretical bounds (all points below $y=x$). \textbf{Middle:}~$\sigma_{\min}$ under varying noise and perturbation. \textbf{Right:}~MAE vs.\ SNR$_{\mathrm{eff}}$.}
    \label{fig:theory_validation}
\end{figure}

\subsection{Synthetic Experiments}
\label{sec:synthetic}

We evaluate: (1) whether TRACE accurately recovers continuous mixing trajectories, (2) whether scale calibration enables precise quantitative recovery, and (3) whether the theoretical error bounds hold empirically.
\\\textbf{Results.}
Table~\ref{tab:synthetic_results} summarizes performance. TRACE achieves the highest weight correlation ($0.94 \pm 0.05$), substantially outperforming NCTRL variants (hard: 0.67, soft: 0.72). Other baselines (TDRL, LEAP, iVAE, PCL) require discrete domain labels and cannot produce trajectory estimates during continuous transitions. For latent identifiability, TRACE achieves MCC of $0.96$, exceeding all baselines.
\\\textbf{Scale Calibration.}
As predicted by Remark~\ref{rem:calibration}, raw predictions exhibit systematic scale distortion while preserving trajectory shape. Figure~\ref{fig:trajectory_3domain} demonstrates recovery on a transition with $K_{\text{active}}=3$ (domains 0, 2, 4). The calibrated trajectory accurately captures both the sequential activation pattern and intermediate peak timing (Corr.\ $= 0.99$). Details on two-point calibration are in Appendix~\ref{app:calibration}.
\\\textbf{Validation of Theoretical Bounds.}
We validate Theorem~\ref{thm:recovery} by varying noise level $\sigma_\epsilon$ and perturbation magnitude $\|W^{(k)} - W^{(0)}\|_F$ (Table~\ref{tab:snr_validation}). To quantify recovery difficulty, we define the effective signal-to-noise ratio:
\begin{equation}
\mathrm{SNR}_{\mathrm{eff}} := \frac{\sigma_{\min}}{\bar{\epsilon} + \delta_{\mathrm{approx}}},
\label{eq:snr_eff}
\end{equation}
where $\bar{\epsilon}$ is the mean latent residual norm and $\delta_{\mathrm{approx}}$ is the first-order approximation error (Theorem~\ref{thm:recovery}).

Table~\ref{tab:snr_validation} reports MCC, Corr., and mean absolute error (MAE) between recovered and true mixing coefficients, measured before (MAE\textsubscript{raw}) and after (MAE\textsubscript{cal}) scale calibration. Our key observations are: (1) Corr.\ remains high ($>0.97$) even when MCC degrades, confirming trajectory shape is preserved; (2) calibration reduces MAE by 3--10$\times$; (3) both metrics improve with larger perturbation magnitude, as predicted by the $\sigma_{\min}$ dependence in our bounds.

Figure~\ref{fig:theory_validation} provides visual confirmation: (Left) all errors fall below theoretical bounds; (Middle) $\sigma_{\min}$ increases with perturbation and decreases with noise; (Right) MAE correlates with $\mathrm{SNR}_{\mathrm{eff}}^{-1}$. Extended analysis is in Appendix~\ref{app:extended_validation}.

These results suggest that TRACE accurately recovers continuous mixing trajectories, scale calibration enables precise quantitative recovery, and the theoretical error bounds hold empirically.

\subsection{Semi-Synthetic Evaluation: Modified CartPole}
\label{sec:cartpole}

We evaluate whether TRACE recovers latent variables from high-dimensional pixel observations. We construct a modified CartPole environment with $128 \times 128$ grayscale images, where the latent state $\mathbf{z}_t = (x, v, \theta, \omega)^\top$ represents cart position, velocity, pole angle, and angular velocity. Following TDRL~\citep{yaoTemporallyDisentangledRepresentation2022}, we report MCC on the visually observable components $(x, \theta)$.
\\\textbf{Setup.}
We define five domains ($K_{\text{total}}=5$) where each $W^{(k)} - W^{(0)}$ modifies a single causal edge, with constant Gaussian noise across domains. Under this constant-variance setting, NCTRL's identifiability conditions fail ($\partial^2 \eta / \partial z_{k,t}^2$ is constant across domains), whereas TRACE's conditions hold via linear independence of $\{W^{(k)} - W^{(0)}\}$. We train on 1,000 trajectories per domain; full details are in Appendix~\ref{app:cartpole_details}.
\\\textbf{Results.}
Figure~\ref{fig:cartpole_results} shows frames across domains (top), the correlation matrix confirming latent identifiability with MCC $= 0.970$ (bottom left), and trajectory recovery with Corr.\ $= 0.95$ for $K_{\text{active}}=3$ (bottom right). These results confirm our framework extends to image observations; baseline comparisons follow in Sections~\ref{sec:vehicle}--\ref{sec:mocap}.

\begin{figure}[t]
    \centering
    \includegraphics[width=\columnwidth]{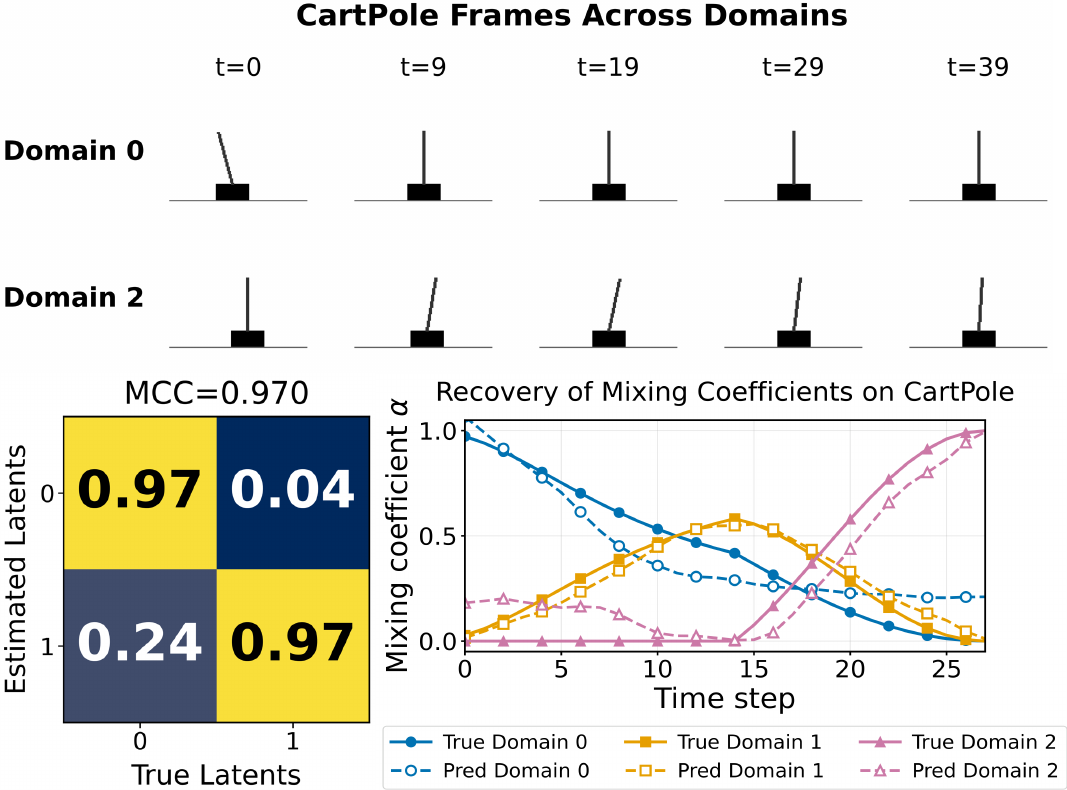}
    \caption{Evaluation on modified CartPole. \textbf{Top:} Frames across domains showing distinct dynamics. \textbf{Bottom left:} Correlation matrix confirms latent identifiability (MCC $= 0.970$). \textbf{Bottom right:} Recovered mixing coefficients (Corr.\ $0.95$) capture the sequential activation pattern.}
    \label{fig:cartpole_results}
\end{figure}

\subsection{Real-World Applications}
\label{sec:realworld}

We evaluate TRACE on two real-world datasets: vehicle turning maneuvers (UAVDT~\citep{Uavdt}) and human gait transitions (CMU Motion Capture Database\footnote{\url{http://mocap.cs.cmu.edu/}}). These experiments test whether our framework recovers meaningful mechanism trajectories from natural data.

\textbf{Baseline and Evaluation.}
Since ground-truth mechanism labels are unavailable, we construct physically-motivated proxies: for the vehicle dataset, the velocity-direction ratio $|v_y| / (|v_x| + |v_y|)$ that is approximately linear in $\alpha_{\text{turn}}$ within the operating range; for the gait dataset, hip joint speed, which increases approximately linearly with the walk-to-run mixing coefficient. These can be viewed as projections of the true mixing coefficients; since we evaluate via Pearson correlation (invariant to monotonic transformations), the resulting error is acceptable (Appendix~\ref{app:vehicle_details} and \ref{app:mocap_details}). Among the baselines in Section~\ref{sec:setup}, only NCTRL can produce domain assignments without labels; other methods either require labels during transitions or violate their assumptions under our setting.

\begin{figure}[t]
    \centering
    \subfigure[UAV frames with trajectory overlay (yellow curve)]{
        \includegraphics[width=\columnwidth]{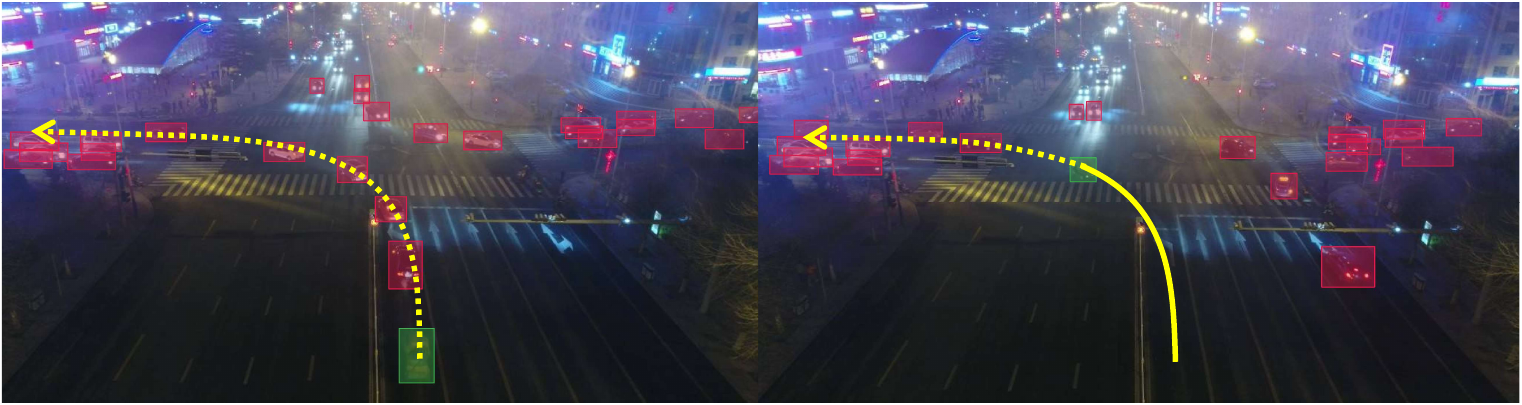}
    }
    \subfigure[TRACE (Corr.\ 0.960) vs.\ NCTRL (Corr.\ 0.239)]{
        \includegraphics[width=0.95\columnwidth]{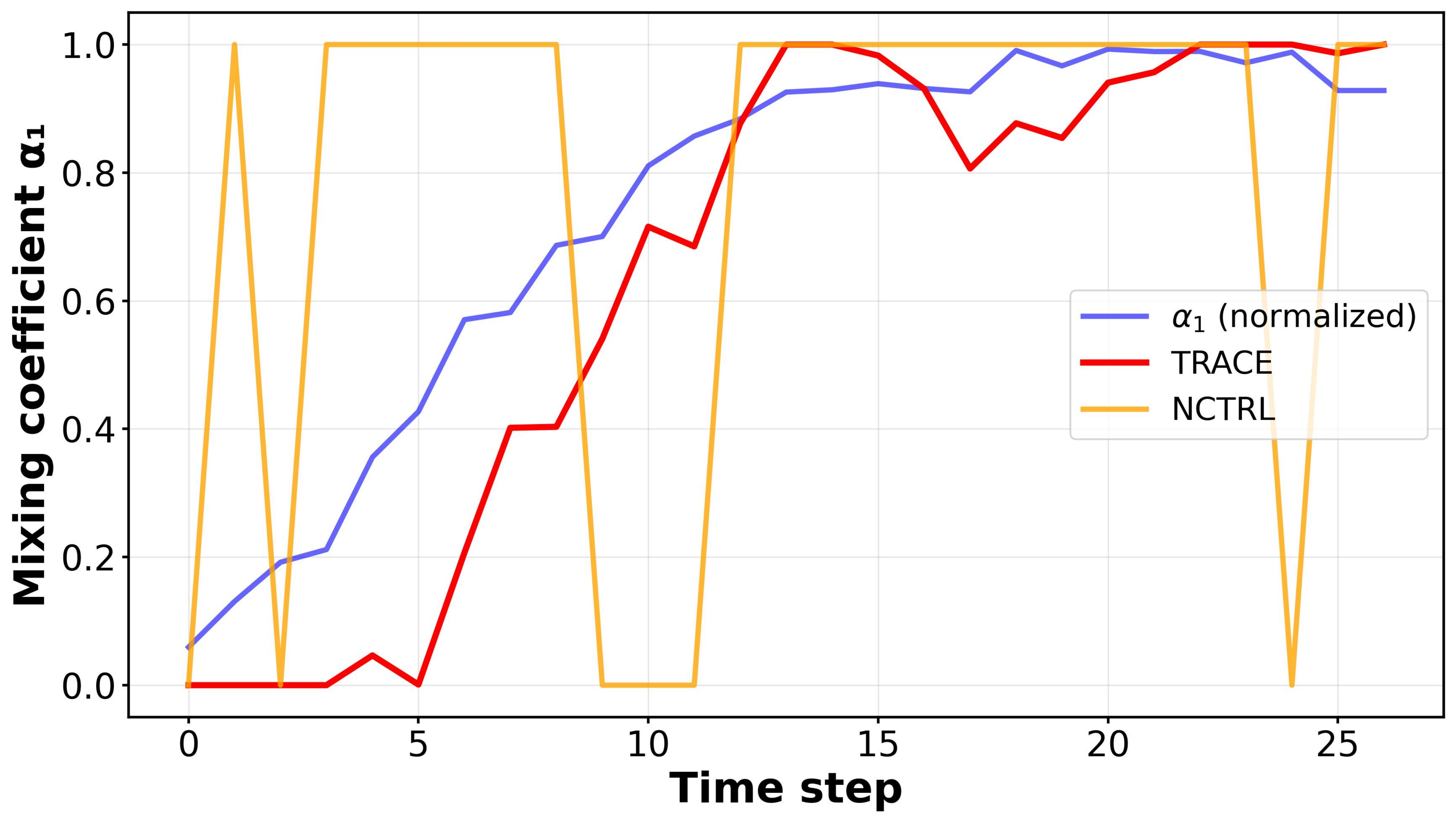}
    }
    \vspace{-0.2cm}
    \caption{Vehicle turning on UAVDT. (a) Sequential frames show the vehicle executing a left turn; the yellow curve indicates the trajectory. (b) TRACE produces smooth $\alpha_{\text{1}}$ trajectories while NCTRL oscillates between discrete states.}
    \label{fig:vehicle_results}
\end{figure}

\subsubsection{Vehicle Turning (UAVDT)}
\label{sec:vehicle}

\textbf{Setup.}
We extract trajectories during turning maneuvers at major intersections. We define two atomic mechanisms: \emph{horizontal movement} (domain 0) and \emph{vertical movement} (domain 1), training on straight-moving vehicles in each direction. The latent state includes position and velocity ($d=4$, lag $L=2$).

\textbf{Results.}
Figure~\ref{fig:vehicle_results}(a) shows UAV frames with the tracked vehicle (green box) and its trajectory (yellow curve) through a left turn. Figure~\ref{fig:vehicle_results}(b) compares recovery on this 26-frame sequence: TRACE achieves Corr.\ of 0.960 with smooth trajectories where $\alpha_1$ gradually increases from 0 to 1, while NCTRL achieves only 0.239, oscillating erratically between discrete states. For reference, simple physics-motivated heuristics on the same trajectories yield substantially worse recovery (centroid tracking $0.911$, optical flow $0.902$); see Appendix~\ref{app:naive_baselines}.

\begin{figure}[t]
    \centering
    \subfigure[Skeleton snapshots from walking (blue) to running (red)]{
        \includegraphics[width=0.75\columnwidth]{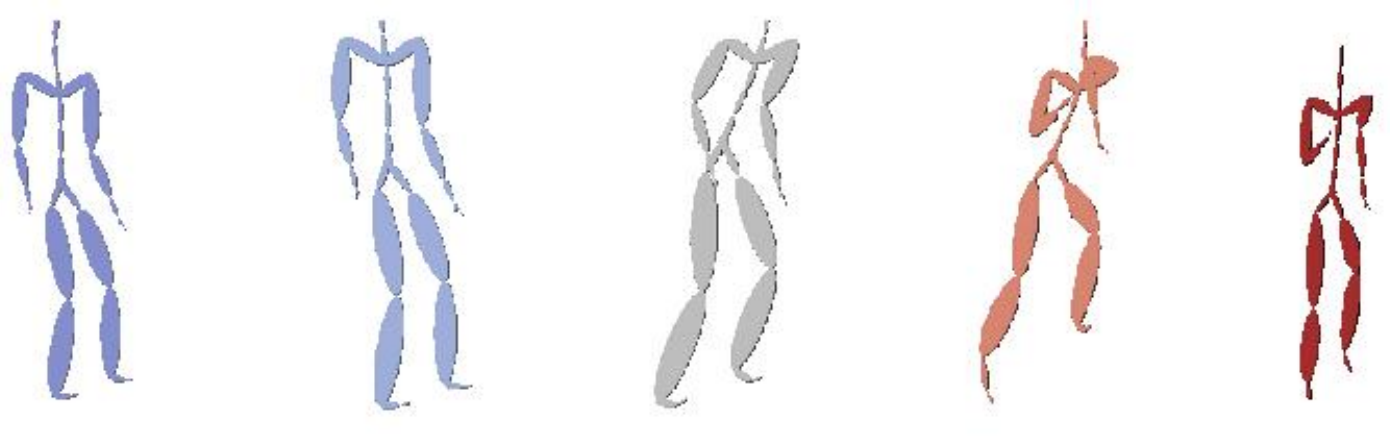}
    }
    \subfigure[TRACE (Corr.\ 0.917) vs.\ NCTRL (Corr.\ 0.619)]{
        \includegraphics[width=0.95\columnwidth]{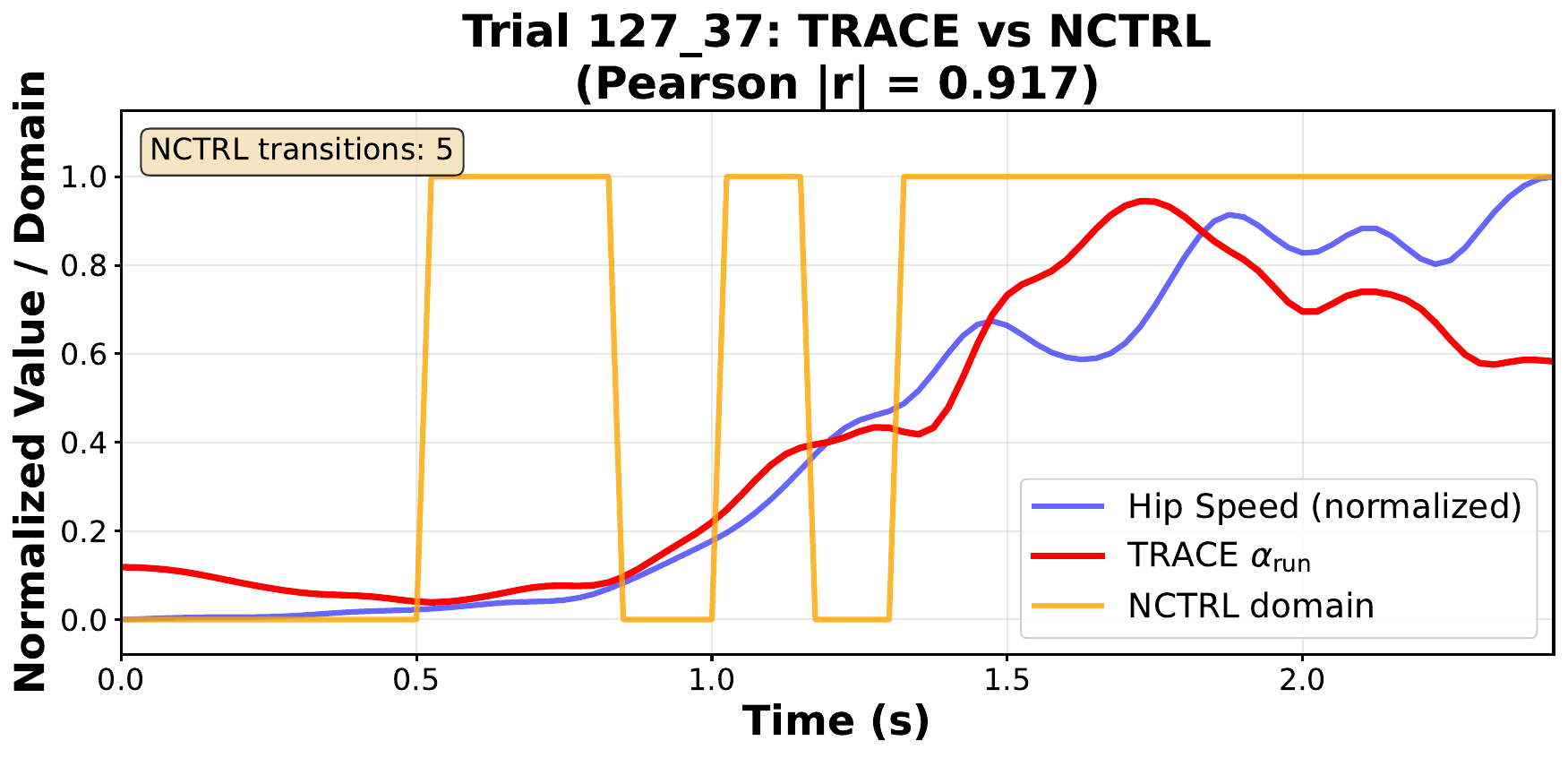}
    }
    \caption{Gait transition on CMU MoCap. (a) Skeleton snapshots at five time points; color corresponds to $\alpha_{\text{run}}$ in (b). (b) TRACE tracks proxy smoothly while NCTRL produces 5 spurious discrete transitions.}
    \label{fig:mocap}
\end{figure}

\subsubsection{Human Gait Transition (CMU MoCap)}
\label{sec:mocap}

\textbf{Setup.}
We select walk-to-run transition trials totaling over 1,000 frames. We define two atomic mechanisms: \emph{walking} and \emph{running} ($\alpha_{\text{run}}$), training on pure gait sequences.

\textbf{Results.}
Figure~\ref{fig:mocap}(a) shows skeleton overlays at five time points during a walk-to-run transition, with color gradient (blue$\to$red) indicating temporal progression. Figure~\ref{fig:mocap}(b) shows the corresponding trajectory recovery: TRACE achieves Corr.\ of $0.856 \pm 0.043$ across test trials (displayed: 0.917), with smooth $\alpha_{\text{run}}$ tracking the hip speed proxy. NCTRL produces 5 spurious transitions during this single gait change, unable to represent continuous evolution.

\begin{table}[t]
\caption{Recovery correlation ($\uparrow$). $\mathbf{W}$ remains stable while $\boldsymbol{\alpha}$ degrades with $K_{\text{active}}$.}
\label{tab:ablation}
\centering
\begin{small}
\begin{tabular}{c|ccc|ccc}
\toprule
 & \multicolumn{3}{c|}{$\boldsymbol{\alpha}$ recovery} & \multicolumn{3}{c}{$\mathbf{W}$ recovery} \\
$K_{\text{active}}$ & Simp. & Med. & Comp. & Simp. & Med. & Comp. \\
\midrule
2 & .993 & .997 & .979 & 1.000 & 1.000 & 1.000 \\
3 & .989 & .989 & .971 & 1.000 & 1.000 & 1.000 \\
4 & .988 & .982 & .919 & .999 & .999 & .999 \\
5 & .979 & .965 & .835 & .999 & .999 & .999 \\
6 & .968 & .957 & .692 & .998 & .999 & .999 \\
7 & .776 & .735 & .459 & .995 & .996 & .998 \\
\bottomrule
\end{tabular}
\end{small}
\end{table}

\subsection{Ablation Study}
\label{sec:ablation}

We disentangle two failure modes: (1) incorrect learning of atomic mechanisms $\{W^{(k)}\}$, and (2) geometric limitations in recovering $\boldsymbol{\alpha}(t)$ via least-squares projection. Full results for $K_{\text{active}} \in \{2, \ldots, 10\}$ are in Table~\ref{tab:exp3_full} (Appendix~\ref{app:extended_ablation}).

\textbf{Setup.}
We train with $K_{\text{total}} = 10$ mechanisms and evaluate on $K_{\text{active}} \in \{2, \ldots, 7\}$ across three complexity levels (\emph{simple}: sequential, \emph{medium}: overlapping, \emph{complex}: oscillating) with $d = 8$. We report correlation for both $\boldsymbol{\alpha}$ recovery and $W(t) = \sum_k \alpha_k(t) W^{(k)}$ recovery.

\textbf{Results.}
Table~\ref{tab:ablation} reveals a critical distinction: $W(t)$ recovery remains robust ($>0.995$) across all conditions, even at $K_{\text{active}} = 7$ where $\boldsymbol{\alpha}$ drops to 0.459, confirming correct learning of mechanism structure. The gap between $W(t)$ and  Corr. $\boldsymbol{\alpha}$ widens from $+0.007$ to $+0.538$ as $K_{\text{active}} \to d+1$, validating the geometric bottleneck predicted by Remark~\ref{rem:geometric}: projection onto near-collinear bases becomes ill-conditioned, affecting inference rather than learning. Our real-world experiments ($K=2$) in Section \ref{sec:realworld} operate safely within the reliable regime where both metrics exceed 0.95.

\subsection{Sample Efficiency}
\label{sec:sample_efficiency}

We further probe how TRACE behaves when training data is scarce, reducing data from $100\%$ ($40{,}000$ trajectories per domain) down to $1\%$ ($400$ per domain) on the synthetic benchmark.

\begin{table}[t]
\caption{Sample efficiency. Trajectory recovery (Weight Corr) degrades far more gracefully than per-step latent identifiability (MCC), because temporal smoothing (Theorem~\ref{thm:smooth_recovery}) averages out per-step noise.}
\label{tab:sample_efficiency}
\centering
\setlength{\tabcolsep}{4pt}
\begin{footnotesize}
\begin{tabular}{cccc}
\toprule
Fraction & MCC $\uparrow$ & W.\,Corr ($K{=}3$) $\uparrow$ & W.\,Corr ($K{=}5$) $\uparrow$ \\
\midrule
$100\%$ & $0.963$            & $0.986$            & $0.979$ \\
$20\%$  & $0.886 \pm 0.088$  & $0.968 \pm 0.014$  & $0.955 \pm 0.022$ \\
$10\%$  & $0.752 \pm 0.163$  & $0.960 \pm 0.013$  & $0.941 \pm 0.014$ \\
$1\%$   & $0.622 \pm 0.069$  & $0.688 \pm 0.377$  & $0.640 \pm 0.169$ \\
\bottomrule
\end{tabular}
\end{footnotesize}
\end{table}

\textbf{Results.}
Table~\ref{tab:sample_efficiency} reveals an asymmetric robustness pattern: trajectory recovery degrades far more gracefully than per-step latent identifiability. At $10\%$ training data, MCC drops sharply to $0.752$, yet Weight Corr at $K_{\text{active}}=3$ remains $0.960$. The per-step signal-to-noise ratio is low ($\sigma_{\min}/\|\hat{\boldsymbol{\epsilon}}_t\| \approx 0.03$), but the temporal smoothing of Theorem~\ref{thm:smooth_recovery} averages out the noise at the trajectory level. Even at $20\%$ data, Weight Corr exceeds $0.95$ across all tested $K_{\text{active}}$ levels, demonstrating practical applicability beyond the large-data regime. The full data scarcity table (with N/domain counts) is in Appendix~\ref{app:sample_efficiency_detail}.

\subsection{Scaling Beyond the Geometric Capacity Bound}
\label{sec:scaling}

The condition $K \leq d{+}1$ in Assumption~\ref{ass:independent} is a \emph{sufficient}, not necessary, requirement for trajectory recovery. We empirically stress-test this by training with $K_{\text{total}}=50$ atomic mechanisms across three latent dimensions.

\begin{table}[t]
\caption{Recovery beyond the geometric capacity bound ($K_{\text{active}}=5$, $K_{\text{total}}=50$). Even at $K/(d{+}1)=5.56$, TRACE recovers mechanism trajectories with high correlation.}
\label{tab:scaling}
\centering
\setlength{\tabcolsep}{3pt}
\begin{footnotesize}
\begin{tabular}{cccc|c}
\toprule
$d$ & $K$ & $K/(d{+}1)$ & $K_{\text{active}}$ & Weight Corr $\uparrow$ \\
\midrule
$8$   & $50$ & $5.56$ & $5$ & $0.925 \pm 0.052$ \\
$16$  & $50$ & $2.94$ & $5$ & $0.962 \pm 0.010$ \\
$128$ & $50$ & $0.39$ & $5$ & $\mathbf{0.989 \pm 0.002}$ \\
\bottomrule
\end{tabular}
\end{footnotesize}
\end{table}

\textbf{Results.}
Table~\ref{tab:scaling} shows that even when $K/(d{+}1)=5.56$ (50 mechanisms in $d=8$ dimensions), TRACE achieves Weight Corr $0.925$ for $K_{\text{active}}=5$. Performance further improves as $d$ grows and the constraint relaxes, reaching $0.989$ at $d=128$. When mechanism perturbations are structurally sparse (each $W^{(k)}$ modifies only two causal edges), Weight Corr at fixed $d=8$, $K=50$ improves from $0.925$ (dense) to $\mathbf{0.979}$ (sparse), with variance dropping from $\pm 0.052$ to $\pm 0.006$. The binding empirical constraint is therefore $K_{\text{active}} \leq d{+}1$, not $K_{\text{total}} \leq d{+}1$: TRACE remains practical when the mechanism dictionary is large but only a few are active at any time. A mathematical explanation, together with a continuous parameterization $W(t)=h(c_t)$ for the $K_{\text{active}}{\gg}d$ regime, is given in Appendix~\ref{app:scaling}.

\section{Conclusion}
\label{sec:conclusion}

Real-world causal mechanisms evolve continuously, but existing causal representation learning methods assume discrete switches between regimes. We presented TRACE, a framework that models mechanism transitions as trajectories on a simplex over $K$ atomic mechanisms, with identifiability guarantees for both latent variables and mixing coefficients. Across synthetic and real-world benchmarks, TRACE achieves 3 to 4$\times$ higher correlation with ground-truth dynamics than discrete-switching baselines.

\textbf{Limitations.}
TRACE has three scope conditions. First, training assumes labeled pure-regime data and a known $K$; in practice, the framework degrades gracefully under domain impurity (Appendix~\ref{app:impurity}) and mild $K$ misspecification (Appendix~\ref{app:k_misspec}). Second, real-world evaluation relies on physically motivated proxies rather than ground-truth labels, with $\sigma_{\min}$ (Appendix~\ref{app:impurity}) acting as a pre-deployment diagnostic. Third, the theoretical bound $K \leq d{+}1$ is sufficient for identifiability but indicates a finite-capacity regime; structurally sparse perturbations relax it to $K_{\text{active}} \leq d{+}1$ (Section~\ref{sec:scaling}), though a fully unrestricted extension remains open. A detailed discussion of each scope condition is in Appendix~\ref{app:limitations}, with concrete extensions and downstream applications in Appendix~\ref{app:future_work}.

\section*{Impact Statement}

Understanding continuous mechanism transitions could potentially benefit drug discovery by identifying intervention windows before pathological mechanisms dominate, and robotic manipulation by handling transitions between free-space dynamics and contact-rich regimes. More broadly, tracking how causal relationships evolve continuously opens new directions for mechanism-aware machine learning in scientific applications.

We do not foresee immediate negative societal impacts from this work, as it focuses on foundational methodology rather than application-specific deployment.

\section*{Acknowledgments}
This work is supported by the National Science Foundation (NSF) Grant \#2312862, NSF-Simons SkAI Institute, NSF CAREER \#2440542, NSF \#2533996, National Institutes of Health (NIH) \#R01AG091762, NSF ACCESS Computing Resources, NAIRR, NRP, a Google Research Scholar Award, and Cisco gift grant.

\bibliography{references}
\bibliographystyle{icml2026}

\newpage

\appendix

\section{Proofs and Theoretical Foundations}
\label{app:proofs}

This appendix provides complete proofs for the identifiability results stated in Section~\ref{sec:identifiability}, along with detailed technical discussion.

\subsection{Background: Identifiability in Temporal Causal Processes}
\label{app:background}

Our identifiability analysis builds upon the theoretical framework established for nonlinear ICA with temporal structure~\citep{hyvarinen2016unsupervised, hyvarinen2019nonlinear, khemakhem2020variational}.

\begin{definition}[Component-wise Invertible Transformation]
A function $h: \mathbb{R}^d \to \mathbb{R}^d$ is \emph{component-wise invertible} if $h(\mathbf{z}) = (h_1(z_1), \ldots, h_d(z_d))^\top$ where each $h_i: \mathbb{R} \to \mathbb{R}$ is strictly monotonic.
\end{definition}

\begin{definition}[Identifiable Latent Causal Processes]
Let $\mathbf{x}_t = g(\mathbf{z}_t)$ be observations generated from latent temporal processes. The latent variables are \emph{identifiable} if any learned representation $\hat{\mathbf{z}}_t = \hat{g}^{-1}(\mathbf{x}_t)$ satisfying the model constraints must equal $\mathbf{z}_t$ up to permutation $\pi$ and component-wise invertible transformation $h$: $\hat{z}_i = h_i(z_{\pi(i)})$ for all $i \in \{1, \ldots, d\}$.
\end{definition}

The permutation ambiguity is inherent to unsupervised learning, and the component-wise transformation reflects that the scale and nonlinear warping of each dimension cannot be determined from observations alone.

\subsection{Proof of Theorem~\ref{thm:latent_ident}: Identifiability of Latent Variables}
\label{app:thm1}

We adapt the identifiability result from TDRL~\citep{yaoTemporallyDisentangledRepresentation2022} to our multi-domain setting.

\begin{proof}
Let $\eta_{k,t}^{(u)} = \log p(z_{k,t} \mid \mathbf{z}_{t-1}, u)$ denote the log-density of component $k$ under domain $u$. Define the derivative vectors:
\begin{align}
\mathbf{v}_{k,t}^{(u)} &\triangleq \left( \frac{\partial^2 \eta_{k,t}^{(u)}}{\partial z_{k,t} \partial z_{1,t-1}}, \ldots, \frac{\partial^2 \eta_{k,t}^{(u)}}{\partial z_{k,t} \partial z_{d,t-1}} \right)^\top \in \mathbb{R}^d, \\
\dot{\mathbf{v}}_{k,t}^{(u)} &\triangleq \left( \frac{\partial^3 \eta_{k,t}^{(u)}}{\partial z_{k,t}^2 \partial z_{1,t-1}}, \ldots, \frac{\partial^3 \eta_{k,t}^{(u)}}{\partial z_{k,t}^2 \partial z_{d,t-1}} \right)^\top \in \mathbb{R}^d.
\end{align}

These vectors capture how the conditional distribution of $z_{k,t}$ depends on the lagged variables $\mathbf{z}_{t-1}$. For the multi-domain setting, we concatenate information across domains:
\begin{align}
\mathbf{s}_{k,t} &\triangleq \left( \mathbf{v}_{k,t}^{(0)\top}, \ldots, \mathbf{v}_{k,t}^{(K-1)\top}, \Delta_1, \ldots, \Delta_{K-1} \right)^\top, \\
\dot{\mathbf{s}}_{k,t} &\triangleq \left( \dot{\mathbf{v}}_{k,t}^{(0)\top}, \ldots, \dot{\mathbf{v}}_{k,t}^{(K-1)\top}, \dot{\Delta}_1, \ldots, \dot{\Delta}_{K-1} \right)^\top,
\end{align}
where $\Delta_u = \frac{\partial^2 \eta_{k,t}^{(u)}}{\partial z_{k,t}^2} - \frac{\partial^2 \eta_{k,t}^{(u-1)}}{\partial z_{k,t}^2}$ captures the change in curvature across adjacent domains.

Suppose $\hat{g}^{-1}$ is any function such that $\hat{\mathbf{z}}_t = \hat{g}^{-1}(\mathbf{x}_t)$ has mutually independent components conditional on $\hat{\mathbf{z}}_{t-1}$. By the change of variables formula, if $\hat{\mathbf{z}} = \phi(\mathbf{z})$ for some diffeomorphism $\phi = \hat{g}^{-1} \circ g$, then
\begin{equation}
\log p(\hat{\mathbf{z}}_t \mid \hat{\mathbf{z}}_{t-1}, u) = \log p(\mathbf{z}_t \mid \mathbf{z}_{t-1}, u) - \log |\det J_\phi(\mathbf{z}_t)|,
\end{equation}
where $J_\phi$ is the Jacobian of $\phi$.

The conditional independence of $\hat{\mathbf{z}}_t$ given $\hat{\mathbf{z}}_{t-1}$ implies that the cross-derivatives of the transformed log-density must vanish:
\begin{equation}
\frac{\partial^2}{\partial \hat{z}_{i,t} \partial \hat{z}_{j,t}} \log p(\hat{\mathbf{z}}_t \mid \hat{\mathbf{z}}_{t-1}, u) = 0, \quad \forall i \neq j.
\end{equation}

Expanding this constraint using the chain rule and the linear independence of $\{\mathbf{s}_{k,t}, \dot{\mathbf{s}}_{k,t}\}_{k=1}^d$, one can show that $\phi$ must decompose as $\phi = P \circ h$ where $P$ is a permutation matrix and $h$ is component-wise invertible.
\end{proof}

\paragraph{Discussion of the Linear Independence Condition.}
The sufficient variability condition (linear independence of $\{\mathbf{s}_{k,t}, \dot{\mathbf{s}}_{k,t}\}_{k=1}^n$) is the key requirement that enables identification. This condition fails in two known degenerate cases: (1) i.i.d.\ processes where $z_{k,t}$ is independent of $\mathbf{z}_{t-1}$, yielding $\mathbf{v}_{k,t}^{(u)} = \mathbf{0}$; and (2) Gaussian additive noise with constant variance across all domains, where second derivatives are constant. Processes with heterogeneous noise variances or non-Gaussian additive noise generically satisfy the condition. In our setting, defining $\delta W^{(k)} := W^{(k)} - W^{(0)}$, the domain-specific differences naturally induce variation in the conditional distributions, ensuring the condition holds when Assumption~\ref{ass:independent} is satisfied.

\subsection{From Parameter Independence to Sufficient Variability}
\label{app:variability_bridge}

We establish that Assumption~\ref{ass:independent} (linear independence of pairwise differences $\{W^{(k)} - W^{(0)}\}_{k=1}^{K-1}$) implies the sufficient variability condition required by Theorem~\ref{thm:latent_ident}. This bridges the gap between parameter-space structure and the function-space condition on conditional distributions.

\textbf{Notation.} Throughout this section, we define $\delta W^{(k)} := W^{(k)} - W^{(0)}$ for $k \geq 1$, treating domain $0$ as baseline.

\begin{lemma}[Parameter Independence Implies Sufficient Variability]
\label{lem:variability_bridge}
Consider the generative model in Section~\ref{sec:formulation} with transition dynamics $\mathbf{z}_t = f(\mathbf{z}_{t-1}; W) + \boldsymbol{\epsilon}_t$, where $f(\mathbf{z}; W) = \sigma(W\mathbf{z})$ for a component-wise activation $\sigma$ and $\boldsymbol{\epsilon}_t \sim \mathcal{N}(\mathbf{0}, \Sigma)$ with $\Sigma \succ 0$. Under Assumption~\ref{ass:independent}, if additionally:
\begin{enumerate}
    \item[(i)] The activation $\sigma$ is twice differentiable with $\sigma'(x) \neq 0$ for all $x$ in the support of the latent process;
    \item[(ii)] The latent process has sufficient support: for each domain $k$, the distribution of $\mathbf{z}_{t-1}$ has full-rank covariance;
    \item[(iii)] (Row-wise non-degeneracy) For each component $k \in \{1, \ldots, d\}$, the row vectors $\{(\delta W^{(u)})_{k,:}\}_{u=1}^{K-1}$ are linearly independent in $\mathbb{R}^d$, which requires $K - 1 \leq d$;
\end{enumerate}
then the sufficient variability condition (linear independence of $\{\mathbf{s}_{k,t}, \dot{\mathbf{s}}_{k,t}\}_{k=1}^d$ as defined in Appendix~\ref{app:thm1}) is satisfied.
\end{lemma}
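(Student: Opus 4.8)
The plan is to make the abstract ``sufficient variability'' requirement of Theorem~\ref{thm:latent_ident} concrete by direct differentiation, and then reduce it to a rank statement that Assumption~\ref{ass:independent} and (iii) supply. First I would compute the per-component conditional log-densities. Because the noise is additive Gaussian and (as the conditionally factorized latent dynamics require) $\Sigma$ is diagonal, $z_{k,t}\mid\mathbf{z}_{t-1},u \sim \mathcal{N}\!\big(\sigma(a_k^{(u)}),\,\Sigma_{kk}\big)$ with pre-activation $a_k^{(u)} := (W^{(u)}\mathbf{z}_{t-1})_k$, so $\eta_{k,t}^{(u)} = -\tfrac{1}{2\Sigma_{kk}}\big(z_{k,t}-\sigma(a_k^{(u)})\big)^2 + \mathrm{const}$. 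Differentiating gives $\partial^2\eta_{k,t}^{(u)}/\partial z_{k,t}^2 \equiv -1/\Sigma_{kk}$ and $\partial^2\eta_{k,t}^{(u)}/(\partial z_{k,t}\,\partial z_{j,t-1}) = \Sigma_{kk}^{-1}\sigma'(a_k^{(u)})\,W^{(u)}_{kj}$, hence $\mathbf{v}_{k,t}^{(u)} = \Sigma_{kk}^{-1}\sigma'(a_k^{(u)})\,(W^{(u)})_{k,:}^\top$ --- a nonzero scalar multiple of the $k$-th row of $W^{(u)}$, the scalar being nonzero by (i). Since the $z_{k,t}$-curvature is constant, the difference blocks $\Delta_u$, $\dot\Delta_u$ and the third-order vectors $\dot{\mathbf{v}}_{k,t}^{(u)}$ vanish; this is precisely the homoscedastic-Gaussian degeneracy flagged after Theorem~\ref{thm:latent_ident}, and the role of the $K\!\ge\!2$ domains is to lift it, because all the required variability is now carried by the stacked cross-derivative blocks $\{\mathbf{v}_{k,t}^{(u)}\}_{u=0}^{K-1}$.

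Next I would reduce the condition to linear algebra. Stacking $\mathbf{v}_{k,t}^{(u)}$ over the component index $k$, domain $u$ contributes the $d\times d$ block $M^{(u)} := (W^{(u)})^\top D^{(u)}$ with $D^{(u)} := \mathrm{diag}\big(\Sigma_{11}^{-1}\sigma'(a_1^{(u)}),\dots,\Sigma_{dd}^{-1}\sigma'(a_d^{(u)})\big)$ invertible by (i), and the operative requirement becomes full column rank of the vertical stack $[M^{(0)};\dots;M^{(K-1)}]$, i.e.\ no nonzero $\beta$ with $M^{(u)}\beta = 0$ for every $u$. At a conveniently chosen $\mathbf{z}_{t-1}$ --- one where all pre-activations $a_k^{(u)}$ fall in a region on which $\sigma'$ is locally constant --- the $D^{(u)}$ reduce to common scalar multiples of $\Sigma^{-1}$, so the requirement collapses to the parameter-space statement that $\sum_k \tilde\beta_k (W^{(u)})_{k,:} = 0$ for all $u$ implies $\tilde\beta=0$: immediate if any single $W^{(u)}$ has full row rank, and in general obtained by subtracting the $u=0$ equation to get $\sum_k \tilde\beta_k (\delta W^{(u)})_{k,:}=0$ for $u=1,\dots,K-1$ and invoking Assumption~\ref{ass:independent} together with the row-family independence of condition (iii) (under the mild regularity the paper flags for the basis transfer); this is also where the capacity bound $K-1\le d$ is consumed.

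The hard part will be upgrading this from the one convenient point to generic $\mathbf{z}_{t-1}$: because each $\mathbf{v}_{k,t}^{(u)}$ carries the domain- and point-dependent weight $\sigma'(a_k^{(u)})$ and different domains see different pre-activations, the $D^{(u)}$ are genuinely distinct away from the nice point, and one must check that these reweightings cannot manufacture a cancellation. I would do this by analyticity: the Gram determinant certifying the full-rank condition is a real-analytic function of $\mathbf{z}_{t-1}$, it is nonzero at the convenient point by the previous step, hence nonzero off a measure-zero set, and (ii) (full-rank covariance of $\mathbf{z}_{t-1}$ in every domain) guarantees such points occur with positive probability, so the condition holds for almost every relevant $\mathbf{z}_{t-1}$. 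Finally I would reconcile with the stated form: since $\dot{\mathbf{s}}_{k,t}$ and the $\Delta$-blocks vanish identically in the homoscedastic-Gaussian model, the requirement that Theorem~\ref{thm:latent_ident}'s proof actually consumes under $K\ge2$ domains is the full-rank property of the $\mathbf{s}$-blocks established above, with the $\dot{\mathbf{s}}$-blocks serving only as the variability source in the single-domain or heteroscedastic regime.
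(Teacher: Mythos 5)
Your Step~1 computation of $\mathbf{v}_{k,t}^{(u)}=\Sigma_{kk}^{-1}\sigma'(a_k^{(u)})\,(W^{(u)})_{k,:}^{\top}$ matches the paper's, and your observation that under homoscedastic additive Gaussian noise $\partial^2\eta_{k,t}^{(u)}/\partial z_{k,t}^2\equiv -1/\Sigma_{kk}$, so that $\dot{\mathbf{v}}_{k,t}^{(u)}$ and the $\Delta_u$-blocks vanish identically, is correct and is in fact something the paper's own proof gets wrong (it claims a ``similar argument using $\sigma''$'' handles the third-order block, but $\sigma''$ never enters a derivative taken twice in $z_{k,t}$). You are right that this forces a reinterpretation of which independence the downstream proof actually consumes --- but note that this means you are proving a modified statement, not the lemma as written, and you should say so explicitly rather than in a closing aside.

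The genuine gap is in your linear-algebra reduction. After subtracting the $u=0$ equation at the convenient point you need: $\sum_k\tilde\beta_k(\delta W^{(u)})_{k,:}=0$ for all $u\ge 1$ implies $\tilde\beta=0$. This is a statement about linear combinations \emph{across components $k$ for each fixed domain $u$}, i.e.\ about the column space of each $(\delta W^{(u)})^{\top}$. But condition (iii) and Assumption~\ref{ass:independent} control the transposed object: independence \emph{across domains $u$ for each fixed row index $k$} (and independence of the matrices as elements of $\mathbb{R}^{d\times d}$). Neither implies what you need. Concretely, for $K=2$ take $\delta W^{(1)}$ rank one with all rows equal to a fixed nonzero vector: condition (iii) holds (each singleton row family is independent) and Assumption~\ref{ass:independent} holds, yet any $\tilde\beta$ with $\sum_k\tilde\beta_k=0$ kills the combination. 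Your fallback --- ``immediate if some $W^{(u)}$ has full row rank'' --- does close the hole, but it is an unstated extra hypothesis, not a consequence of (i)--(iii); the paper instead runs the argument on the differences $\mathbf{v}_{k,t}^{(u)}-\mathbf{v}_{k,t}^{(0)}$ for fixed $k$ across $u$, which is the orientation its condition (iii) actually supports. Two smaller issues: condition (i) only gives $\sigma$ twice differentiable, so a point where $\sigma'$ is locally constant need not exist (e.g.\ $\tanh$), and the analyticity extension requires $\sigma$ real-analytic; you would need to replace both devices with a genericity argument that works under the stated regularity.
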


\begin{remark}[Capacity Constraint]
\label{rem:capacity}
Condition (iii) implies a fundamental identifiability limit: the number of distinguishable mechanisms cannot exceed the latent dimension plus one, i.e., $K \leq d + 1$. This constraint is satisfied generically: if the entries of $\{\delta W^{(u)}\}$ are drawn from any continuous distribution, row-wise non-degeneracy holds with probability one. When the number of active mechanisms $K_{\mathrm{active}}$ approaches or exceeds this capacity, recovery performance degrades gracefully; we validate this prediction empirically in Section~\ref{sec:ablation}.
\end{remark}

\begin{proof}
Under Gaussian additive noise, the conditional log-density takes the form:
\begin{align}
\eta_{k,t}^{(u)} &= \log p(z_{k,t} \mid \mathbf{z}_{t-1}, u) \notag \\
&= -\frac{1}{2\sigma_k^2}\left(z_{k,t} - f_k(\mathbf{z}_{t-1}; W^{(u)})\right)^2 + C,
\end{align}
where $f_k$ denotes the $k$-th component of $f$ and $C$ is a normalization constant.

\paragraph{Step 1: Computing the derivative vectors.}
The second-order mixed partial derivative is:
\begin{align}
\frac{\partial^2 \eta_{k,t}^{(u)}}{\partial z_{k,t} \partial z_{j,t-1}} &= \frac{1}{\sigma_k^2} \frac{\partial f_k}{\partial z_{j,t-1}}\bigg|_{W^{(u)}} \notag \\
&= \frac{1}{\sigma_k^2} \sigma'([W^{(u)}\mathbf{z}_{t-1}]_k) \cdot W^{(u)}_{kj}.
\end{align}
Thus the derivative vector $\mathbf{v}_{k,t}^{(u)}$ (defined in Appendix~\ref{app:thm1}) satisfies:
\begin{equation}
\mathbf{v}_{k,t}^{(u)} = \frac{\sigma'([W^{(u)}\mathbf{z}_{t-1}]_k)}{\sigma_k^2} \cdot (W^{(u)})_{k,:}^\top,
\end{equation}
where $(W^{(u)})_{k,:}$ denotes the $k$-th row of $W^{(u)}$.

\paragraph{Step 2: Cross-domain differences.}
Recall $\delta W^{(u)} = W^{(u)} - W^{(0)}$. The difference between domains $u$ and $0$ yields:
\begin{align}
\mathbf{v}_{k,t}^{(u)} - \mathbf{v}_{k,t}^{(0)} &= \frac{1}{\sigma_k^2}\Big[\sigma'([W^{(u)}\mathbf{z}_{t-1}]_k)(W^{(u)})_{k,:}^\top \notag \\
&\qquad\quad - \sigma'([W^{(0)}\mathbf{z}_{t-1}]_k)(W^{(0)})_{k,:}^\top\Big].
\end{align}
By Taylor expansion around $W^{(0)}$ and using condition (i) that $\sigma' \neq 0$:
\begin{equation}
\mathbf{v}_{k,t}^{(u)} - \mathbf{v}_{k,t}^{(0)} = \frac{\sigma'([W^{(0)}\mathbf{z}_{t-1}]_k)}{\sigma_k^2} (\delta W^{(u)})_{k,:}^\top + O(\|\delta W^{(u)}\|^2).
\end{equation}

\paragraph{Step 3: Linear independence.}
The vectors $\{\mathbf{v}_{k,t}^{(u)} - \mathbf{v}_{k,t}^{(0)}\}_{u=1}^{K-1}$ inherit linear independence from $\{(\delta W^{(u)})_{k,:}\}_{u=1}^{K-1}$ when the scalar prefactor $\sigma'([W_{\mathrm{base}}\mathbf{z}_{t-1}]_k)/\sigma_k^2$ is nonzero (guaranteed by condition (i)).

However, Assumption~\ref{ass:independent} guarantees linear independence of the \emph{matrices} $\{\delta W^{(1)}, \ldots, \delta W^{(K-1)}\}$ in $\mathbb{R}^{d \times d}$, which does not immediately imply that the row vectors $\{(\delta W^{(u)})_{k,:}\}_{u=1}^{K-1}$ are linearly independent for each fixed $k$. We therefore require an additional regularity condition:

\begin{enumerate}
    \item[(iii)] \textbf{(Row-wise non-degeneracy)} For each component $k \in \{1, \ldots, d\}$, the row vectors $\{(\delta W^{(u)})_{k,:}\}_{u=1}^{K-1}$ are linearly independent in $\mathbb{R}^d$.
\end{enumerate}

This condition is satisfied generically: if the entries of $\{\delta W^{(u)}\}$ are drawn from any continuous distribution, row-wise non-degeneracy holds with probability one. Moreover, condition (iii) implies a \emph{capacity constraint}: since each row lies in $\mathbb{R}^d$, we require $K - 1 \leq d$, i.e., the number of distinguishable mechanisms cannot exceed the latent dimension plus one.

Under conditions (i)--(iii), combined with condition (ii) ensuring that $\mathbf{z}_{t-1}$ explores the latent space sufficiently, the concatenated vectors $\{\mathbf{s}_{k,t}\}$ across components and domains span a space of dimension at least $d(K-1)$.

A similar argument applies to the third-order derivatives $\dot{\mathbf{v}}_{k,t}^{(u)}$, using the second derivative $\sigma''$ and condition (i). The joint linear independence of $\{\mathbf{s}_{k,t}, \dot{\mathbf{s}}_{k,t}\}_{k=1}^d$ follows when $\sigma''$ is not identically proportional to $\sigma'$, which holds for common activations including LeakyReLU (where $\sigma'' = 0$ a.e.\ but the discontinuity at zero provides additional variation) and smooth activations like softplus or tanh.

\end{proof}

\begin{remark}[Verification of Conditions]
Condition (i) is satisfied by LeakyReLU ($\sigma'(x) = 1$ for $x > 0$, $\sigma'(x) = 0.2$ for $x < 0$), softplus, tanh, and other common activations. Condition (ii) is a mild regularity assumption that holds when the latent dynamics are ergodic or when training data covers diverse initial conditions. In our experiments, both conditions are satisfied by construction.
\end{remark}

\subsection{Preservation of Qualitative Structure}
\label{app:monotonicity}

We establish conditions under which the recovered mixing trajectory preserves monotonicity and relative ordering, justifying the claims in Remark~\ref{rem:calibration}.

\begin{proposition}[Monotonicity Preservation]
\label{prop:monotonicity}
Consider a one-dimensional mixing trajectory $\alpha^*(t): [0, T] \to [0, 1]$ that is strictly monotonic. Let $\hat{\alpha}(t)$ denote the recovered trajectory via the least-squares estimator of Theorem~\ref{thm:recovery}. Suppose:
\begin{enumerate}
    \item[(i)] The approximation residual satisfies $r(\alpha) = \alpha(1-\alpha) \cdot \mathbf{r}_2 + O(\epsilon^3)$ for some fixed vector $\mathbf{r}_2$ with $\|\mathbf{r}_2\| \leq C_2 \epsilon^2$;
    \item[(ii)] The perturbation-to-distinguishability ratio satisfies $\epsilon^2 / \sigma_{\min} < 1/2$.
\end{enumerate}
Then, in the noiseless case ($\hat{\boldsymbol{\epsilon}}_t = \mathbf{0}$), the recovered trajectory $\hat{\alpha}(t)$ is strictly monotonic with the same direction as $\alpha^*(t)$.
\end{proposition}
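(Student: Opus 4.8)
The plan is to reduce the statement to a one-dimensional claim about a scalar ``distortion map'' and to show that this map is strictly increasing on $[0,1]$. In the one-dimensional case $K=2$, so the basis matrix is a single column $\hat B = \delta\hat{\boldsymbol{\mu}}^{(1)} =: \mathbf{b} \in \mathbb{R}^d$, its unique singular value is $\sigma_{\min} = \|\mathbf{b}\|$, and $\hat B^\dagger = \mathbf{b}^\top/\|\mathbf{b}\|^2$. First I would combine the mean-shift model underlying Theorem~\ref{thm:recovery} with condition~(i): in the noiseless case $\hat{\mathbf{z}}_t - \hat{\boldsymbol{\mu}}^{(0)} = \mathbf{b}\,\alpha^*(t) + r(\alpha^*(t))$ with $r(\alpha) = \alpha(1-\alpha)\mathbf{r}_2 + O(\epsilon^3)$ and $r(0)=r(1)=\mathbf{0}$ (the residual vanishes at the vertices of the simplex, a natural consistency requirement). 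Applying $\hat B^\dagger$ and using $\hat{\boldsymbol{\epsilon}}_t = \mathbf{0}$ gives
\[
\hat\alpha(t) = \alpha^*(t) + c\,\alpha^*(t)\bigl(1-\alpha^*(t)\bigr) + \tilde r\bigl(\alpha^*(t)\bigr), \qquad c := \frac{\mathbf{b}^\top \mathbf{r}_2}{\|\mathbf{b}\|^2},
\]
where Cauchy--Schwarz gives $|c| \le \|\mathbf{r}_2\|/\sigma_{\min} \le C_2\epsilon^2/\sigma_{\min}$ and $\tilde r(\alpha) = O(\epsilon^3)$ uniformly on $[0,1]$ with $\tilde r(0)=\tilde r(1)=0$. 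The crucial structural point is that the entire distortion is a function of $\alpha^*(t)$ \emph{alone}, i.e.\ $\hat\alpha(t) = \Psi(\alpha^*(t))$ with $\Psi(a) := a + c\,a(1-a) + \tilde r(a)$; this is what lets us dispense with any regularity assumption on $\alpha^*$ itself.

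Second I would show $\Psi$ is strictly increasing on $[0,1]$. For the leading part $\psi(a) := a + c\,a(1-a)$ we have $\psi'(a) = 1 + c(1-2a) \ge 1 - |c|$ on $[0,1]$; condition~(ii) (more precisely the scaled form $C_2\epsilon^2/\sigma_{\min} < 1/2$) forces $|c| < 1/2$, hence $\psi'(a) \ge 1/2$, with $\psi(0)=0$ and $\psi(1)=1$. Since the conditional mean is a smooth function of the effective matrix $W(t)$, which is affine in $\alpha$, the cubic remainder $\tilde r$ is smooth with $\tilde r'(a) = O(\epsilon^3)$ uniformly on the compact interval $[0,1]$, so $\Psi'(a) = \psi'(a) + \tilde r'(a) \ge 1/2 - O(\epsilon^3) > 0$ for all $a \in [0,1]$. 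Thus $\Psi$ is a strictly increasing map of $[0,1]$ onto $[0,1]$.

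Finally, strict monotonicity of $\hat\alpha = \Psi \circ \alpha^*$, in the same direction as $\alpha^*$, is immediate: for $t_1 < t_2$, strict monotonicity of $\alpha^*$ places $\alpha^*(t_1)$ and $\alpha^*(t_2)$ in strict order, and the strictly increasing $\Psi$ preserves that order. (Composing afterwards with the simplex projection $\mathrm{Proj}_{\Delta^{1}}$, which for $K=2$ is just clipping to $[0,1]$, does not change this, since clipping preserves monotonicity.) I expect the main obstacle to be precisely the step upgrading condition~(i)'s sup-norm control $\tilde r(\alpha)=O(\epsilon^3)$ to a derivative (or Lipschitz) bound $\tilde r'(\alpha)=O(\epsilon^3)$: without it, a rapidly oscillating $O(\epsilon^3)$ remainder could break monotonicity across a near-flat region of $\alpha^*$, so one must appeal to smoothness of the generative model (uniform validity of the $\epsilon$-Taylor expansion of the conditional mean, differentiated in $\alpha$) rather than to condition~(i) as literally stated. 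A secondary point is the constant bookkeeping ensuring that the $1/2$ in condition~(ii) combines with $C_2$ so that the derivative lower bound stays strictly positive after absorbing the cubic term.
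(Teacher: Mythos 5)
Your proposal is correct and follows essentially the same route as the paper's proof: both define the scalar distortion map $\hat\alpha = \Phi(\alpha) = \alpha + \hat{B}^\dagger r(\alpha)$, differentiate the quadratic residual to get the $(1-2\alpha)$ factor, bound $|\Phi'(\alpha)-1| \le C_2\epsilon^2/\sigma_{\min} + O(\epsilon^3/\sigma_{\min}) < 1/2$ via condition (ii), and conclude by composing the strictly increasing $\Phi$ with the monotone $\alpha^*$. Your flagged caveat—that differentiating the $O(\epsilon^3)$ remainder requires smoothness beyond condition (i) as literally stated—is an implicit step the paper's proof also takes without comment.
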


\begin{proof}
Define the recovery map $\Phi: \alpha \mapsto \hat{\alpha} = \alpha + g(\alpha)$, where $g(\alpha) = \hat{B}^\dagger r(\alpha)$.

\paragraph{Preliminary: From Parameter to Expectation Independence.}
Assumption~\ref{ass:independent} establishes linear independence of $\{W^{(k)} - W^{(0)}\}_{k=1}^{K-1}$ in parameter space. We require that this independence transfers to the conditional expectation differences $\{\delta\boldsymbol{\mu}^{(k)}\}$ forming the basis matrix $B$. By Taylor expansion around $W^{(0)}$:
\begin{align}
\delta\boldsymbol{\mu}^{(k)} &= \mathbb{E}\left[\mathrm{diag}(\sigma'(W^{(0)}\mathbf{z}_{t-1})) \cdot (W^{(k)} - W^{(0)}) \mathbf{z}_{t-1}\right] \notag \\
&\quad + O(\|W^{(k)} - W^{(0)}\|^2).
\end{align}
Linear independence of $\{\delta\boldsymbol{\mu}^{(k)}\}$ follows when $\sigma' \neq 0$ (condition i) and the latent covariance is non-degenerate (condition ii), per Lemma~\ref{lem:variability_bridge}.

\paragraph{Step 1: Bounding the derivative of the error term.}
Under condition (i), the residual has the quadratic form $r(\alpha) = \alpha(1-\alpha) \cdot \mathbf{r}_2 + O(\epsilon^3)$. This form arises naturally from Taylor expansion when the transition function has bounded second derivatives; the factor $\alpha(1-\alpha)$ reflects that the residual vanishes at pure-domain endpoints ($\alpha \in \{0, 1\}$).

Taking the derivative with respect to $\alpha$:
\begin{equation}
\frac{dr}{d\alpha} = (1 - 2\alpha) \cdot \mathbf{r}_2 + O(\epsilon^3).
\end{equation}

Thus:
\begin{equation}
g'(\alpha) = \hat{B}^\dagger \frac{dr}{d\alpha} = (1 - 2\alpha) \cdot \hat{B}^\dagger \mathbf{r}_2 + O(\epsilon^3 / \sigma_{\min}).
\end{equation}

\paragraph{Step 2: Bounding the magnitude.}
Since $\|\hat{B}^\dagger\| = 1/\sigma_{\min}$ and $\|\mathbf{r}_2\| \leq C_2 \epsilon^2$:
\begin{align}
|g'(\alpha)| &\leq |1 - 2\alpha| \cdot \frac{C_2 \epsilon^2}{\sigma_{\min}} + O(\epsilon^3 / \sigma_{\min}) \notag \\
&\leq \frac{C_2 \epsilon^2}{\sigma_{\min}} + O(\epsilon^3 / \sigma_{\min}).
\end{align}

Under condition (ii), for sufficiently small $\epsilon$:
\begin{equation}
|g'(\alpha)| < \frac{1}{2} < 1.
\end{equation}

\paragraph{Step 3: Monotonicity preservation.}
The derivative of the recovery map is:
\begin{equation}
\Phi'(\alpha) = 1 + g'(\alpha) > 1 - |g'(\alpha)| > 1 - \frac{1}{2} = \frac{1}{2} > 0.
\end{equation}

Therefore $\Phi$ is strictly increasing. If $\alpha^*(t)$ is strictly increasing (resp.\ decreasing), then $\hat{\alpha}(t) = \Phi(\alpha^*(t))$ is also strictly increasing (resp.\ decreasing).
\end{proof}

\begin{corollary}[Preservation of Relative Ordering]
\label{cor:ordering}
Under the conditions of Proposition~\ref{prop:monotonicity}, for any two time points $t_1 < t_2$:
\begin{equation}
\alpha^*(t_1) < \alpha^*(t_2) \iff \hat{\alpha}(t_1) < \hat{\alpha}(t_2).
\end{equation}
In particular, transition timing, the time at which $\alpha^*(t)$ crosses any threshold $\tau \in (0,1)$, is preserved up to a small shift bounded by $O(\delta_{\mathrm{approx}} / \sigma_{\min})$.
\end{corollary}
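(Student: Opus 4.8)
The plan is to obtain both parts of the corollary essentially for free from the recovery map $\Phi(\alpha) = \alpha + g(\alpha)$, with $g(\alpha) = \hat{B}^\dagger r(\alpha)$, for which Proposition~\ref{prop:monotonicity} already established (in the noiseless regime, under its conditions (i)--(ii)) that $\Phi'(\alpha) > 1/2 > 0$ on $[0,1]$, so $\Phi$ is a strictly increasing bijection onto its image.

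First I would dispatch the order-preservation equivalence. Since $\hat{\alpha}(t) = \Phi(\alpha^*(t))$ and $\Phi$ is strictly increasing, for any $t_1 < t_2$ setting $a = \alpha^*(t_1)$, $b = \alpha^*(t_2)$ gives $a < b \iff \Phi(a) < \Phi(b)$, i.e.\ $\alpha^*(t_1) < \alpha^*(t_2) \iff \hat{\alpha}(t_1) < \hat{\alpha}(t_2)$. This is immediate and requires nothing beyond strict monotonicity of $\Phi$.

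Next I would handle the threshold-timing claim. Let $t_\tau$ satisfy $\alpha^*(t_\tau) = \tau$ and $\hat{t}_\tau$ satisfy $\hat{\alpha}(\hat{t}_\tau) = \tau$, both unique by strict monotonicity. Because $\hat{\alpha} = \Phi \circ \alpha^*$, the second equation reads $\alpha^*(\hat{t}_\tau) = \Phi^{-1}(\tau)$. I would then bound the displacement of $\Phi^{-1}$ from the identity: $|\Phi(s) - s| = |g(s)| \le \|\hat{B}^\dagger\|\,\|r(s)\| \le \delta_{\mathrm{approx}}/\sigma_{\min}$, using $\|\hat{B}^\dagger\| = 1/\sigma_{\min}$ and $\|r(s)\| \le \tfrac14\|\mathbf{r}_2\| + O(\epsilon^3) = O(\epsilon^2) = O(\delta_{\mathrm{approx}})$ from condition (i); combined with $\Phi' \ge 1/2$, the standard inverse-function estimate yields $|\Phi^{-1}(\tau) - \tau| \le 2\delta_{\mathrm{approx}}/\sigma_{\min}$. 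Hence $|\alpha^*(\hat{t}_\tau) - \alpha^*(t_\tau)| = O(\delta_{\mathrm{approx}}/\sigma_{\min})$, and finally a lower bound $|\dot{\alpha}^*| \ge c > 0$ near the crossing level converts this value-error into the time-error $|\hat{t}_\tau - t_\tau| \le \tfrac{2\delta_{\mathrm{approx}}}{c\,\sigma_{\min}} = O(\delta_{\mathrm{approx}}/\sigma_{\min})$.

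The hard part — really the only non-bookkeeping step — is this last conversion from an error in the $\alpha$-value to an error in time. Strict monotonicity of $\alpha^*$ alone does not control $|\hat{t}_\tau - t_\tau|$, since a strictly increasing $C^1$ function may have vanishing derivative at isolated points; what is needed is that the true transition speed is bounded away from zero near the crossing (or, more weakly, an inverse modulus-of-continuity bound for $\alpha^*$). I would therefore state this mild non-degeneracy of the transition schedule explicitly as the hypothesis under which the $O(\delta_{\mathrm{approx}}/\sigma_{\min})$ time-shift holds, absorbing the constant $1/c$ into the $O(\cdot)$; the value-level distortion bound $|\hat{\alpha}(t_\tau) - \tau| \le \delta_{\mathrm{approx}}/\sigma_{\min}$ and the equivalence itself hold unconditionally given Proposition~\ref{prop:monotonicity}.
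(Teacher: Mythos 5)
Your proof is correct and follows essentially the route the paper intends: the ordering equivalence is immediate from strict monotonicity of $\Phi$ established in Proposition~\ref{prop:monotonicity}, and the timing bound follows from $|\Phi(s)-s| = |g(s)| \le \delta_{\mathrm{approx}}/\sigma_{\min}$ together with $\Phi' \ge 1/2$ (the paper states the corollary without a separate proof, treating it as an immediate consequence). Your one substantive addition --- that converting the $\alpha$-value error into a \emph{time} shift requires $|\dot{\alpha}^*|$ bounded away from zero near the crossing, since strict monotonicity alone permits vanishing derivatives --- is a genuine and correct refinement of a hypothesis the paper's statement leaves implicit, and stating it explicitly is the right call.
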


\begin{remark}[Extension to Multiple Mixing Coefficients]
For $K > 2$ domains with $\boldsymbol{\alpha} \in \Delta^{K-1}$, the analysis extends component-wise. If each component $\alpha_k^*(t)$ is monotonic over a time interval and condition (ii) holds, then $\hat{\alpha}_k(t)$ preserves monotonicity on that interval. The relative ordering among components at any fixed time $t$ is preserved when the inter-component gaps exceed the recovery error: $|\alpha_k^*(t) - \alpha_l^*(t)| > 2\delta_{\mathrm{approx}} / \sigma_{\min}$.
\end{remark}

\begin{remark}[Role of Noise]
When observation noise is present, monotonicity holds in expectation but may be violated for individual samples. Theorem~\ref{thm:smooth_recovery} shows that temporal smoothing reduces the effective noise, restoring monotonicity with high probability when the smoothing window $w$ satisfies $w \gtrsim \sigma^2 / (\sigma_{\min}^2 \cdot |\dot{\alpha}^*|^2)$, where $|\dot{\alpha}^*|$ is the rate of change of the true trajectory.
\end{remark}

\subsection{Proof of Theorem~\ref{thm:recovery}: Pointwise Recovery}
\label{app:thm2}

\begin{proof}
By Theorem~\ref{thm:latent_ident}, the sufficient variability condition ensures that the learned representation satisfies $\hat{\mathbf{z}} = P \cdot h(\mathbf{z})$, where $P$ is a permutation matrix and $h = (h_1, \ldots, h_d)$ is component-wise invertible with each $h_i$ strictly monotonic.

The solver computes mixing coefficients via least-squares projection:
\begin{equation}
\hat{\boldsymbol{\alpha}} = \hat{B}^\dagger (\hat{\mathbf{z}}_t - \hat{\boldsymbol{\mu}}^{(0)}),
\end{equation}
where $\hat{B}^\dagger = (\hat{B}^\top \hat{B})^{-1} \hat{B}^\top$ is the Moore-Penrose pseudoinverse.

Under first-order Taylor expansion of $h$ around $\boldsymbol{\mu}^{(0)}$, the conditional expectation in learned space satisfies:
\begin{equation}
\hat{\boldsymbol{\mu}}_{\mathrm{mixed}}(\boldsymbol{\alpha}) = \hat{\boldsymbol{\mu}}^{(0)} + PD\sum_{k=1}^{K-1} \alpha_k \delta\boldsymbol{\mu}^{(k)} + O(\epsilon^2),
\end{equation}
where $D = \mathrm{diag}(h_1'(\mu_1^{(0)}), \ldots, h_d'(\mu_d^{(0)}))$ is the diagonal Jacobian. The diagonal structure follows directly from the component-wise nature of $h$ guaranteed by Theorem~\ref{thm:latent_ident}. The empirical basis vectors satisfy $\delta\hat{\boldsymbol{\mu}}^{(k)} = PD \cdot \delta\boldsymbol{\mu}^{(k)}$, yielding $\hat{B} = PDB$ where $B = [\delta\boldsymbol{\mu}^{(1)}, \ldots, \delta\boldsymbol{\mu}^{(K-1)}]$ is the basis matrix in the original latent space.

The observation decomposes as $\hat{\mathbf{z}}_t = \hat{\boldsymbol{\mu}}_{\mathrm{mixed}}(\boldsymbol{\alpha}) + \hat{\boldsymbol{\epsilon}}_t$. The conditional expectation under mixed dynamics can be written as:
\begin{equation}
\hat{\boldsymbol{\mu}}_{\mathrm{mixed}}(\boldsymbol{\alpha}) - \hat{\boldsymbol{\mu}}^{(0)} = \hat{B}\boldsymbol{\alpha} + r(\boldsymbol{\alpha}),
\end{equation}
where $r(\boldsymbol{\alpha}) := \hat{\boldsymbol{\mu}}_{\mathrm{mixed}}(\boldsymbol{\alpha}) - \hat{\boldsymbol{\mu}}^{(0)} - \hat{B}\boldsymbol{\alpha}$ is the first-order approximation residual satisfying $\|r(\boldsymbol{\alpha})\| \leq \delta_{\mathrm{approx}}$ by definition.

Substituting into the solver expression:
\begin{equation}
\hat{\boldsymbol{\alpha}} = \hat{B}^\dagger (\hat{B}\boldsymbol{\alpha} + r(\boldsymbol{\alpha}) + \hat{\boldsymbol{\epsilon}}_t) = \hat{B}^\dagger \hat{B} \boldsymbol{\alpha} + \hat{B}^\dagger r(\boldsymbol{\alpha}) + \hat{B}^\dagger \hat{\boldsymbol{\epsilon}}_t.
\end{equation}

Since $P$ is orthogonal (hence invertible), $D$ is diagonal with nonzero entries (strict monotonicity of each $h_i$ implies $h_i' \neq 0$), and $B$ has full column rank by Assumption~\ref{ass:independent}, the product $\hat{B} = PDB$ also has full column rank. To see this, note that $\mathrm{rank}(\hat{B}) = \mathrm{rank}(PDB) = \mathrm{rank}(DB) = \mathrm{rank}(B)$, where the equalities follow from $P$ and $D$ being invertible. The signs of $h_i'$ do not affect this rank preservation. Therefore $\hat{B}^\dagger \hat{B} = (\hat{B}^\top \hat{B})^{-1} \hat{B}^\top \hat{B} = I_{K-1}$, and:
\begin{equation}
\hat{\boldsymbol{\alpha}} - \boldsymbol{\alpha} = \hat{B}^\dagger r(\boldsymbol{\alpha}) + \hat{B}^\dagger \hat{\boldsymbol{\epsilon}}_t.
\end{equation}

Taking norms:
\begin{equation}
\|\hat{\boldsymbol{\alpha}} - \boldsymbol{\alpha}\| \leq \|\hat{B}^\dagger\| \cdot \|r(\boldsymbol{\alpha})\| + \|\hat{B}^\dagger\| \cdot \|\hat{\boldsymbol{\epsilon}}_t\|.
\end{equation}

The operator norm of the pseudoinverse satisfies $\|\hat{B}^\dagger\| = 1/\sigma_{\min}(\hat{B})$. Combined with $\|r(\boldsymbol{\alpha})\| \leq \delta_{\mathrm{approx}}$, this yields the stated bound.

When the first-order approximation is exact ($\delta_{\mathrm{approx}} = 0$) and noise is absent ($\hat{\boldsymbol{\epsilon}}_t = \mathbf{0}$), we have $r(\boldsymbol{\alpha}) = \mathbf{0}$ and thus $\hat{\boldsymbol{\alpha}} = \boldsymbol{\alpha}$.
\end{proof}

\subsection{Proof of Theorem~\ref{thm:smooth_recovery}: Smooth Trajectory Recovery}
\label{app:thm3}

We first state the additional assumptions required for this theorem.

\begin{assumption}[Smooth Trajectory]
\label{app:ass:smooth}
The true mixing trajectory has bounded total variation:
\begin{equation}
\mathrm{TV}(\boldsymbol{\alpha}^*) := \sum_{t=1}^{T-1} \|\boldsymbol{\alpha}^*(t+1) - \boldsymbol{\alpha}^*(t)\| \leq V.
\end{equation}
This bound is natural for continuous transitions: smoothly interpolating between two atomic mechanisms over $T$ steps yields $\mathrm{TV} \approx 1$, while erratic switching would have $\mathrm{TV} \approx T$.
\end{assumption}

\begin{assumption}[Sub-Gaussian Noise]
\label{app:ass:subgaussian}
The observation noise $\hat{\boldsymbol{\epsilon}}_t$ is independent across time with $\mathbb{E}[\hat{\boldsymbol{\epsilon}}_t] = \mathbf{0}$ and sub-Gaussian parameter $\sigma$: for all $\mathbf{v} \in \mathbb{R}^d$, $\mathbb{E}[\exp(\mathbf{v}^\top \hat{\boldsymbol{\epsilon}}_t)] \leq \exp(\sigma^2 \|\mathbf{v}\|^2 / 2)$.
\end{assumption}

\begin{proof}
\textbf{Step 1: Matrix Formulation.}
Let $\mathbf{y}_t = \hat{\mathbf{z}}_t - \hat{\boldsymbol{\mu}}^{(0)}$. Stack all time steps:
\begin{align}
\mathbf{Y} &= (\mathbf{y}_1^\top, \ldots, \mathbf{y}_T^\top)^\top \in \mathbb{R}^{Td}, \notag \\
\boldsymbol{\alpha} &= (\boldsymbol{\alpha}_1^\top, \ldots, \boldsymbol{\alpha}_T^\top)^\top \in \mathbb{R}^{T(K-1)}.
\end{align}
Define the block-diagonal design matrix $\mathbf{A} = I_T \otimes \hat{B} \in \mathbb{R}^{Td \times T(K-1)}$ and first-difference matrix $D = D_1 \otimes I_{K-1}$ where
\begin{equation}
D_1 = \begin{pmatrix} -1 & 1 & & \\ & -1 & 1 & \\ & & \ddots & \ddots \end{pmatrix} \in \mathbb{R}^{(T-1) \times T}.
\end{equation}

The regularized estimator has closed-form solution:
\begin{equation}
\hat{\boldsymbol{\alpha}}^{\mathrm{smooth}} = (\mathbf{A}^\top \mathbf{A} + \lambda D^\top D)^{-1} \mathbf{A}^\top \mathbf{Y} =: M_\lambda \mathbf{Y}.
\end{equation}

\textbf{Step 2: Error Decomposition.}
The true observation satisfies $\mathbf{Y} = \mathbf{A}\boldsymbol{\alpha}^* + \boldsymbol{\epsilon} + \mathbf{r}$, where $\boldsymbol{\epsilon}$ is stacked noise and $\mathbf{r}$ is the approximation residual with $\|\mathbf{r}\|^2 \leq T \delta_{\mathrm{approx}}^2$. The error decomposes as:
\begin{equation}
\hat{\boldsymbol{\alpha}}^{\mathrm{smooth}} - \boldsymbol{\alpha}^* = \underbrace{(M_\lambda \mathbf{A} - I)\boldsymbol{\alpha}^*}_{\text{bias}} + \underbrace{M_\lambda \boldsymbol{\epsilon}}_{\text{variance}} + \underbrace{M_\lambda \mathbf{r}}_{\text{approximation}}.
\end{equation}

\textbf{Step 3: Bias Bound.}
The bias term satisfies:
\begin{equation}
M_\lambda \mathbf{A} - I = -\lambda(\mathbf{A}^\top \mathbf{A} + \lambda D^\top D)^{-1} D^\top D.
\end{equation}
Since $\mathbf{A}^\top \mathbf{A} = I_T \otimes (\hat{B}^\top \hat{B}) \succeq \sigma_{\min}^2 I$, we have $(\mathbf{A}^\top \mathbf{A} + \lambda D^\top D)^{-1} \preceq \sigma_{\min}^{-2} I$. Therefore:
\begin{equation}
\|(M_\lambda \mathbf{A} - I)\boldsymbol{\alpha}^*\| \leq \frac{\lambda}{\sigma_{\min}^2} \|D^\top D\boldsymbol{\alpha}^*\| \leq \frac{\lambda}{\sigma_{\min}^2} \|D\boldsymbol{\alpha}^*\|.
\end{equation}
To bound $\|D\boldsymbol{\alpha}^*\|^2$, we use the fact that $\max_t \|\Delta_t\| \leq \mathrm{TV}(\boldsymbol{\alpha}^*) \leq V$:
\begin{equation}
\|D\boldsymbol{\alpha}^*\|^2 = \sum_{t=1}^{T-1}\|\Delta_t\|^2 \leq \max_t \|\Delta_t\| \cdot \sum_t \|\Delta_t\| \leq V \cdot V = V^2.
\end{equation}

\textbf{Step 4: Variance Bound.}
For the variance term, note that $M_\lambda = (\mathbf{A}^\top \mathbf{A} + \lambda D^\top D)^{-1} \mathbf{A}^\top$. Under sub-Gaussian noise:
\begin{equation}
\mathbb{E}\|M_\lambda \boldsymbol{\epsilon}\|^2 = \sigma^2 \mathrm{tr}(M_\lambda M_\lambda^\top).
\end{equation}
Since $M_\lambda M_\lambda^\top = (\mathbf{A}^\top \mathbf{A} + \lambda D^\top D)^{-1} \mathbf{A}^\top \mathbf{A} (\mathbf{A}^\top \mathbf{A} + \lambda D^\top D)^{-1}$ and $\mathbf{A}^\top \mathbf{A} \preceq \|\hat{B}\|^2 I$, standard matrix trace bounds yield:
\begin{equation}
\mathrm{tr}(M_\lambda M_\lambda^\top) \leq \frac{T(K-1)}{\sigma_{\min}^2 + \lambda \cdot c_D},
\end{equation}
where $c_D > 0$ depends on the structure of $D^\top D$. For $\lambda$ in the relevant range, this simplifies to $O(T(K-1) / (\sigma_{\min}^2 \lambda))$.

\textbf{Step 5: Approximation Error.}
By the operator norm bound on $M_\lambda$:
\begin{equation}
\|M_\lambda \mathbf{r}\|^2 \leq \|M_\lambda\|^2 \|\mathbf{r}\|^2 \leq \frac{T \delta_{\mathrm{approx}}^2}{\sigma_{\min}^2}.
\end{equation}

\textbf{Step 6: Combining and Optimizing.}
The mean squared error decomposes into bias, variance, and approximation terms. The bias scales as $O(\lambda^2 V^2 / T)$ from Step 3, the variance involves the trace of the smoothing matrix from Step 4, and the approximation error contributes $O(\delta_{\mathrm{approx}}^2 / \sigma_{\min}^2)$ from Step 5.

For signals with bounded total variation, the bias-variance tradeoff with quadratic smoothing penalties is well-studied~\citep{tibshirani2014adaptive}. The key insight is that the effective degrees of freedom of the smoother scale as $O(T/\lambda^{1/2})$, leading to variance that decreases more slowly than the naive $O(1/\lambda^2)$ rate suggested by pointwise analysis. Balancing these terms yields the optimal regularization $\lambda^* \asymp T^{1/3}$ and the minimax optimal rate:
\begin{align}
&\frac{1}{T}\sum_{t=1}^T \mathbb{E}\|\hat{\boldsymbol{\alpha}}^{\mathrm{smooth}}_t - \boldsymbol{\alpha}^*_t\|^2 \notag \\
&\quad = O\!\left(\frac{(V\sigma)^{2/3}(K-1)^{1/3}}{\sigma_{\min}^{2/3} T^{2/3}} + \frac{\delta_{\mathrm{approx}}^2}{\sigma_{\min}^2}\right).
\end{align}

This $O(T^{-2/3})$ rate matches the minimax rate for nonparametric regression with bounded total variation constraints.

\end{proof}

\begin{remark}[Effect of Simplex Projection]
The analysis above is for the unconstrained regularized estimator. Since the true trajectory satisfies $\boldsymbol{\alpha}^*(t) \in \Delta^{K-1}$ for all $t$, the simplex projection in Algorithm~\ref{alg:inference} can only reduce the $\ell_2$ error to the true trajectory. Therefore, the stated bound remains valid for the projected estimator.
\end{remark}

\paragraph{Efficient Implementation.}
The regularized estimator admits $O(TK)$ complexity. When the simplex constraint is relaxed, the problem decouples across the $K-1$ components. For each component $k$, the system reduces to solving $(I + \lambda D_1^\top D_1)\tilde{\alpha}^{(k)} = \tilde{y}^{(k)}$, where $I + \lambda D_1^\top D_1$ is symmetric tridiagonal and solvable in $O(T)$ via the Thomas algorithm. After solving, we project each $\boldsymbol{\alpha}_t$ onto the simplex $\Delta^{K-1}$.

\paragraph{Selection of Regularization Parameter.}
When trajectory smoothness $V$ and noise level $\sigma$ are unknown, generalized cross-validation (GCV) provides automatic selection. In practice, $\lambda \in [0.5, 5]$ works well across diverse settings, with performance relatively insensitive to the exact choice within this range.

\subsection{Proof of Approximation Error Scaling}
\label{app:prop}

This section proves the claim in Theorem~\ref{thm:recovery} that the first-order approximation error satisfies $\delta_{\mathrm{approx}} = O(\epsilon^2)$, where $\epsilon := \max_k \|W^{(k)} - W^{(0)}\|$ is the maximum perturbation magnitude.

\begin{proof}
The proof proceeds in three steps, tracking error contributions from both the transition dynamics and the learned representation mapping.

\paragraph{Step 1: Taylor expansion in original latent space.}
Under mixed dynamics, $W_{\mathrm{mixed}} = \sum_k \alpha_k W^{(k)} = W^{(0)} + \sum_{k=1}^{K-1} \alpha_k \delta W^{(k)}$, where $\delta W^{(k)} := W^{(k)} - W^{(0)}$. For smooth activations (e.g., softplus, tanh), we expand $f(\mathbf{z}_{t-1}; W)$ around $W^{(0)}$. For piecewise-linear activations such as LeakyReLU, the expansion holds almost everywhere; the set of non-differentiable points has measure zero under continuous input distributions, and the analysis extends via directional derivatives. Proceeding with the expansion:
\begin{align}
&f(\mathbf{z}_{t-1}; W^{(0)} + \delta W) \notag \\
&\quad = f(\mathbf{z}_{t-1}; W^{(0)}) + \nabla_W f \big|_{W^{(0)}} \cdot \mathrm{vec}(\delta W) \notag \\
&\quad\quad + \frac{1}{2} \mathrm{vec}(\delta W)^\top H_f \, \mathrm{vec}(\delta W) + O(\|\delta W\|^3),
\end{align}
where $H_f = \frac{\partial^2 f}{\partial W^2}|_{W^{(0)}}$ is the Hessian tensor.

For $\boldsymbol{\alpha} \in \Delta^{K-1}$, we have $\|\delta W\| = \|\sum_k \alpha_k \delta W^{(k)}\| \leq \max_k \|\delta W^{(k)}\| \leq \epsilon$. Define the residual in original space:
\begin{equation}
r_f(\boldsymbol{\alpha}) := \boldsymbol{\mu}_{\mathrm{mixed}}(\boldsymbol{\alpha}) - \boldsymbol{\mu}^{(0)} - \sum_{k=1}^{K-1} \alpha_k \delta\boldsymbol{\mu}^{(k)}.
\end{equation}
By the Taylor expansion, $\|r_f(\boldsymbol{\alpha})\| \leq C_f \cdot \epsilon^2$ where $C_f = \frac{1}{2}\|H_f\|_{\mathrm{op}}$.

\paragraph{Step 2: Taylor expansion of the component-wise transformation.}
By Theorem~\ref{thm:latent_ident}, $\hat{\mathbf{z}} = P \cdot h(\mathbf{z})$ where $h = (h_1, \ldots, h_d)$ is component-wise invertible. Denoting $\boldsymbol{\delta}_\mu := \boldsymbol{\mu}_{\mathrm{mixed}} - \boldsymbol{\mu}^{(0)}$ for brevity and expanding $h$ around $\boldsymbol{\mu}^{(0)}$:
\begin{equation}
h(\boldsymbol{\mu}_{\mathrm{mixed}}) = h(\boldsymbol{\mu}^{(0)}) + D \cdot \boldsymbol{\delta}_\mu + \frac{1}{2} H_h \odot \boldsymbol{\delta}_\mu^{\odot 2} + O(\epsilon^3),
\end{equation}
where $D = \mathrm{diag}(h_1'(\mu_1^{(0)}), \ldots, h_d'(\mu_d^{(0)}))$ is the diagonal Jacobian and $H_h = \mathrm{diag}(h_1''(\mu_1^{(0)}), \ldots, h_d''(\mu_d^{(0)}))$ captures the second-order terms. The diagonal structure follows directly from the component-wise nature of $h$ guaranteed by Theorem~\ref{thm:latent_ident}.

From Step 1, $\boldsymbol{\mu}_{\mathrm{mixed}} - \boldsymbol{\mu}^{(0)} = \sum_k \alpha_k \delta\boldsymbol{\mu}^{(k)} + r_f(\boldsymbol{\alpha})$, which has norm $O(\epsilon)$. Therefore the second-order term from $h$ contributes $O(\epsilon^2)$.

\paragraph{Step 3: Combining the errors.}
In the learned space, the first-order approximation is:
\begin{equation}
\hat{\boldsymbol{\mu}}^{(0)} + \hat{B}\boldsymbol{\alpha} = Ph(\boldsymbol{\mu}^{(0)}) + PD \sum_{k=1}^{K-1} \alpha_k \delta\boldsymbol{\mu}^{(k)}.
\end{equation}

The true value is $\hat{\boldsymbol{\mu}}_{\mathrm{mixed}}(\boldsymbol{\alpha}) = P \cdot h(\boldsymbol{\mu}_{\mathrm{mixed}}(\boldsymbol{\alpha}))$. The difference consists of two contributions: (1) the transformation $h$ acting on the residual $r_f(\boldsymbol{\alpha})$, contributing $PD \cdot r_f(\boldsymbol{\alpha}) = O(\epsilon^2)$; and (2) the second-order term of $h$, contributing $O(\epsilon^2)$.

Combining these, $\delta_{\mathrm{approx}} \leq C \cdot \epsilon^2$ where the explicit form of $C$ is derived in Section~\ref{app:constant}.
\end{proof}

\subsection{Explicit Form of the Approximation Constant}
\label{app:constant}

We derive the explicit form of the constant $C$ appearing in Theorem~\ref{thm:recovery}. The approximation error $\delta_{\mathrm{approx}}$ arises from two sources: the Taylor expansion of the transition function $f$ in the original latent space, and the Taylor expansion of the component-wise transformation $h$ that maps true latents to learned latents.

The two error contributions are:
\begin{enumerate}
    \item The transformation $h$ acting on $r_f(\boldsymbol{\alpha})$: contributes $\|PD \cdot r_f(\boldsymbol{\alpha})\| \leq \|D\|_{\mathrm{op}} \cdot C_f \cdot \epsilon^2$
    \item The second-order term of $h$: contributes $\frac{1}{2}\|H_h\|_\infty \cdot \|\boldsymbol{\mu}_{\mathrm{mixed}} - \boldsymbol{\mu}^{(0)}\|^2 \leq \frac{1}{2}\|H_h\|_\infty \cdot M_\mu^2 \cdot \epsilon^2$
\end{enumerate}
where $M_\mu := \sup_{\boldsymbol{\alpha}} \|\sum_k \alpha_k \delta\boldsymbol{\mu}^{(k)}\| / \epsilon$ is the normalized magnitude of the conditional expectation shift.

Combining these bounds yields the explicit form:
\begin{equation}
\boxed{C = \|D\|_{\mathrm{op}} \cdot \frac{\|H_f\|_{\mathrm{op}}}{2} + \frac{\|H_h\|_\infty \cdot M_\mu^2}{2}},
\end{equation}
where:
\begin{itemize}
    \item $\|D\|_{\mathrm{op}} = \max_i |h_i'(\mu_i^{(0)})|$ is the maximum slope of the component-wise transformation at the baseline
    \item $\|H_f\|_{\mathrm{op}}$ is the operator norm of the transition function's Hessian with respect to $W$
    \item $\|H_h\|_\infty = \max_i |h_i''(\mu_i^{(0)})|$ is the maximum curvature of the component-wise transformation
    \item $M_\mu$ is the normalized magnitude of domain-induced shifts in conditional expectation
\end{itemize}

\paragraph{Interpretation.}
The constant $C$ captures two distinct sources of nonlinearity: (1) curvature of the transition dynamics in the weight space (via $H_f$), and (2) curvature of the learned representation mapping (via $H_h$). When the learned encoder uses smooth activations (e.g., tanh, softplus) and the transition function has bounded second derivatives, both terms remain finite.

\paragraph{Empirical Estimation.}
While the theoretical expression for $C$ involves quantities that are difficult to compute directly, $\delta_{\mathrm{approx}}$ itself can be estimated empirically when mixed-domain validation data is available. By generating data at known intermediate mixing coefficients $\boldsymbol{\alpha} \in \{0.25, 0.5, 0.75\}$ and comparing $\hat{\boldsymbol{\mu}}_{\mathrm{mixed}}(\boldsymbol{\alpha})$ against the linear prediction $\hat{\boldsymbol{\mu}}^{(0)} + \hat{B}\boldsymbol{\alpha}$, one obtains a direct estimate of the approximation error without needing to compute $C$ explicitly.

\subsection{Analysis of Non-Ideal Conditions}
\label{app:nonideal}

\paragraph{Effect of Observation Noise.}
When observation noise is present, the estimator becomes $\hat{\boldsymbol{\alpha}} = \boldsymbol{\alpha} + \hat{B}^\dagger r(\boldsymbol{\alpha}) + \hat{B}^\dagger \hat{\boldsymbol{\epsilon}}_t$. If $\hat{\boldsymbol{\epsilon}}_t$ is zero-mean with covariance $\Sigma_\epsilon$ and independent of $\boldsymbol{\alpha}$, then:
\begin{align}
\mathbb{E}[\hat{\boldsymbol{\alpha}}] &= \boldsymbol{\alpha} + \hat{B}^\dagger r(\boldsymbol{\alpha}), \\
\mathrm{Var}(\hat{\boldsymbol{\alpha}}) &= \hat{B}^\dagger \Sigma_\epsilon (\hat{B}^\dagger)^\top.
\end{align}

The estimator has bias bounded by $\delta_{\mathrm{approx}}/\sigma_{\min}$, and variance that can be reduced by averaging across $T$ time points: $\mathrm{Var}(\bar{\boldsymbol{\alpha}}) = \frac{1}{T} \hat{B}^\dagger \Sigma_\epsilon (\hat{B}^\dagger)^\top$.

\paragraph{Effect of Basis Estimation Error.}
In practice, the empirical basis matrix may contain estimation error: $\hat{B}_{\mathrm{emp}} = \hat{B} + E$. When $\|E\|$ is small relative to $\sigma_{\min}$, first-order perturbation analysis gives:
\begin{equation}
\hat{\boldsymbol{\alpha}} \approx (I - \hat{B}^\dagger E)\boldsymbol{\alpha} + \hat{B}^\dagger r(\boldsymbol{\alpha}) + \hat{B}^\dagger \hat{\boldsymbol{\epsilon}}_t + O(\|E\|^2).
\end{equation}

This analysis shows that basis estimation error introduces a linear transformation $M = I - \hat{B}^\dagger E$ between true and recovered coefficients. When ground-truth boundary conditions are available (i.e., trajectories with known one-hot $\boldsymbol{\alpha}$ values at endpoints), $M$ can be estimated and inverted for calibration as described in Remark~\ref{rem:calibration}.

\subsection{Summary of Assumptions}
\label{app:assumptions}

For reference, we summarize all assumptions required for the theoretical results. We use the notation $\delta W^{(k)} := W^{(k)} - W^{(0)}$ throughout, treating domain $0$ as baseline.

\begin{itemize}
\item[\textbf{A1}] (Invertible Mixing) The observation function $g: \mathbb{R}^d \to \mathbb{R}^p$ is invertible.

\item[\textbf{A2}] (Conditional Independence) The learned representation $\hat{\mathbf{z}}_t$ has mutually independent components conditional on $\hat{\mathbf{z}}_{t-1}$.

\item[\textbf{A3}] (Sufficient Variability) The function vectors $\{\mathbf{s}_{k,t}, \dot{\mathbf{s}}_{k,t}\}_{k=1}^d$ derived from the conditional log-densities are linearly independent.

\item[\textbf{A3'}] (Row-wise Non-degeneracy) For each component $k \in \{1, \ldots, d\}$, the row vectors $\{(\delta W^{(u)})_{k,:}\}_{u=1}^{K-1}$ are linearly independent in $\mathbb{R}^d$. This is a sufficient condition for A3 (via Lemma~\ref{lem:variability_bridge}) and implies the capacity constraint $K \leq d + 1$.

\item[\textbf{A4}] (Basis Full Rank) The basis matrix $\hat{B}$ has full column rank, i.e., $\sigma_{\min}(\hat{B}) > 0$.

\item[\textbf{A5}] (Smoothness) The transition function $f$ is twice continuously differentiable in $W$, and the component-wise transformation $h$ is twice continuously differentiable in a neighborhood of $\boldsymbol{\mu}^{(0)}$.

\item[\textbf{A6}] (Trajectory Regularity) For Theorem~\ref{thm:smooth_recovery}: the true trajectory has bounded total variation $\mathrm{TV}(\boldsymbol{\alpha}^*) \leq V$, and noise is i.i.d.\ sub-Gaussian with parameter $\sigma$.
\end{itemize}

Assumptions A1--A3 are standard in temporal causal representation learning~\citep{yaoTemporallyDisentangledRepresentation2022}. Assumption A3' provides a verifiable sufficient condition for A3 in our setting; it is satisfied generically when mechanism perturbations are drawn from continuous distributions. Assumptions A4--A5 are mild regularity conditions. A6 is required only for the improved $O(T^{-2/3})$ rate in Theorem~\ref{thm:smooth_recovery}.

\paragraph{Correspondence with Main Text.} Assumption~\ref{ass:independent} (Distinguishable Mechanisms) in Section~\ref{sec:formulation} states that $\{W^{(k)} - W^{(0)}\}_{k=1}^{K-1}$ are linearly independent. Combined with row-wise non-degeneracy (A3'), this implies A3 via Lemma~\ref{lem:variability_bridge}, and directly implies A4.

\section{Algorithm}
\label{app:algorithm}

\begin{algorithm}[h]
\caption{Mechanism Trajectory Inference}
\label{alg:inference}
\begin{algorithmic}[1]
\STATE \textbf{Input:} Trained encoder $g^{-1}_\phi$, basis matrix $\hat{B}$, baseline mean $\hat{\boldsymbol{\mu}}^{(0)}$, test trajectory $\{\mathbf{x}_t\}_{t=1}^T$, window size $w$
\STATE \textbf{Output:} Mixing coefficients $\{\bar{\boldsymbol{\alpha}}(t)\}_{t=1}^T$
\STATE Compute pseudoinverse $\hat{B}^\dagger \leftarrow (\hat{B}^\top \hat{B})^{-1} \hat{B}^\top$
\FOR{$t = L+1$ to $T$}
    \STATE $\mathbf{z}_t \leftarrow \boldsymbol{\mu}_t$ from $g^{-1}_\phi(\mathbf{x}_t, \mathbf{x}_{t-L:t-1})$ \hfill \textit{// posterior mean}
    \STATE $\hat{\boldsymbol{\alpha}}(t) \leftarrow \mathrm{Proj}_{\Delta^{K-1}}\left(\hat{B}^\dagger (\mathbf{z}_t - \hat{\boldsymbol{\mu}}^{(0)})\right)$ \hfill \textit{// Theorem~\ref{thm:recovery}}
\ENDFOR
\STATE \textit{// Temporal smoothing (Theorem~\ref{thm:smooth_recovery})}
\FOR{$t = L+1+w$ to $T-w$}
    \STATE $\bar{\boldsymbol{\alpha}}(t) \leftarrow \frac{1}{2w+1}\sum_{s=t-w}^{t+w} \hat{\boldsymbol{\alpha}}(s)$
\ENDFOR
\STATE \textbf{return} $\{\bar{\boldsymbol{\alpha}}(t)\}$ \hfill \textit{// Boundary points use unsmoothed $\hat{\boldsymbol{\alpha}}$}
\end{algorithmic}
\end{algorithm}

\section{Baseline Methods}
\label{app:baselines}

We compare TRACE against the following temporal causal representation learning methods:

\paragraph{TDRL~\citep{yaoTemporallyDisentangledRepresentation2022}.} 
Temporally Disentangled Representation Learning recovers latent causal variables by exploiting time-delayed dependencies. It assumes that latent variables follow a first-order Markov process with domain-specific transition distributions, achieving identifiability through sufficient variability across discrete domains.

\paragraph{NCTRL~\citep{song2023temporally}.} 
Nonstationary Causal Temporal Representation Learning extends TDRL to handle unknown domain labels by introducing a learnable gating mechanism. We evaluate two variants: NCTRL-hard uses discrete one-hot gating, while NCTRL-soft allows probabilistic routing. Both assume discrete mechanism switches rather than continuous transitions.

\paragraph{LEAP~\citep{yao2021learning}.} 
Latent Causal Process Recovery learns temporally causal representations by enforcing sparsity constraints on the transition function. It requires known domain labels and assumes discrete mechanism changes across domains.

\paragraph{iVAE~\citep{khemakhem2020variational}.} 
Identifiable Variational Autoencoder achieves nonlinear ICA identifiability by conditioning on auxiliary variables (e.g., domain labels). While not specifically designed for temporal data, it provides a strong baseline for identifiable representation learning with discrete auxiliary information.

\paragraph{PCL~\citep{hyvarinen2017nonlinear}.} 
Permutation Contrastive Learning achieves nonlinear ICA identifiability by exploiting temporal dependencies in stationary time series. It learns to discriminate true temporal sequences from permuted ones, recovering independent components up to permutation and component-wise transformation.

All baselines assume either fixed causal mechanisms or discrete switches between mechanisms. In contrast, TRACE explicitly models continuous mechanism transitions as convex combinations of atomic mechanisms.

\paragraph{Methods Not Compared.}
The following recent methods address settings orthogonal to ours and are therefore not included as baselines.

\paragraph{IDOL~\citep{liIdentificationTemporallyCausal2024}.}
IDOL identifies temporally causal representations with instantaneous dependencies by imposing sparse influence constraints. 
It targets settings where latent variables have same-timestep causal effects. 
Our formulation explicitly assumes no instantaneous effects (Eq.~\ref{eq:transition}), making IDOL's machinery unnecessary for our setting.

\paragraph{CaRiNG~\citep{chenlearning}.}
CaRiNG extends temporal causal representation learning to non-invertible generation processes, addressing scenarios with information loss such as 3D-to-2D projection. 
Our framework assumes invertible mixing (Assumption in Theorem~\ref{thm:latent_ident}), a standard setting where CaRiNG's relaxation is not required.

\paragraph{CtrlNS~\citep{song2024causal}.}
CtrlNS identifies discrete regimes without domain labels by assuming \emph{sparse transitions}, where mechanisms remain stable most of the time with only occasional discrete switches.
This sparsity assumption is fundamentally incompatible with our continuous transition setting, where $\boldsymbol{\alpha}(t)$ evolves at every timestep. 
Applying CtrlNS to such data would violate its identifiability conditions.

\paragraph{CHiLD~\citep{li2025towards}.}
CHiLD addresses hierarchical latent dynamics where multi-layer latent variables exhibit temporal dependencies across different levels of abstraction. 
Our framework assumes a single layer of latent causal variables, a setting where hierarchical modeling is not applicable.

\section{Experimental Details}
\label{app:exp_details}

This section provides implementation details for all experiments in Section~\ref{sec:experiments}.

\subsection{Synthetic Data Generation}
\label{app:synthetic_details}

We define a base transition matrix $W_{\text{base}} \in \mathbb{R}^{8 \times 8}$ and construct $K=5$ atomic mechanisms with $W^{(k)} = W_{\text{base}} + \delta W^{(k)}$, where each $\delta W^{(k)}$ modifies a distinct off-diagonal entry. This ensures linear independence per Assumption~\ref{ass:independent}.

Latent variables evolve according to second-order Markov dynamics:
\begin{equation}
\mathbf{z}_t = \sigma\left(\sigma(\mathbf{z}_{t-1} W^{(k)}) + \sigma(\mathbf{z}_{t-2} W_{\text{lag2}})\right) + \boldsymbol{\epsilon}_t,
\end{equation}
where $\sigma$ is LeakyReLU with negative slope 0.2 and $\boldsymbol{\epsilon}_t \sim \mathcal{N}(\mathbf{0}, 0.1^2 \mathbf{I})$. High-dimensional observations $\mathbf{x}_t \in \mathbb{R}^{16}$ are generated via an orthogonal MLP ensuring invertibility.

For evaluation, mixed trajectories use time-varying transition matrices:
\begin{equation}
W_{\text{mixed}}(t) = W_{\text{base}} + \sum_{k=1}^{K} w_k(t) \cdot \delta W^{(k)},
\end{equation}
with $w_k(t)$ following linear or sinusoidal interpolation schedules.

\paragraph{Training Details.}
We train on 40,000 trajectories per domain (200,000 total), each of length 50. The encoder is a 3-layer MLP with hidden dimension 128 and LeakyReLU activations. Each expert is implemented as a 2-layer masked autoregressive flow. We use Adam optimizer with learning rate $10^{-4}$ and train for 100 epochs with batch size 256.

\subsection{CartPole Environment}
\label{app:cartpole_details}

We construct a CartPole-inspired dynamical system with latent state $\mathbf{z}_t = (x, v, \theta, \omega)^\top \in \mathbb{R}^4$ representing cart position, velocity, pole angle, and angular velocity.

\paragraph{Dynamics.}
The system follows second-order Markov dynamics:
\begin{equation}
\mathbf{z}_t = \sigma(\sigma(\mathbf{z}_{t-1} W^{(k)}) + \sigma(\mathbf{z}_{t-2} W_{\mathrm{lag2}})) + \boldsymbol{\epsilon}_t,
\end{equation}
where $\sigma$ is LeakyReLU and $\boldsymbol{\epsilon}_t \sim \mathcal{N}(\mathbf{0}, 0.05^2 \mathbf{I})$.

\paragraph{Domain Construction.}
We define five domains ($k \in \{0, 1, 2, 3, 4\}$) with $W^{(k)} = W_{\mathrm{base}} + \delta W^{(k)}$. Each $\delta W^{(k)}$ modifies a single off-diagonal edge in the causal graph:
\begin{itemize}
    \item Domain 0: Baseline (no perturbation)
    \item Domain 1: Enhanced $v \to \theta$ coupling (velocity affects angle)
    \item Domain 2: Enhanced $\theta \to \omega$ coupling (angle affects angular velocity)
    \item Domain 3: Enhanced $\omega \to v$ coupling (angular velocity affects cart velocity)
    \item Domain 4: Enhanced $x \to \omega$ coupling (position affects angular velocity)
\end{itemize}

\paragraph{Observation Generation.}
Observations are $128 \times 128$ grayscale images rendered from the latent state using a deterministic rendering function that maps $(x, \theta)$ to cart and pole positions.

\paragraph{Training Details.}
We train on 1,000 trajectories per domain (5,000 total), each of length 100. The encoder uses a CNN with architecture: Conv(32, 4, 2) $\to$ Conv(64, 4, 2) $\to$ Conv(128, 4, 2) $\to$ FC(256) $\to$ FC(4). We train for 200 epochs with batch size 64.

\subsection{Vehicle Turning (UAVDT)}
\label{app:vehicle_details}

We use the UAVDT dataset~\citep{Uavdt}, which provides UAV-captured urban traffic scenes with annotated vehicle bounding boxes.

\paragraph{Data Selection.}
We extract vehicle trajectories at major intersections where turning maneuvers are observable. Selection criteria:
\begin{itemize}
    \item Trajectory length $\geq 20$ frames
    \item Total direction change $\geq 60^\circ$
    \item Continuous tracking (no occlusion gaps $> 3$ frames)
\end{itemize}

\paragraph{Domain Definition.}
We define two atomic mechanisms:
\begin{itemize}
    \item Domain 0 (Horizontal): Vehicles moving primarily in the $x$-direction (left-right)
    \item Domain 1 (Vertical): Vehicles moving primarily in the $y$-direction (up-down)
\end{itemize}
Pure-domain training data is collected from straight-moving vehicles in each direction.

\paragraph{Proxy Construction.}
Since ground-truth mechanism labels are unavailable, we construct a physically-motivated proxy from annotated bounding box centers $\mathbf{p}_t$. We compute instantaneous velocity and direction:
\begin{align}
\mathbf{v}_t &= \mathbf{p}_{t+1} - \mathbf{p}_t, \\
\theta_t &= \mathrm{atan2}(v_y, v_x).
\end{align}
The proxy mixing coefficients are defined as:
\begin{align}
\alpha_{\mathrm{horizontal}}^{\mathrm{proxy}} &= \frac{|\cos(\theta_t)|}{|\cos(\theta_t)| + |\sin(\theta_t)|}, \\
\alpha_{\mathrm{vertical}}^{\mathrm{proxy}} &= \frac{|\sin(\theta_t)|}{|\cos(\theta_t)| + |\sin(\theta_t)|}.
\end{align}
This proxy reflects the intuition that vehicle dynamics are dominated by horizontal mechanisms (lateral steering) when moving horizontally, and vertical mechanisms (longitudinal acceleration) when moving vertically.

\paragraph{Proxy Limitations.}
This proxy conflates kinematic state (velocity direction) with dynamic mechanism (causal relationships). A vehicle at $\theta = 45^\circ$ is not necessarily in a ``50\%-50\%'' mechanism state, since the actual mechanism depends on steering input, road conditions, and vehicle dynamics not captured by velocity alone. The proxy is best interpreted as a behavioral correlate rather than a direct measurement of mechanism state. Our results demonstrate that TRACE recovers trajectories correlating strongly with this proxy (0.96), with smooth transitions whose qualitative structure aligns with physical expectations.

\paragraph{Training Details.}
We train on 500 pure-domain trajectories per direction. The encoder uses a CNN similar to CartPole but with temporal convolutions to handle the 5-frame input. We train for 150 epochs.

\subsection{Human Motion Capture (CMU)}
\label{app:mocap_details}

We use the CMU Motion Capture dataset, which contains skeletal motion data recorded at 120 Hz.

\paragraph{Data Selection.}
We select walk-to-run transition trials from subjects 2, 7, 8, 9, 16, and 35, totaling over 1,000 frames of transition data. Pure walking and running sequences are collected from separate trials of the same subjects.

\paragraph{Feature Extraction.}
From the 31-joint skeleton, we extract:
\begin{itemize}
    \item Joint positions (93 dimensions)
    \item Joint velocities (93 dimensions)
    \item Hip height and speed (2 dimensions)
\end{itemize}
The observation dimension is $p = 188$ after concatenation.

\paragraph{Proxy Construction.}
Since ground-truth mechanism labels are unavailable, we use normalized hip speed as a proxy:
\begin{equation}
\alpha_{\mathrm{run}}^{\mathrm{proxy}}(t) = \frac{v_{\mathrm{hip}}(t) - v_{\mathrm{walk}}}{v_{\mathrm{run}} - v_{\mathrm{walk}}},
\end{equation}
where $v_{\mathrm{walk}}$ and $v_{\mathrm{run}}$ are mean hip speeds during pure walking and running. This proxy assumes a monotonic relationship between locomotion speed and gait mechanism, supported by biomechanics research showing that gait transitions occur at characteristic speeds determined by energetic optimality~\citep{hreljac1993preferred}.

\paragraph{Proxy Limitations.}
The walk-run transition is not strictly linear in speed; there exists a hysteresis region where both gaits are energetically viable~\citep{diedrich1995change}.Additionally, hip speed captures only one aspect of gait; stride frequency, duty factor, and joint coordination patterns may provide complementary information. Our results demonstrate strong correlation (0.86) between TRACE's recovered trajectories and this proxy, with smooth transitions aligning with physical expectations.

\paragraph{Training Details.}
We train on 2,000 frames of pure walking and 2,000 frames of pure running. The encoder is a 4-layer MLP with hidden dimension 256. We use $d=8$ latent dimensions and lag $L=3$. Training runs for 100 epochs with batch size 128.

\subsection{Implementation Details}
\label{app:implementation}

\paragraph{Model Configuration.}
Table~\ref{tab:hyperparams} summarizes the key hyperparameters. For synthetic experiments, we use $K=5$ or $K=10$ domains with latent dimension $d=8$ and lag $L=2$. The encoder and decoder are 3-layer MLPs with hidden dimension 128. Each expert is implemented as a nonparametric transition prior following the normalizing flow formulation.

\begin{table}[h]
\caption{Hyperparameters for synthetic experiments.}
\label{tab:hyperparams}
\centering
\begin{small}
\begin{tabular}{lcc}
\toprule
Hyperparameter & $K=5$ & $K=10$ \\
\midrule
Latent dimension $d$ & 8 & 8 \\
Lag $L$ & 2 & 2 \\
Embedding dimension & 2 & 8 \\
Encoder hidden dim & 128 & 128 \\
Decoder hidden dim & 128 & 128 \\
Learning rate & $5 \times 10^{-4}$ & $5 \times 10^{-4}$ \\
$\beta$ (KL weight) & $2 \times 10^{-3}$ & $2 \times 10^{-3}$ \\
$\gamma$ (prior weight) & $2 \times 10^{-2}$ & $2 \times 10^{-2}$ \\
Batch size & 64 & 64 \\
Epochs & 100 & 100 \\
\bottomrule
\end{tabular}
\end{small}
\end{table}

\paragraph{Computational Resources.}
All experiments were conducted on NVIDIA A100 GPUs. Training times are reported in Table~\ref{tab:compute}.

\begin{table}[h]
\caption{Training time by experiment type.}
\label{tab:compute}
\centering
\begin{small}
\begin{tabular}{lcc}
\toprule
Experiment & GPUs & Training Time \\
\midrule
Synthetic ($K=5$) & 1$\times$A100 & 6--8 hours \\
Synthetic ($K=10$) & 2$\times$A100 & 8--10 hours \\
CartPole (image) & 2$\times$A100 & 15--20 hours \\
Vehicle (image) & 2$\times$A100 & 15--20 hours \\
MoCap (skeleton) & 1$\times$A100 & 4--6 hours \\
\bottomrule
\end{tabular}
\end{small}
\end{table}

\paragraph{Software.}
The implementation uses PyTorch 2.0 with CUDA 11.8. Code and pretrained models will be released upon publication.
\section{Extended Results}
\label{app:extended_results}

\subsection{Extended Theoretical Validation}
\label{app:extended_validation}

Table~\ref{tab:snr_extended} reports all quantities appearing in the error bound of Theorem~\ref{thm:recovery}. Key observations: (1) $\sigma_{\min}$ increases with perturbation magnitude $\delta$, confirming that larger domain differences improve distinguishability; (2) $\delta_{\mathrm{approx}}$ remains stable around 0.1, validating the linear interpolation assumption for piecewise-linear transitions; (3) SNR$_{\mathrm{eff}}$ consistently predicts recovery quality across all configurations.

\begin{table}[h]
\caption{Extended SNR validation results. All quantities correspond to those in Theorem~\ref{thm:recovery}.}
\label{tab:snr_extended}
\centering
\renewcommand{\arraystretch}{0.85}
\begin{footnotesize}
\setlength{\tabcolsep}{3pt}
\begin{tabular}{cccccccc}
\toprule
$\sigma_\epsilon$ & $\|W^{(k)}-W^{(0)}\|$ & MCC & Corr & $\sigma_{\min}$ & $\|\hat{\epsilon}\|$ & $\delta_{\text{ap}}$ & SNR \\
\midrule
\multicolumn{8}{l}{\textbf{Scheme 1: Vary Noise} (fixed $\|W^{(k)}-W^{(0)}\| = 0.5$)} \\
\midrule
0.01 & 0.5 & \textbf{.996} & \textbf{.998} & .830 & 1.919 & .100 & \textbf{.411} \\
0.05 & 0.5 & .994 & .998 & .729 & 1.779 & .100 & .388 \\
0.10 & 0.5 & .984 & .998 & .482 & 1.820 & .100 & .251 \\
0.20 & 0.5 & .849 & .996 & .207 & 1.950 & .100 & .101 \\
0.50 & 0.5 & .703 & .854 & .150 & 2.400 & .100 & .060 \\
\midrule
\multicolumn{8}{l}{\textbf{Scheme 2: Vary $\delta$} (fixed $\sigma_\epsilon = 0.1$)} \\
\midrule
0.10 & 0.1 & .934 & .972 & .066 & 1.435 & .100 & .043 \\
0.10 & 0.2 & .953 & .996 & .076 & 1.517 & .100 & .047 \\
0.10 & 0.3 & .970 & .997 & .157 & 1.588 & .100 & .093 \\
0.10 & 0.5 & .984 & .998 & .482 & 1.820 & .100 & .251 \\
0.10 & 0.7 & \textbf{.990} & \textbf{.998} & .674 & 2.060 & .100 & \textbf{.312} \\
\bottomrule
\end{tabular}
\end{footnotesize}
\end{table}

\subsection{Additional Calibration Results}
\label{app:calibration}

Figure~\ref{fig:calibration_appendix} illustrates scale calibration on a simpler two-domain transition. Raw predictions preserve trajectory shape but exhibit scale compression; two-point calibration using boundary conditions recovers ground truth with MSE $= 0.0005$.

\begin{figure}[h]
    \centering
    \includegraphics[width=\columnwidth]{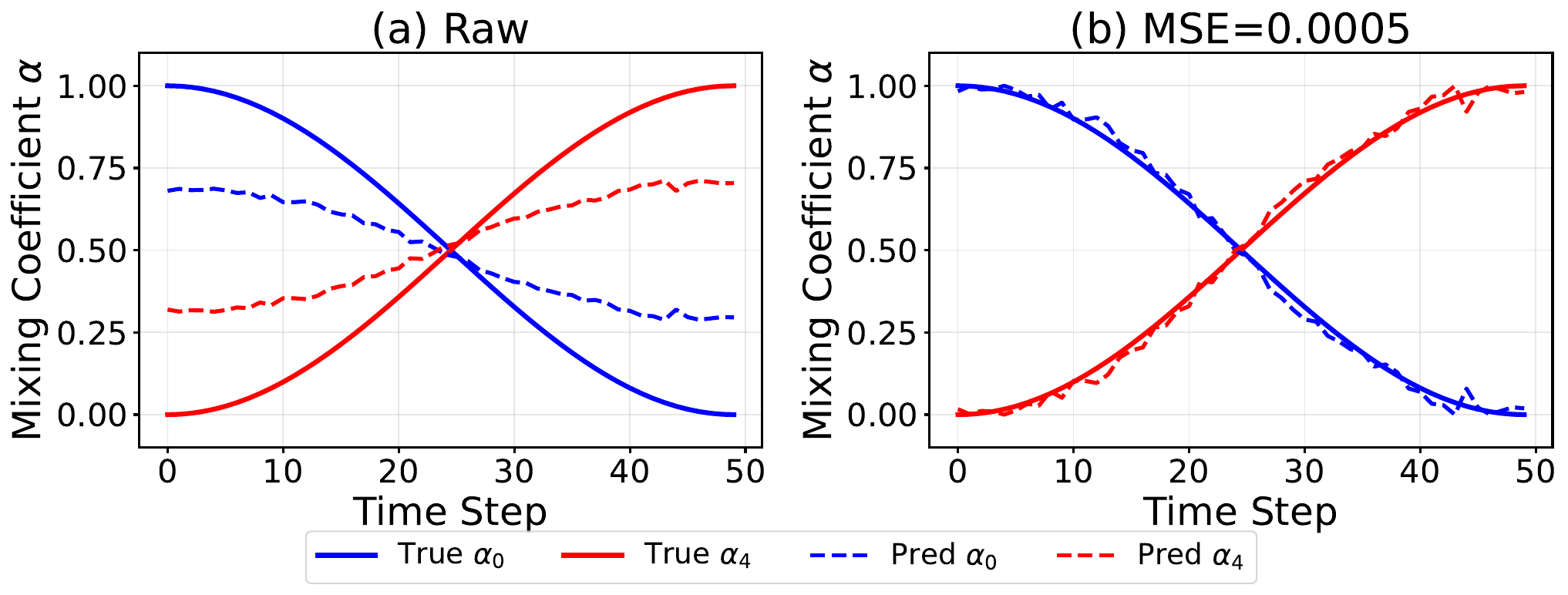}
    \caption{Scale calibration on a two-domain transition. \textbf{Left:} Raw predictions preserve shape but exhibit scale compression. \textbf{Right:} Two-point calibration recovers ground truth (MSE $= 0.0005$).}
    \label{fig:calibration_appendix}
\end{figure}

The calibration procedure works as follows. Given boundary conditions corresponding to known pure-domain endpoints, we fit an affine transformation:
\begin{equation}
\boldsymbol{\alpha}^{\text{cal}} = a \cdot \hat{\boldsymbol{\alpha}} + b,
\end{equation}
where $a, b$ are determined by matching the boundary values. This simple two-point calibration is sufficient because the scale distortion is approximately constant along the trajectory, as predicted by the linear relationship in the proof of Theorem~\ref{thm:recovery}.

\subsection{Assumption Verification Protocol}
\label{app:assumption_verification}

The convex interpolation assumption underlying our framework posits that intermediate mechanisms are expressible as weighted combinations of atomic mechanisms. We provide a statistical testing protocol for verification.

\paragraph{Step 1: Compute Pure-Domain Residuals.}
For each pure-domain validation sample $(\mathbf{x}_t, k)$ where $k$ is the known domain label, compute:
\begin{equation}
r_t^{\mathrm{pure}} = \|\tilde{\mathbf{z}}_t - \hat{\boldsymbol{\mu}}^{(k)}\|_2,
\end{equation}
where $\tilde{\mathbf{z}}_t$ is the aligned latent representation and $\hat{\boldsymbol{\mu}}^{(k)}$ is the estimated conditional mean for domain $k$. Under correct model specification, $r_t^{\mathrm{pure}}$ reflects observation noise only.

\paragraph{Step 2: Compute Transition Residuals.}
For each transition sample $\mathbf{x}_t$, first recover mixing coefficients $\hat{\boldsymbol{\alpha}}_t$ via Algorithm~\ref{alg:inference}, then compute:
\begin{equation}
r_t^{\mathrm{trans}} = \|\tilde{\mathbf{z}}_t - \hat{\boldsymbol{\mu}}^{(0)} - \hat{B}\hat{\boldsymbol{\alpha}}_t\|_2.
\end{equation}
Under valid convex interpolation, this residual should also reflect observation noise only.

\paragraph{Step 3: Statistical Test.}
Apply a two-sample Kolmogorov-Smirnov (KS) test:
\begin{equation}
H_0: F_{r^{\mathrm{pure}}} = F_{r^{\mathrm{trans}}} \quad \text{vs.} \quad H_1: F_{r^{\mathrm{pure}}} \neq F_{r^{\mathrm{trans}}},
\end{equation}
where $F$ denotes the cumulative distribution function. A $p$-value exceeding $0.05$ indicates insufficient evidence to reject the convex interpolation assumption.

\paragraph{Interpretation Guidelines.}
\begin{itemize}
    \item $p > 0.05$: Assumption is supported; recovered trajectories may be interpreted with confidence.
    \item $0.01 < p \leq 0.05$: Marginal; interpret results with caution, consider examining residual time series for localized violations.
    \item $p \leq 0.01$: Assumption likely violated; consider increasing $K$ or restricting analysis to trajectory segments with low residuals.
\end{itemize}

\paragraph{Results on Experimental Datasets.}
We applied this protocol to all datasets (synthetic, CartPole, Vehicle, MoCap); in each case, the KS test yielded $p > 0.05$, supporting the validity of convex interpolation.

\paragraph{Detecting Assumption Violations: Synthetic Demonstration.}
To validate that our protocol detects genuine violations, we construct a synthetic counter-example where the convex assumption is intentionally violated. We generate data with three atomic mechanisms ($K=3$), but introduce an \emph{emergent edge} during transitions: when $0.3 < \alpha_1 < 0.7$, an additional causal edge $(z_2 \to z_5)$ activates that is absent in all pure domains.

The transition residuals exhibit a clear spike during the emergent-edge region, and the KS test strongly rejects $H_0$ ($p < 0.001$). This demonstrates that our protocol successfully identifies assumption violations.

\subsection{Extended Ablation Studies}
\label{app:extended_ablation}

We provide comprehensive ablation experiments examining factors affecting recovery performance: temporal smoothing, the number of active domains $K_{\text{active}}$, trajectory complexity, and model capacity $K_{\text{total}}$.

\subsubsection{Trajectory Complexity Definitions}

We define three levels of trajectory complexity to systematically evaluate recovery difficulty:

\paragraph{Simple (Sequential) Trajectories.}
Linear sequential transitions where at most two domains have non-zero mixing coefficients at any time. Each active domain reaches $\alpha=1.0$ in sequence before transitioning to the next.

\paragraph{Medium (Overlapping) Trajectories.}
Gaussian-shaped activations where multiple domains can be simultaneously active with overlapping support. Each domain has a clear peak moment (achieving $\alpha \approx 0.6$--$0.8$) but transitions smoothly with neighboring domains.

\paragraph{Complex (Oscillating) Trajectories.}
High-frequency cosine superposition where all active domains contribute throughout the trajectory. No domain achieves dominance ($\max \alpha \approx 0.4$--$0.5$), and coefficients oscillate non-monotonically:
\begin{equation}
\alpha_k(t) \propto \frac{1}{2}\left(1 + \cos\left((1 + 0.5k) \cdot 2\pi \frac{t}{T} + \frac{k\pi}{K_{\text{active}}}\right)\right).
\end{equation}

\begin{table}[h]
\caption{Properties of trajectory complexity levels.}
\label{tab:trajectory_properties}
\centering
\begin{small}
\begin{tabular}{lccc}
\toprule
Property & Simple & Medium & Complex \\
\midrule
Max simultaneous domains & 2 & 2--3 & All \\
Peak $\alpha$ value & 1.0 & 0.7 & 0.5 \\
Monotonic segments & Yes & Yes & No \\
\bottomrule
\end{tabular}
\end{small}
\end{table}

\subsubsection{Temporal Smoothing Window Size}

Table~\ref{tab:ablation_window} reports recovery performance under varying window sizes $w$ on synthetic data with $K_{\text{active}}=5$ and complex trajectories. Results confirm Theorem~\ref{thm:smooth_recovery}: temporal smoothing substantially improves recovery by averaging out observation noise.

\begin{table}[h]
\caption{Effect of smoothing window $w$ ($K_{\text{active}}=5$, complex trajectory).}
\label{tab:ablation_window}
\centering
\begin{small}
\begin{tabular}{lcc}
\toprule
$w$ & Corr. $\uparrow$ & MAE $\downarrow$ \\
\midrule
0 (none) & 0.575 ± 0.023 & 0.106 ± 0.004 \\
3 & 0.859 ± 0.022 & 0.057 ± 0.002 \\
5 (default) & 0.890 ± 0.018 & 0.050 ± 0.004 \\
7 & \textbf{0.900 ± 0.017} & \textbf{0.050 ± 0.004} \\
10 & 0.876 ± 0.014 & 0.059 ± 0.003 \\
\bottomrule
\end{tabular}
\end{small}
\end{table}

Without smoothing ($w=0$), pointwise estimates exhibit high variance with correlation of only 0.575. Moderate smoothing ($w=5$--$7$) achieves optimal performance, improving correlation by 0.32 absolute points. Excessive smoothing ($w=10$) slightly degrades performance by over-smoothing rapid transitions. We use $w=5$ as the default throughout all experiments.

\subsubsection{Model Capacity ($K_{\text{total}}$)}

We compare models trained with $K_{\text{total}}=5$ versus $K_{\text{total}}=10$ on identical recovery tasks. Table~\ref{tab:capacity_comparison} summarizes results across trajectory types.

\begin{table}[h]
\caption{Effect of model capacity $K_{\text{total}}$ on recovery (correlation $\uparrow$).}
\label{tab:capacity_comparison}
\centering
\begin{small}
\setlength{\tabcolsep}{3pt}
\begin{tabular}{c|ccc|ccc}
\toprule
& \multicolumn{3}{c|}{$K_{\text{total}}=5$} & \multicolumn{3}{c}{$K_{\text{total}}=10$} \\
$K_{\text{active}}$ & Simp. & Med. & Comp. & Simp. & Med. & Comp. \\
\midrule
2 & .981 & .990 & .975 & \textbf{.993} & \textbf{.997} & \textbf{.979} \\
3 & .986 & .984 & .953 & \textbf{.989} & \textbf{.989} & \textbf{.971} \\
4 & .984 & .974 & .903 & \textbf{.988} & \textbf{.982} & \textbf{.919} \\
5 & \textbf{.980} & \textbf{.970} & .810 & .979 & .965 & \textbf{.835} \\
\bottomrule
\end{tabular}
\end{small}
\end{table}

The larger model ($K_{\text{total}}=10$) consistently matches or outperforms the smaller model for $K_{\text{active}} \leq 4$. This contradicts the naive hypothesis that more experts would dilute encoder capacity. The explanation lies in the identifiability theory: training on more domains provides greater distributional variation across the dataset, which strengthens the sufficient variability condition required by Theorem~\ref{thm:latent_ident}. A stronger variability signal enables the encoder to learn more accurate latent representations $\hat{\mathbf{z}}_t$, which in turn improves downstream trajectory recovery regardless of how many domains are active at inference time.

\subsubsection{Full Scaling Behavior}

We examine complete scaling behavior by varying $K_{\text{active}}$ from 2 to 10 with the $K_{\text{total}}=10$ model. Table~\ref{tab:exp3_full} reveals distinct performance regimes.

\begin{table*}[t]
\caption{Full scaling results ($K_{\text{total}}=10$, $d=8$). $\boldsymbol{\alpha}$ recovery degrades with $K_{\text{active}}$ while $\mathbf{W}$ recovery remains stable, confirming the geometric bottleneck affects inference-time decomposition rather than mechanism learning.}
\label{tab:exp3_full}
\centering
\begin{small}
\begin{tabular}{c|ccc|ccc}
\toprule
 & \multicolumn{3}{c|}{$\boldsymbol{\alpha}$ recovery (Corr.\ $\uparrow$)} & \multicolumn{3}{c}{$\mathbf{W}$ recovery (Corr.\ $\uparrow$)} \\
$K_{\text{active}}$ & Simp. & Med. & Comp. & Simp. & Med. & Comp. \\
\midrule
2 & .993±.004 & .997±.002 & .979±.003 & 1.000±.000 & 1.000±.000 & 1.000±.000 \\
3 & .989±.004 & .989±.006 & .971±.009 & 1.000±.000 & 1.000±.000 & 1.000±.000 \\
4 & .988±.003 & .982±.004 & .919±.016 & .999±.000 & .999±.000 & .999±.000 \\
5 & .979±.007 & .965±.014 & .835±.035 & .999±.000 & .999±.000 & .999±.000 \\
6 & .968±.009 & .957±.012 & .692±.050 & .998±.000 & .999±.000 & .999±.000 \\
7 & .776±.053 & .735±.061 & .459±.052 & .995±.000 & .996±.000 & .998±.000 \\
8 & .462±.084 & .389±.091 & .136±.094 & .988±.000 & .992±.000 & .995±.000 \\
9 & .289±.089 & .222±.112 & .046±.057 & .988±.000 & .993±.001 & .996±.000 \\
10 & .304±.060 & .252±.090 & .041±.036 & .989±.000 & .993±.000 & .997±.000 \\
\bottomrule
\end{tabular}
\end{small}
\end{table*}

\paragraph{Performance Regimes.}
The results reveal four distinct regimes:

\textit{Regime I ($K_{\text{active}} \leq 5$): Excellent.} All trajectory types achieve correlation above 0.83. Simple/medium trajectories maintain $>$0.96.

\textit{Regime II ($K_{\text{active}} = 6$): Good.} Simple/medium remain strong (0.957--0.968), but complex drops to 0.692.

\textit{Regime III ($K_{\text{active}} = 7$): Degradation onset.} A sharp phase transition: simple drops from 0.968 to 0.776; variance increases substantially.

\textit{Regime IV ($K_{\text{active}} \geq 8$): Failure.} Correlation falls below 0.5 for all types. At $K_{\text{active}}=10$, complex trajectory correlation approaches zero (0.041).

\paragraph{Geometric Interpretation.}
The phase transition at $K_{\text{active}} = 7$ has a geometric explanation rooted in the distinction between linear independence and orthogonality. Our identifiability theory (Assumption~\ref{ass:independent}) requires the basis vectors $\{\delta\hat{\boldsymbol{\mu}}^{(k)}\}$ to be \emph{linearly independent}, but not necessarily orthogonal. 

When $K_{\text{active}}$ is small (e.g., $K_{\text{active}}=3$), two basis vectors in an 8-dimensional space can easily maintain near-orthogonality, yielding large pairwise angles and a well-conditioned basis matrix with large $\sigma_{\min}$. As $K_{\text{active}}$ increases, fitting $(K_{\text{active}}-1)$ vectors into $d$ dimensions forces them geometrically closer together. While technically remaining linearly independent, their pairwise angles shrink, and some vectors become nearly expressible as linear combinations of others.

This geometric crowding drives $\sigma_{\min}(\hat{B}) \to 0$, directly inflating the error bound $\|\hat{\boldsymbol{\alpha}} - \boldsymbol{\alpha}\| \leq \|\hat{\boldsymbol{\epsilon}}\| / \sigma_{\min}$ from Theorem~\ref{thm:recovery}. Empirically, we observe $\sigma_{\min} = 0.48$ at $K_{\text{active}}=5$ but $\sigma_{\min} = 0.09$ at $K_{\text{active}}=7$, a $5\times$ degradation that directly explains the correlation drop.

The constraint $K_{\text{active}} \lesssim d$ is therefore not a limitation of TRACE specifically, but a fundamental geometric fact: recovering $(K_{\text{active}}-1)$ mixing coefficients requires $(K_{\text{active}}-1)$ distinguishable directions in a $d$-dimensional space.

\subsubsection{Summary and Practical Recommendations}

\begin{table}[h]
\caption{Summary of factors affecting recovery performance.}
\label{tab:ablation_summary}
\centering
\begin{small}
\begin{tabular}{lcc}
\toprule
Factor & Impact & Recommendation \\
\midrule
Trajectory complexity & High & Prefer sequential transitions \\
$K_{\text{active}}$ & High & Keep $\leq d-2$ (here $\leq 6$) \\
Smoothing window $w$ & Medium & Use $w=5$--$7$ \\
Model capacity $K_{\text{total}}$ & Low & Larger is fine \\
\bottomrule
\end{tabular}
\end{small}
\end{table}

\paragraph{Practical Recommendations.}
\textbf{(1)} TRACE works reliably for $K_{\text{active}} \leq d-2$ with sequential or overlapping transitions (correlation $>$0.95). 
\textbf{(2)} These conditions match real-world systems: vehicle turning (2 regimes), gait transitions (2--3). 
\textbf{(3)} For systems requiring more simultaneous regimes, increase the latent dimension $d$ or consider hierarchical decomposition. 
\textbf{(4)} The correlation gap between simple and complex trajectories (0.14--0.27) suggests prioritizing data collection during structured, sequential transitions when possible.

\subsection{OOD Generalization}
\label{app:ood}

We evaluate the importance of Stage 2 (least-squares projection) for out-of-distribution (OOD) generalization to unseen mechanism transitions.

\paragraph{Setup.}
We train on pure-domain data from domains $\{0, 1, 2, 3, 4\}$ and evaluate on three-domain transition trajectories ($0 \to 2 \to 4$) that were never observed during training. We compare: (1) Stage 1 only, which uses the learned gating network to predict $\boldsymbol{\alpha}$; and (2) the full two-stage model, which applies Algorithm~\ref{alg:inference} for trajectory recovery.

\paragraph{Results.}
Figure~\ref{fig:ood_ablation} and Table~\ref{tab:ood_ablation} demonstrate that Stage 2 is essential for OOD generalization. Stage 1 alone achieves only 0.313 correlation on OOD transitions (despite 0.990 on in-distribution data), as the gating network overfits to pure-domain inputs. The full model recovers OOD trajectories with 0.945 correlation, confirming that the least-squares projection generalizes to unseen mechanism combinations.

\begin{figure}[t]
    \centering
    \includegraphics[width=1.0\columnwidth]{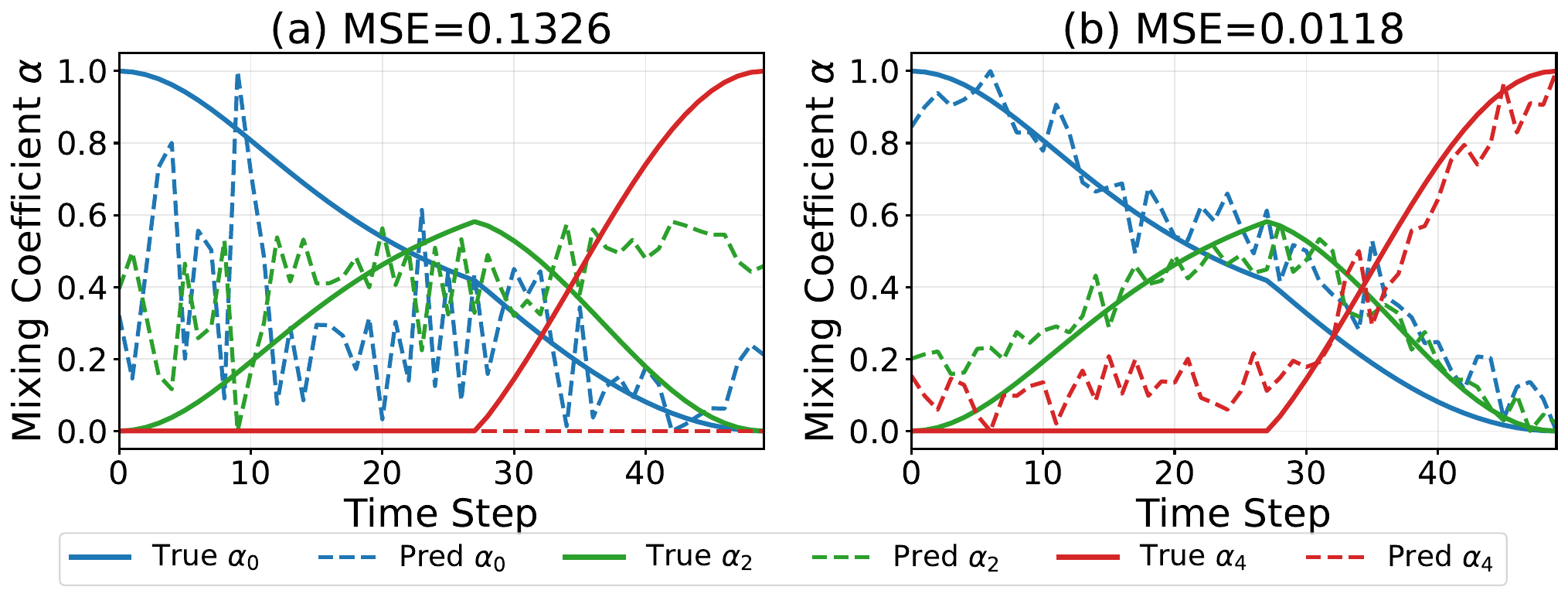}
    \caption{OOD weight recovery on a three-domain transition ($0 \to 2 \to 4$). \textbf{(a)} Stage 1 only: gating network fails to generalize. \textbf{(b)} Full model: least-squares projection recovers the trajectory accurately.}
    \label{fig:ood_ablation}
\end{figure}

\begin{table}[t]
\caption{Ablation on two-stage training. Stage 2 is essential for OOD generalization.}
\label{tab:ood_ablation}
\centering
\begin{small}
\begin{tabular}{llcc}
\toprule
Eval Data & Configuration & Weight Corr. $\uparrow$ & MSE $\downarrow$ \\
\midrule
ID & Stage 1 only & 0.990 & 0.014 \\
\midrule
OOD & Stage 1 only & 0.313 & 0.200 \\
OOD & Full model (two-stage) & \textbf{0.945} & \textbf{0.021} \\
\bottomrule
\end{tabular}
\end{small}
\end{table}

\paragraph{Optional: Distribution Alignment.}
In scenarios where the observation distribution shifts significantly between pure-domain training data and transition test data, an optional affine adapter can be applied. The adapter learns a diagonal affine transformation:
\begin{equation}
\tilde{\mathbf{x}}_t = \mathbf{s} \odot \mathbf{x}_t + \mathbf{b},
\end{equation}
where $\mathbf{s}, \mathbf{b} \in \mathbb{R}^p$ are learnable parameters trained by minimizing the CORAL loss~\citep{sun2016return} between source and target covariances. This was not required in any of our main experiments but may be useful when the mixing function $g$ induces distribution shifts not captured by mechanism interpolation alone.

\subsection{Detailed Limitations Discussion}
\label{app:limitations}

We expand the four scope conditions summarised in Section~\ref{sec:conclusion}.

\paragraph{(i) Geometric capacity.}
The bound $K \leq d+1$ is sufficient but not necessary; structurally sparse perturbations and few simultaneously active mechanisms relax it to $K_{\text{active}} \leq d+1$, allowing $K_{\text{total}}$ to grow well beyond $d$ (Section~\ref{sec:scaling}, Appendix~\ref{app:scaling}). A continuous parameterization $W(t) = h(c_t)$ removes the discrete-mixture enumeration entirely at the cost of explicit interpretability.

\paragraph{(ii) Pure-domain training.}
The framework assumes labeled pure-regime data, but degrades gracefully under both label noise and mechanism contamination (Weight Corr stays $\geq 0.93$ at $50\%$ contamination; Appendix~\ref{app:impurity}), with $\sigma_{\min}$ acting as a pre-deployment diagnostic. Unsupervised domain discovery (e.g., via learned soft gating with post-hoc clustering) is a natural future direction.

\paragraph{(iii) Knowing $K$.}
Mild over-specification incurs only a small cost ($0.966$ vs.\ $0.971$), and duplicate experts can be detected via $\sigma_{\min}$; under-specification is recoverable for seen domains but not for absent ones (Appendix~\ref{app:k_misspec}).

\paragraph{(iv) Real-world evaluation.}
The vehicle and gait experiments rely on physically motivated proxies (instantaneous direction for vehicles, hip joint speed for gait), and are therefore evaluated by Pearson correlation, which is invariant to monotonic warping. This sacrifices absolute-scale recovery but preserves trajectory shape, which is the quantity of interest for transition analysis.

\subsection{Robustness to Impure Training Domains}
\label{app:impurity}

We probe TRACE under two failure modes for the pure-domain assumption: (i) noisy domain labels at training time, and (ii) impurity at the data-generating-process level, where each domain's transition matrix is contaminated by other domains' perturbations.

\paragraph{Label noise.}
We randomly flip a fraction $\varepsilon$ of training samples' domain labels to a uniformly chosen incorrect domain. The shared encoder's reconstruction objective is label-agnostic, and individual experts tolerate moderate contamination from mis-routed samples.

\paragraph{Mechanism impurity.}
We contaminate each domain's transition matrix as
\[
W_k = W_{\text{base}} + (1-\varepsilon)\,\delta W_k + \varepsilon \cdot \operatorname{mean}\bigl(\delta W_{j \neq k}\bigr),
\]
with all entries of $\delta W$ nonzero. Under this contamination model, each domain's specific signal vanishes at the critical threshold $\varepsilon^* = (K_{\text{total}} - 1)/K_{\text{total}}$; for $K_{\text{total}} = 5$ this gives $\varepsilon^* \approx 0.80$, so $\varepsilon = 0.50$ is well below the limit.

\begin{table}[h]
\caption{Robustness to mechanism-level impurity ($K_{\text{active}}=3$, $K_{\text{total}}=5$). $\sigma_{\min}$ of the basis matrix $\hat B$ acts as a pre-deployment diagnostic of domain separability.}
\label{tab:impurity}
\centering
\setlength{\tabcolsep}{3pt}
\begin{footnotesize}
\begin{tabular}{cccc}
\toprule
$\varepsilon$ & MCC $\uparrow$ & Weight Corr $\uparrow$ & $\sigma_{\min}$ \\
\midrule
$0\%$  & $0.984 \pm 0.016$ & $0.996 \pm 0.001$ & $0.168$ \\
$20\%$ & $0.975 \pm 0.002$ & $0.981 \pm 0.002$ & $0.126$ \\
$50\%$ & $0.943 \pm 0.032$ & $0.936 \pm 0.033$ & $0.049$ \\
\bottomrule
\end{tabular}
\end{footnotesize}
\end{table}

Degradation is gradual and matches Theorem~\ref{thm:recovery}: as $\sigma_{\min}$ shrinks monotonically from $0.168$ to $0.049$, the error bound $O(1/\sigma_{\min})$ predicts proportionally worse recovery, and the empirical Weight Corr tracks this prediction. TRACE is therefore robust to both label-level and mechanism-level impurity, and $\sigma_{\min}$ provides a practical diagnostic for domain separability before deployment.

\subsection{Misspecified Number of Mechanisms $K$}
\label{app:k_misspec}

We test misspecification at training time relative to the true $K_{\text{total}} = 5$.

\paragraph{Over-specification ($K_{\text{train}} = 7$).}
We add two extra domains, either by duplicating an existing mechanism or by averaging $W$ matrices from existing domains, and use all seven as basis. When the test trajectory's active domains do not include duplicate experts, Weight Corr $= 0.966$, close to the $K_{\text{train}} = 5$ baseline ($0.971$). When active domains do include duplicates, $\boldsymbol{\alpha}$ recovery degrades to Weight Corr $= 0.553$ because the solver cannot distinguish near-identical basis vectors. However, the recovered transition dynamics $W(t)$ remain accurate, since duplicate experts share the same $W^{(k)}$: any $\boldsymbol{\alpha}$ split between them yields identical $W(t) = \sum_k \alpha_k(t) W^{(k)}$. Practitioners can detect and merge duplicates via the $\sigma_{\min}$-based analysis above or post-hoc clustering.

\paragraph{Under-specification ($K_{\text{train}} = 3$).}
With two missing domains, recovery for trajectories whose active domains are all seen during training remains strong (Weight Corr $= 0.949$). For trajectories involving unseen domains, neither $\boldsymbol{\alpha}$ nor $W(t)$ can be recovered (Weight Corr $= 0.244$), as their dynamics were never learned -- this is an inherent information limit.

\paragraph{Recommendation.}
Mild over-specification combined with $\sigma_{\min}$-based duplicate detection is a safe default. Over-specification causes only minor degradation ($0.966$ vs.\ $0.971$), and under-specification degrades gracefully for seen domains ($0.949$).

\subsection{Simplex Constraint Ablation}
\label{app:simplex_ablation}

We compare simplex-constrained against unconstrained $\boldsymbol{\alpha}(t)$ recovery on out-of-distribution transition data (active domains $\{0, 2, 4\}$, $T = 50$ steps, $500$ trajectories).

\begin{table}[h]
\caption{Simplex vs.\ unconstrained recovery on OOD transitions.}
\label{tab:simplex_ablation}
\centering
\setlength{\tabcolsep}{3pt}
\begin{footnotesize}
\begin{tabular}{lccc}
\toprule
Mode & Weight Corr $\uparrow$ & \% Neg.\ $\alpha$ & Range of $\alpha$ \\
\midrule
Simplex        & $\mathbf{0.956}$ & $0\%$    & $[0.03,\ 0.86]$ \\
Unconstrained  & $0.971$         & $18.0\%$ & $[-0.10,\ 1.01]$ \\
\bottomrule
\end{tabular}
\end{footnotesize}
\end{table}

The unconstrained mode achieves a slightly higher Weight Corr (\textbf{+0.015}), but $18\%$ of recovered time steps assign physically meaningless negative weights (e.g.,\ ``negative running'' has no interpretation). The simplex constraint is motivated by physical interpretability rather than accuracy: $\alpha_k$ represents mechanism $k$'s contribution at time $t$. We thus propose the \emph{fraction of invalid $\boldsymbol{\alpha}$} as a concrete interpretability metric. The accuracy cost of the simplex is negligible ($0.956$ vs.\ $0.971$, both above $0.95$), while it guarantees physically valid outputs.

\subsection{Sample Efficiency: Extended Table}
\label{app:sample_efficiency_detail}

The main-text Sample Efficiency analysis (Section~\ref{sec:sample_efficiency}, Table~\ref{tab:sample_efficiency}) summarises trajectory recovery as training data shrinks from $100\%$ to $1\%$. For reference, Table~\ref{tab:sample_efficiency_detail} lists the same results with explicit per-domain sample counts ($N$/domain).

\begin{table*}[t]
\caption{Sample efficiency with explicit per-domain sample counts. Same numbers as Table~\ref{tab:sample_efficiency}; included for reproducibility.}
\label{tab:sample_efficiency_detail}
\centering
\begin{small}
\begin{tabular}{ccccc}
\toprule
Fraction & N/domain & MCC & Weight Corr ($K=3$) & Weight Corr ($K=5$) \\
\midrule
$100\%$ & $40{,}000$ & $0.963$            & $0.986$            & $0.979$ \\
$20\%$  & $8{,}000$  & $0.886 \pm 0.088$  & $0.968 \pm 0.014$  & $0.955 \pm 0.022$ \\
$10\%$  & $4{,}000$  & $0.752 \pm 0.163$  & $0.960 \pm 0.013$  & $0.941 \pm 0.014$ \\
$1\%$   & $400$      & $0.622 \pm 0.069$  & $0.688 \pm 0.377$  & $0.640 \pm 0.169$ \\
\bottomrule
\end{tabular}
\end{small}
\end{table*}

\subsection{Scaling: Mathematical Explanation and Extension}
\label{app:scaling}

This appendix gives the mathematical explanation behind the empirical results in Section~\ref{sec:scaling} (Table~\ref{tab:scaling}), and discusses a continuous parameterization for the $K_{\text{active}}\gg d$ regime.

\paragraph{Why the looser bound holds.}
The original constraint requires the basis matrix $\hat B \in \mathbb{R}^{d \times (K-1)}$ (whose columns are the domain-mean differences $\hat{\boldsymbol{\mu}}^{(k)} - \hat{\boldsymbol{\mu}}^{(0)}$) to have full column rank, which fails when $K > d+1$. However, trajectory recovery solves $\hat B \, \boldsymbol{\alpha}(t) = \hat{\mathbf{z}}(t) - \hat{\boldsymbol{\mu}}^{(0)}$ where the true $\boldsymbol{\alpha}(t)$ has only $K_{\text{active}}$ nonzero entries; recovery depends on the conditioning of the relevant $K_{\text{active}}$-column submatrix of $\hat B$, not the full matrix. The simplex constraint ($\alpha_k \geq 0$, $\sum_k \alpha_k = 1$) naturally favors sparse solutions, enabling stable recovery in the underdetermined regime $K > d+1$. Empirically, the binding constraint is therefore $K_{\text{active}} \leq d+1$, not $K_{\text{total}} \leq d+1$.

\paragraph{Continuous parameterization for $K_{\text{active}} \gg d$.}
For scenarios where many mechanisms are simultaneously active, one can replace the discrete mixture $W(t) = \sum_k \alpha_k(t) W^{(k)}$ with a continuous parameterization $W(t) = h(c_t)$, where $c_t \in \mathbb{R}^{d_c}$ is a low-dimensional trajectory and $h$ maps it directly to mechanism space, bypassing discrete enumeration entirely. TRACE prioritizes interpretability via explicit mixing coefficients, but this extension would handle arbitrarily many mechanisms at the cost of that interpretability.

\subsection{Naive Heuristic Baselines on Real Data}
\label{app:naive_baselines}

To contextualize the difficulty of trajectory recovery on real data, we compare TRACE against domain-agnostic physical heuristics on the vehicle dataset (UAVDT).

\begin{table}[h]
\caption{TRACE vs.\ domain-agnostic heuristics on the vehicle turning dataset.}
\label{tab:naive_baselines}
\centering
\begin{small}
\begin{tabular}{lc}
\toprule
Method & Weight Corr $\uparrow$ \\
\midrule
TRACE                & $\mathbf{0.960}$ \\
Centroid tracking    & $0.911$ \\
Optical flow         & $0.902$ \\
\bottomrule
\end{tabular}
\end{small}
\end{table}

The heuristics produce erratic sawtooth oscillations, while TRACE recovers smooth trajectories that align with the physical transition dynamics. This demonstrates that the gain over discrete-switching baselines (NCTRL, Section~\ref{sec:vehicle}) is not simply due to the heuristic ease of the dataset.

\subsection{Connection to Dynamic Bayesian Networks}
\label{app:dbn}

Each $W^{(k)}$ in our formulation plays the role of a lagged-causal-effect (transition) matrix in a Dynamic Bayesian Network (DBN), defining $p(\mathbf{z}_t \mid \mathbf{z}_{t-1}, k)$ for atomic mechanism $k$. The mixing trajectory $\boldsymbol{\alpha}(t)$ then parameterizes a time-varying DBN
\[
W(t) = \sum_{k=0}^{K-1} \alpha_k(t)\, W^{(k)},
\]
generalizing window causal graphs (which assume fixed causal structure within each discrete time interval) to continuously varying transition weights. We do not impose DAG constraints on $W^{(k)}$ because our model captures only inter-temporal (lagged) effects with no instantaneous edges, which makes the unrolled full-time graph inherently acyclic for any lag order $L$. Equivalently, latent edges connect $\mathbf{z}_{t-L:t-1} \to \mathbf{z}_t$ across time, so the time-unrolled DAG-ness is preserved by construction.

\subsection{Future Work and Direct Applications}
\label{app:future_work}

This appendix elaborates on the future-work directions sketched in Section~\ref{sec:conclusion} and lists deployment settings where TRACE is directly applicable.

\paragraph{Algorithmic extensions.}
Three near-term extensions follow directly from our analysis.

\textbf{(1) Automatic $K$ inference.}
Mild over-specification combined with $\sigma_{\min}$-based duplicate detection (Appendix~\ref{app:k_misspec}) already provides a practical recipe. A principled next step is a model-selection criterion based on the conditioning of the basis matrix $\hat B$: increase $K_{\text{train}}$ until newly added basis columns become near-collinear with existing ones (detected by a sharp drop in $\sigma_{\min}$). This avoids manual tuning and makes the framework deployment-ready in settings where the true mechanism count is unknown, connecting to broader efforts on principled identification under partial observability such as nonparametric inference of counterfactual distributions under confounding~\citep{sun2026nonparametricidentificationinferencecounterfactual}.

\textbf{(2) Unsupervised mechanism discovery.}
The pure-regime requirement can be relaxed by replacing the deterministic one-hot gating with a learned soft gating network---for instance, a multi-head attention module whose theoretical advantages over single-head gating are now well-characterized~\citep{cui2025superiority}---that clusters unlabeled data into mechanism groups. Once latent variables are identifiable under Theorem~\ref{thm:latent_ident}, the domain structure may be recoverable via clustering in latent space, drawing on tools from graph-based multi-manifold clustering~\citep{JMLR:v24:21-1254}. Our impurity experiments (Appendix~\ref{app:impurity}) show that latent separability is preserved even at $50\%$ contamination, suggesting the latent space remains usable for downstream domain discovery.

\textbf{(3) Nonparametric mechanism compositions.}
The discrete-mixture form $W(t) = \sum_k \alpha_k(t) W^{(k)}$ can be generalized to a continuous parameterization $W(t) = h(c_t)$ with $c_t \in \mathbb{R}^{d_c}$ a low-dimensional latent trajectory (Appendix~\ref{app:scaling}). This handles arbitrarily many mechanisms at the cost of explicit mixing coefficients, and is the natural choice when interpretability is secondary to coverage. This direction parallels recent work on latent-state time-series forecasting~\citep{yang2026observations}, distribution-aware temporal alignment~\citep{hu2026bridging}, information-bottleneck-based imputation under partial observability~\citep{yang2026glocal}, and active intervention on latent manifolds~\citep{xu2026minddreameruntetheringimagination}.

\paragraph{Direct application settings.}
TRACE is most readily deployable in domains where individual regimes are easy to instrument but transitions are scarce. Four immediate examples:

\textbf{(a) Manufacturing operating modes.}
A production line typically operates in a finite set of distinct configurations (different products, speeds, or material grades), with transitions occurring during retooling or shift changes. Pure-regime telemetry is abundant from steady-state operation, while transition data is rare. TRACE recovers smooth interpolations between operating modes that classical change-point methods would force into discrete states.

\textbf{(b) Clinical disease staging.}
Population studies routinely collect cross-sectional patient data tagged with discrete disease stages, but longitudinal recordings of smooth disease progression are expensive and ethically constrained. TRACE can be trained on stage-labeled snapshots and then track an individual patient's continuous progression at inference time, enabling early-warning applications. Such progression-aware modeling complements recent advances in causal medical image classification~\citep{liu2024correlation} and hierarchical copula-based survival analysis with competing risks~\citep{liu2024hacsurv}.

\textbf{(c) Robotic locomotion and contact dynamics.}
Free-space dynamics and contact-rich regimes can each be recorded in isolation, but smooth transitions (e.g., from free swing to ground contact) are short and difficult to capture exhaustively. TRACE recovers the time-varying interpolation between these regimes, which is the quantity needed for predictive control during transitions. Related regime-transition dynamics also arise in sensor-based human activity recognition, where diffusion mappings between wearable signals and skeletal motion span multiple movement modes~\citep{sharma2025ssdl}.

\textbf{(d) Sequential modeling and multimodal interaction.}
The framework extends naturally to time-series settings where regimes are interpretable but smoothly evolving, including continuous-time sequential recommendation~\citep{fan2021continuous}, long-context video-LLM frameworks that span multiple temporal scales~\citep{shang2024interpolating}, tool-integrated multimodal agents whose operating mode shifts continuously across tasks~\citep{lei2024infant, lei2026infantagent}, and cross-modal generation pipelines where conditional regimes evolve smoothly along the input (e.g., dance-to-music synthesis~\citep{sun2025enhancing}).

In each setting, TRACE recovers continuous mixing coefficients without requiring labeled transition trajectories---directly addressing the data scarcity that makes discrete-switching methods impractical.

\end{document}